\newtheorem{problem}[theorem]{Problem}
\newtheorem{ex}[theorem]{Example}
\newtheorem{condition}[theorem]{Condition}
\newdimen\arrowsize
\DeclareMathOperator*{\argmin}{argmin}
\DeclareMathOperator*{\argmax}{argmax}
\newcommand{\independent}{\mbox{${}\perp\mkern-11mu\perp{}$}}
\newcommand{\notindependent}{\mbox{${}\not\!\perp\mkern-11mu\perp{}$}}
\newcommand{\iid}{\overset{\text{iid}}{\sim}}
\newcommand{\C}[1]{\mathcal{#1}}
\newcommand{\B}[1]{\mathbf{#1}}
\newcommand{\R}{{\mathbb R}}
\newcommand{\RN}{{\mathbb R}}
\newcommand{\X}{{\mathbf X}}
\newcommand{\SE}{\C{S}}
\newcommand{\graph}{{\mathbf{graph}}}
\newcommand{\Gp}{\C{G}'}
\newcommand{\G}{\C{G}}
\newcommand{\Gc}{\C{G}_c}
\newcommand{\mean}{{\mathbf E}}  
\newcommand{\var}{{\mathbf{var}}}  
\newcommand{\eps}{{N}}  
\newcommand{\PA}[2][]{{\B{PA}}^{#1}_{#2}}
\newcommand{\CH}[2][]{{\B{CH}}^{#1}_{#2}}
\newcommand{\DE}[2][]{{\B{DE}}^{#1}_{#2}}
\newcommand{\ND}[2][]{{\B{ND}}^{#1}_{#2}}
\newcommand{\eref}[1]{(\ref{#1})}
\newcommand{\given}{\,|\,}
\newcommand{\dd}{{\partial }}
\newcommand{\dx}{{\partial x}}
\newcommand{\dy}{{\partial y}}
\newcommand{\law}[1]{\mathcal{L}({#1})}
\newcommand{\lawX}{{\law{\mathbf X}}}
\newcommand{\lawN}{{\law{\mathbf N}}}
\newcommand{\llaw}{{\mathbf{law}_X}}
\newcommand{\SEM}{SEM }
\newcommand{\DAG}{DAG }
\newcommand{\DAGs}{DAGs }
\newcommand{\tc}{\gamma}
\newcommand{\td}{\delta}
\newcommand{\tnu}{\tilde \nu}
\begin{document}

\title{Causal Discovery with Continuous Additive Noise Models}

\author{\name Jonas Peters\thanks{Part of this work was done while JP and JMM were with the MPI T\"ubingen.} 
\email peters@stat.math.ethz.ch\\
\addr Seminar for Statistics\\
ETH Z\"urich\\
Switzerland
\AND
\name Joris M.~Mooij$^*$ 
\email j.m.mooij@uva.nl\\
\addr Institute for Informatics\\
\addr University of Amsterdam\\
The Netherlands\\[0.5\baselineskip]
\addr Institute for Computing and Information Sciences\\
\addr Radboud University Nijmegen\\
The Netherlands
\AND
\name Dominik Janzing \email janzing@tuebingen.mpg.de\\
\name Bernhard Sch\"olkopf \email bs@tuebingen.mpg.de\\
\addr Max Planck Institute for Intelligent Systems\\
T\"ubingen\\
Germany
}

\editor{??}

\maketitle
\begin{abstract}%
We consider the problem of learning causal directed acyclic graphs from an observational joint distribution. 
One can use these graphs to predict the outcome of interventional experiments, from which data are often not available.
We show that if the observational distribution follows a structural equation model with an additive noise structure, the directed acyclic graph becomes identifiable from the distribution under mild conditions. This constitutes an interesting alternative to traditional methods that assume faithfulness and identify only the Markov equivalence class of the graph, thus leaving some edges undirected. We provide practical algorithms for finitely many samples, RESIT (Regression with Subsequent Independence Test) and two methods based on an independence score. We prove that RESIT is correct in the population setting and provide an empirical evaluation. 
\end{abstract}

\section{Introduction} \label{sec:intro}
Many scientific questions deal with the causal structure of a data-generating process. If we know the reasons why an individual is more susceptible to a disease than others, for example, we can hope to develop new drugs in order to cure this disease or prevent its outbreak. Recent results indicate that knowing the causal structure is also useful for classical machine learning tasks. In the two variable case, for example, knowing which is cause and which is effect has implications for semi-supervised learning and covariate shift adaptation \citep{ScholkopfJPSZMJ2012}.

We consider a $p$-dimensional random vector $\X = (X_1, \ldots, X_p)$ with a joint distribution $\lawX$ and assume that there is a true acyclic causal graph $\G$ that describes the data generating process (see Section~\ref{sec:tcg}).
In this work we address the following problem of causal inference: given the distribution $\lawX$
we try to infer the graph $\G$.
A priori, the causal graph contains information about the physical process that cannot be found in properties of the joint distribution.
One therefore requires assumptions connecting these two worlds. While traditional methods like PC, FCI \citep{Spirtes2000} or score-based approaches \citep[e.g.][]{Chickering2002}, that are explained in more detail in Section~\ref{sec:exi}, make assumptions that enable us to recover the graph up to the Markov equivalence class, we investigate a different set of assumptions. If the data have been generated by an additive noise model (see Section~\ref{sec:ide}), we will generically be able to recover the correct graph from the joint distribution.\\

In the remainder of this section we set up the required notation and definitions for graphs (Section~\ref{sec:cg}), briefly introduce Judea Pearl's do-calculus (Section~\ref{sec:do}) and use it to define our object of interest, a true causal graph (Section~\ref{sec:tcg}). We introduce structural equation models (SEMs) in Section~\ref{sec:sem}. After discussing existing methods in Section~\ref{sec:exi}, we provide the main results of this work in Section~\ref{sec:ide}. We prove that for additive noise models (ANMs), a special class of SEMs, one can identify the graph from the joint distribution. This is possible not only for additive noise models but for all classes of SEMs that are able to identify graphs from a bivariate distribution, meaning they can distinguish between cause and effect. Section~\ref{sec:alg} proposes and compares algorithms that can be used in practice, when instead of the joint distribution, we are only given i.i.d.\ samples. These algorithms are tested in Section~\ref{sec:exp}.\\

This paper builds on the conference papers of \citet{Hoyer2008}, \citet{Peters2011b} and \citet{Mooij2009}\footnote{Parts of Sections~\ref{sec:intro} and~\ref{sec:exi} have been taken and modified from the PhD thesis of \citet{PetersThesis}.} but extends the material in several aspects. All deliberations in Section~\ref{sec:tcg} about the true causal graph and Example~\ref{ex:cou} are novel.
The presentation of the theoretical results in Section~\ref{sec:ide} is improved.
In particular, we added the motivating Example~\ref{ex:motivating} and Propositions~\ref{prop:cme} and~\ref{prop:graph}.
Example~\ref{ex:cou2} provides a non-identifiable case different from the linear Gaussian example. 
Proposition~\ref{prop:kun} is based on \citep{Zhang2009} and contains important necessary conditions for the failure of identifiability.
In Corollary~\ref{cor:new} we present a novel identifiability result for a class of nonlinear functions and Gaussian noise variables. 
Proposition~\ref{prop:nonconst} proves that causal minimality is satisfied if the structural equations do not contain constant functions.
Section~\ref{sec:toporder} contains results that guarantee to find the set of correct topological orderings when the assumption of causal minimality is dropped. 
Theorem~\ref{thm:algo} proves a conjecture from \citet{Mooij2009} by showing that given an independence oracle the algorithm provided in \citet{Mooij2009} is correct.
We propose a new score function for estimating the true directed acyclic graph in Section~\ref{sec:indbscore} and present two corresponding score-based methods. 
We provide an extended section on simulation experiments and discuss experiments on real data.

\subsection{Directed Acyclic Graphs} \label{sec:cg}
We start with some basic notation for graphs.
Consider a finite family of random variables $\X = (X_1, \ldots, X_p)$ with index set $\B{V} := \{1, \ldots, p\}$ (we use capital letters for random variables and bold letters for sets and vectors). We denote their joint distribution by $\lawX$.
We write $p_{X_1}(x)$ or simply $p(x)$ for the Radon-Nikodym derivative of $\law{X_1}$ either with respect to the Lebesgue or the counting measure and (sometimes implicitly) assume its existence.
A graph $\G=(\B{V},\C{E})$ consists of nodes $\B{V}$ and edges $\C{E} \subseteq \B{V}^2$ with $(v,v) \not\in \C{E}$ for any $v\in \B{V}$. In a slight abuse of notation we identify the nodes (or vertices) $j \in \B{V}$ with the variables $X_j$, the context should clarify the meaning.
We also consider sets of variables $\B{S} \subseteq \B{X}$ as a single multivariate variable.
We now introduce graph terminology that we require later. Most of the definitions can be found in \citep{Spirtes2000, Koller2009,Lauritzen1996}, for example.

Let $\G=(\B{V},\C{E})$ be a graph with $\B{V} := \{1, \ldots, p\}$ and corresponding random variables $\X = (X_1, \ldots, X_p)$.
A graph $\G_1=(\B{V}_1,\C{E}_1)$ is called a {\bf subgraph} of $\G$ if $\B{V}_1 = \B{V}$ and $\C{E}_1 \subseteq \C{E}$; we then write $\G_1 \leq \G$. If additionally, $\C{E}_1 \neq \C{E}$, we call $\G_1$ a {\bf proper subgraph} of $\G$.  

A node $i$ is called a {\bf parent} of $j$ if $(i,j) \in \C{E}$ and a {\bf child} if $(j,i) \in \C{E}$. The set of parents of $j$ is denoted by $\PA[\G]{j}$, the set of its children by $\CH[\G]{j}$. Two nodes $i$ and $j$ are {\bf adjacent} if either $(i,j) \in \C{E}$ or $(j,i) \in \C{E}$.
We call $\G$ {\bf fully connected} if all pairs of nodes are adjacent. We say that there is an undirected edge between two adjacent nodes $i$ and $j$ if $(i,j) \in \C{E}$ and $(j,i) \in \C{E}$. An edge between two adjacent nodes is directed if it is not undirected. We then write $i \rightarrow j$ for $(i,j) \in \C{E}$.
Three nodes are called an {\bf immorality} or a {\bf v-structure} if one node is a child of the two others that themselves are not adjacent. 
The {\bf skeleton} of $\G$ is the set of all edges without taking the direction into account, that is all $(i,j)$, such that $(i,j) \in \C{E}$ or $(j,i) \in \C{E}$.

A {\bf path} in $\G$ is a sequence of (at least two) distinct vertices ${i_1}, \ldots, {i_n}$, such that there is an edge between ${i_k}$ and ${i_{k+1}}$ for all $k=1, \ldots, n-1$. If $({i_k},{i_{k+1}}) \in \C{E}$ and $({i_{k+1}},{i_{k}}) \notin \C{E}$ for all $k$ we speak of a {\bf directed path} between ${i_1}$ and ${i_n}$ and call ${i_n}$ a {\bf descendant} of ${i_1}$. We denote all descendants of ${i}$ by $\DE[\G]{i}$ and all non-descendants of ${i}$, excluding $i$, by $\ND[\G]{i}$. In this work, $i$ is neither a descendant nor a non-descendant of itself.
If $({i_{k-1}}, {i_{k}}) \in \C{E}$ and $({i_{k+1}}, {i_{k}}) \in \C{E}$, and also $(i_{k},i_{k-1}) \notin \C{E}$ and $(i_{k},i_{k+1}) \notin \C{E}$, ${i_k}$ is called a {\bf collider} on this path.
$\G$ is called a {\bf partially directed acyclic graph (PDAG)} if there is no directed cycle, i.e., no pair ($j$, $k$), such that there are directed paths from $j$ to $k$ and from $k$ to $j$. 
$\G$ is called a {\bf directed acyclic graph (DAG)} if it is a PDAG and all edges are directed.

In a DAG, a path between ${i_1}$ and ${i_n}$ is {\bf blocked by a set $\B{S}$} (with neither ${i_1}$ nor ${i_n}$ in this set) whenever there is a node ${i_k}$, such that one of the following two possibilities hold:
1. ${i_k} \in \B{S}$ and
${i_{k-1}} \rightarrow {i_k} \rightarrow {i_{k+1}}$ or
${i_{k-1}} \leftarrow {i_k} \leftarrow {i_{k+1}}$ or
${i_{k-1}} \leftarrow {i_k} \rightarrow {i_{k+1}}$
Or 2., ${i_{k-1}} \rightarrow {i_k} \leftarrow {i_{k+1}}$ and neither ${i_k}$ nor any of its descendants is in $\B{S}$.
We say that two disjoint subsets of vertices $\B{A}$ and $\B{B}$ are {\bf $d$-separated} by a third (also disjoint) subset $\B{S}$ if every path between nodes in $\B{A}$ and $\B{B}$ is blocked by $\B{S}$.
Throughout this work, $\independent$ denotes (conditional) independence. 
The joint distribution $\lawX$ is said to be {\bf Markov with respect to the DAG $\G$} if 
$$
\B{A}, \B{B}\; d\text{-sep. by } \B{C} \; \Rightarrow \; \B{A} \independent \B{B} \given \B{C}
$$
for all disjoint sets $\B{A},\B{B},\B{C}$.
$\lawX$ is said to be {\bf faithful to the DAG $\G$} if 
$$
\B{A}, \B{B}\; d\text{-sep. by } \B{C} \; \Leftarrow \; \B{A} \independent \B{B} \given \B{C}
$$
for all disjoint sets $\B{A},\B{B},\B{C}$. 
A distribution satisfies {\bf causal minimality} with respect to $\G$ if it is Markov with respect to $\G$, but not to any proper subgraph of $\G$.
We denote by $\mathcal{M}(\G)$ the set of distributions that are Markov with respect to $\G$:
$
\mathcal{M}(\G) := \{\lawX\,:\,\lawX \text{ is Markov w.r.t. }\G \}\,.
$
Two DAGs $\G_1$ and $\G_2$ are {\bf Markov equivalent} if $\mathcal{M}(\G_1) = \mathcal{M}(\G_2)$. This is the case if and only if $\G_1$ and $\G_2$ satisfy the same set of $d$-separations, that means the Markov condition entails the same set of (conditional) independence conditions. The set of all DAGs that are Markov equivalent to some DAG (a so-called Markov equivalence class) can be represented by a {\bf completed PDAG}. This graph satisfies $(i,j) \in \C{E}$ if and only if one member of the Markov equivalence class does.
\citet{Verma1991} showed that:
\begin{lemma} \label{lemma_immo}
Two DAGs are Markov equivalent if and only if they have the same skeleton and the same immoralities.
\end{lemma}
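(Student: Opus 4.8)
The plan is to exploit the characterization stated just above the lemma: $\G_1$ and $\G_2$ are Markov equivalent if and only if they entail the same set of $d$-separations. It therefore suffices to prove the biconditional that two DAGs share all $d$-separation relations precisely when they have the same skeleton and the same immoralities, and I would establish the two directions separately.

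For the direction from $d$-separations to structure, I would first recover the skeleton. In any DAG two nodes $i$ and $j$ are adjacent if and only if no subset of the remaining vertices $d$-separates them: an edge can never be blocked, whereas for non-adjacent $i$ and $j$—where at least one, say $j$, is a non-descendant of the other—the set $\PA[\G_1]{i}$ already $d$-separates $i$ from all of its non-descendants, in particular from $j$. Thus adjacency is an invariant of the $d$-separation relation, and equal $d$-separations force equal skeletons. For the immoralities, let $i\to k\leftarrow j$ be an immorality in $\G_1$, so that $i$ and $j$ are non-adjacent common parents of $k$. Taking $\B{S}:=\PA[\G_1]{i}$ as above (assuming $j\in\ND[\G_1]{i}$; otherwise swap the roles of $i$ and $j$), this set $d$-separates $i$ and $j$ and, since $k$ is a child of $i$, satisfies $k\notin\B{S}$. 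The same $\B{S}$ must $d$-separate $i$ and $j$ in $\G_2$. Were $k$ not a collider on the path $i-k-j$ in $\G_2$, that path would be active given $\B{S}$ because $k\notin\B{S}$, $d$-connecting $i$ and $j$ and contradicting the separation. Hence $k$ is a collider in $\G_2$; as the skeletons agree, $i$ and $j$ remain non-adjacent there, so this is an immorality of $\G_2$. By symmetry the two graphs have the same immoralities.

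For the converse, assume $\G_1$ and $\G_2$ have the same skeleton and the same immoralities, and suppose for contradiction that some $\B{A}$ and $\B{B}$ are $d$-separated by $\B{C}$ in $\G_1$ but not in $\G_2$. Then there is a path in $\G_2$ between a node of $\B{A}$ and a node of $\B{B}$ that is active given $\B{C}$, and I would take such a path $\pi$ of minimal length. The goal is to show that the path on the same vertices in $\G_1$—which exists because the skeletons coincide—is also active given $\B{C}$, contradicting the separation in $\G_1$. The crucial observation is that on a path of minimal length every consecutive triple $v_{i-1}-v_i-v_{i+1}$ is unshielded, i.e.\ $v_{i-1}$ and $v_{i+1}$ are non-adjacent. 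Granting this, the collider/non-collider status of each $v_i$ along $\pi$ is governed entirely by the immoralities: a collider on an unshielded triple is an immorality, so since the immoralities of $\G_1$ and $\G_2$ agree, each $v_i$ is a collider in $\G_1$ exactly when it is one in $\G_2$. Consequently $\pi$ has the same active/blocked status in both graphs with respect to $\B{C}$, yielding the contradiction.

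The main obstacle is establishing that minimality forces every consecutive triple to be unshielded. Here I would argue by contradiction: if $v_{i-1}$ and $v_{i+1}$ were adjacent, I would build a strictly shorter active path by routing directly along the edge $v_{i-1}-v_{i+1}$, distinguishing the case where $v_i$ is a non-collider (so $v_i\notin\B{C}$) from the case where $v_i$ is a collider (so $v_i$ or one of its descendants lies in $\B{C}$). The delicate point is that deleting $v_i$ alters the triples at the neighbours $v_{i-1}$ and $v_{i+1}$, so one must verify that the shortcut stays active—checking the orientation of the new edge and, when a new collider is created, exhibiting a descendant in $\B{C}$ through the removed collider $v_i$. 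Carrying out this case analysis contradicts the minimal length of $\pi$ and completes the argument.
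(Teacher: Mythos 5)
The paper does not actually prove this lemma---it is quoted directly from \citet{Verma1991}---so your argument has to stand entirely on its own. Your first direction (equal $d$-separation relations imply equal skeletons and equal immoralities) is correct and standard: adjacency is characterized by non-separability because $\PA[\G_1]{i}$ $d$-separates $i$ from any non-descendant non-parent $j$, and your collider-detection argument for immoralities (using that a child $k$ of $i$ cannot lie in $\PA[\G_1]{i}$) goes through.

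The converse direction, however, rests on a false lemma. It is \emph{not} true that every consecutive triple on a minimal-length active path is unshielded. Counterexample: take the DAG on $\{A,B,V_1,V_2\}$ with edges $V_1 \rightarrow A$, $V_1 \rightarrow V_2$, $B \rightarrow V_2$, $V_2 \rightarrow A$, and conditioning set $\B{C}=\{V_2\}$. There are only two paths between $A$ and $B$: the path $A \leftarrow V_2 \leftarrow B$ is blocked (the non-collider $V_2$ lies in $\B{C}$), while $A \leftarrow V_1 \rightarrow V_2 \leftarrow B$ is active (the fork $V_1 \notin \B{C}$, the collider $V_2 \in \B{C}$). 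Hence the unique---and therefore minimal---active path contains the shielded triple $(A,V_1,V_2)$, since $A$ and $V_2$ are adjacent. This is precisely the failure mode your planned case analysis would have to exclude: the shortcut through the edge $V_2 \rightarrow A$ turns the collider $V_2$ into a non-collider inside $\B{C}$ and is blocked, and no rerouting exists. A second, independent gap is the sentence ``Consequently $\pi$ has the same active/blocked status in both graphs'': even when the collider patterns agree, a collider is active when it \emph{or one of its descendants} lies in $\B{C}$, and descendant relations are not preserved between DAGs with equal skeletons and immoralities (already $X \rightarrow Y$ versus $X \leftarrow Y$ shows this). So you would still need a separate argument that a collider with a $\G_2$-descendant in $\B{C}$ also has a $\G_1$-descendant in $\B{C}$. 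The rigorous proofs of this direction proceed differently, for instance via the result of \citet{Chickering1995}---cited in this paper after Proposition~\ref{prop:graph}---that two DAGs with the same skeleton and immoralities are connected by a sequence of covered edge reversals, each of which is easily checked to preserve all $d$-separations.
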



Faithfulness is not very intuitive at first glance. 
We now give an example of a distribution that is Markov but not faithful with respect to some DAG $\G_1$. This is achieved by making two paths cancel each other and creating an independence that is not implied by the graph structure.
\begin{ex} \label{ex:nf}
Consider the two graphs in Figure~\ref{fig:cmaf}.
\begin{figure}
\begin{center}
\begin{tikzpicture}[xscale=1.8,yscale=0.8,line width=0.5pt, inner sep=0.4mm, outer sep=0.5mm, shorten >=1pt, shorten <=1pt]
  \draw (0,2) node(x) [circle, draw] {$X$};
  \draw (0,0) node(z) [circle, draw] {$Z$};
  \draw (1,1) node(y) [circle, draw] {$Y$};
  \draw[-arcsq] (x) to node [left] {$c$} (z);
  \draw[-arcsq] (x) to node [above] {$a$} (y);
  \draw[-arcsq] (y) to node [below] {$b$} (z);
\end{tikzpicture}
\hspace{0.3\linewidth}
\begin{tikzpicture}[xscale=1.8,yscale=0.8,line width=0.5pt, inner sep=0.4mm, outer sep=0.5mm, shorten >=1pt, shorten <=1pt]
  \draw (0,2) node(x) [circle, draw] {$X$};
  \draw (0,0) node(z) [circle, draw] {$Z$};
  \draw (1,1) node(y) [circle, draw] {$Y$};
  \draw[-arcsq] (x) to node [above] {$\tilde a$} (y);
  \draw[-arcsq] (z) to node [below] {$\tilde b$} (y);
\end{tikzpicture}\\
$\quad \G_1$ \hspace{5.3cm} $\G_2$
\end{center}
\caption{After fine-tuning the parameters for the two graphs, both models generate the same joint distribution.}
\label{fig:cmaf}
\end{figure}
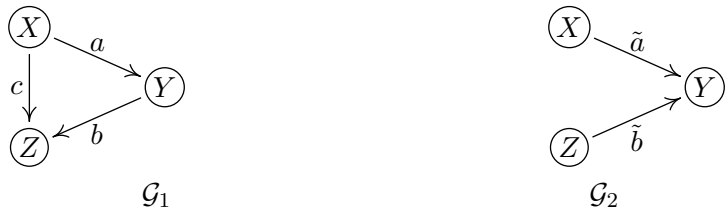
Corresponding to the left graph we generate a joint distribution by the following equations.
$X=N_X, Y=a X + N_Y, Z=b Y + c X +N_Z$,
with $N_X \sim \mathcal{N}(0,\sigma^2_X)$, $N_Y \sim \mathcal{N}(0,\sigma^2_Y)$ and $N_Z \sim \mathcal{N}(0,\sigma^2_Z)$ jointly independent.
This is an example of a linear Gaussian structural equation model with graph $\G_1$ that we formally define in Section~\ref{sec:sem}.
Now, if 
$
a\cdot b + c = 0
$,
the distribution is not faithful\footnote{More precisely: not triangle-faithful \citep{Zhang2008}.} with respect to $\G_1$ since we obtain $X \independent Z$.

Correspondingly, we generate a distribution related to graph $\G_2$:
$X=\tilde{N}_X,
Y=\tilde{a} X + \tilde{b} Z + \tilde{N}_Y,
Z=\tilde{N}_Z$, 
with all $\tilde{N}_{\cdot} \sim \mathcal{N}(0,\tau^2_{\cdot})$ jointly independent.
If we choose 
$\tau_X^2 = \sigma_X^2$, $\tilde a = a$, $\tau_Z^2 = b^2 \sigma_Y^2 + \sigma_Z^2$, 
$\tilde b = (b \sigma_Y^2)/(b^2 \sigma_Y^2 + \sigma_Z^2)$ and
$\tau_Y^2 = \sigma_Y^2 - (b^2 \sigma_Y^4)/(b^2 \sigma_Y^2 + \sigma_Z^2)$, 
both models lead to the covariance matrix
$$
\Sigma  = \left( 
\begin{array}{ccc}
\sigma_X^2 & a\sigma_X^2 & 0\\
a\sigma_X^2 & a^2\sigma_X^2 + \sigma_Y^2& b\sigma_Y^2\\
0 & b\sigma_Y^2 & b^2\sigma_Y^2 + \sigma_Z^2
\end{array}
\right)
$$
and thus to the same distribution. It can be checked that the distribution is faithful with respect to $\G_2$ if $\tilde{a},\tilde{b} \neq 0$ and all $\tilde{\tau_{\cdot}} > 0$.
\end{ex}
The distribution from Example~\ref{ex:nf} is faithful with respect to $\G_2$, but not with respect to $\G_1$. Nevertheless, for both models, causal minimality is satisfied if none of the parameters vanishes: the distribution is not Markov to any proper subgraph of $\G_1$ or $\G_2$ since removing an arrow would correspond to a new (conditional) independence that does not hold in the distribution. Note that $\G_2$ is not a proper subgraph of $\G_1$. In general, causal minimality is weaker than faithfulness:
\begin{remark} \label{rem:cmiw}
If $\lawX$ is faithful with respect to $\G$, then causal minimality is satisfied.
\end{remark}
This is due to the fact that any two nodes that are not directly connected by an edge can be $d$-separated.
Another, equivalent formulation of causal minimality reads as follows:
\begin{proposition} \label{prop:cme}
  Consider the random vector $\X = (X_1, \ldots, X_p)$ and assume that the joint distribution has a density with respect to a product measure. Suppose that $\lawX$ is Markov with respect to $\G$. Then 
$\lawX$ satisfies causal minimality with respect to $\G$ if and only if $\forall X_j \; \forall Y \in \PA[\G]{j} \,$ we have that $X_j \notindependent Y\,|\,\PA[\G]{j} \setminus \{Y\}$.
\end{proposition}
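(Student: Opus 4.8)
The plan is to prove both directions by reducing everything to the deletion of a single edge and exploiting the equivalence---valid precisely because $\lawX$ has a density with respect to a product measure---between the global Markov property, the local Markov property, and the recursive factorization $p(x_1,\dots,x_p)=\prod_{k}p(x_k\given x_{\PA[\G]{k}})$. First I would record a monotonicity fact: if $\G_1\leq\G_2$ then $\mathcal{M}(\G_1)\subseteq\mathcal{M}(\G_2)$, since a sparser graph entails more $d$-separations and hence a stronger Markov requirement. Consequently, to decide causal minimality it suffices to inspect the subgraphs obtained by removing one edge: if $\lawX$ were Markov with respect to some proper subgraph $\G'$ missing the edge $Y\to X_j$, then $\G'\leq\G''$ for $\G'':=\G$ with only that edge removed, whence $\lawX\in\mathcal{M}(\G'')$ as well. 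So both implications come down to comparing $\G$ with such single-edge deletions $\G''$.

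For the direction ``$\Leftarrow$'' I would argue by contradiction. Assume the stated dependence conditions hold but $\lawX$ is Markov with respect to some proper subgraph; by the reduction above we may take it to be $\G''=\G$ with the edge $Y\to X_j$ deleted, where $Y\in\PA[\G]{j}$. In $\G''$ the parent set of $X_j$ is exactly $\PA[\G]{j}\setminus\{Y\}$, and $Y$ remains a non-descendant of $X_j$ (an edge deletion creates no new directed paths, and in an acyclic graph a parent is never a descendant). The local Markov property applied in $\G''$ then yields $X_j\independent Y\given \PA[\G]{j}\setminus\{Y\}$, contradicting the hypothesis $X_j\notindependent Y\given\PA[\G]{j}\setminus\{Y\}$. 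Hence no proper subgraph can carry the Markov property, i.e.\ causal minimality holds.

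For ``$\Rightarrow$'' I would prove the contrapositive. Suppose $X_j\independent Y\given\PA[\G]{j}\setminus\{Y\}$ for some $j$ and some $Y\in\PA[\G]{j}$. Since $\lawX$ is Markov with respect to $\G$ and admits a density, it factorizes as $p=\prod_k p(x_k\given x_{\PA[\G]{k}})$; the conditional-independence assumption is equivalent to $p(x_j\given x_{\PA[\G]{j}})=p(x_j\given x_{\PA[\G]{j}\setminus\{Y\}})$, that is, $Y$ drops out of the $j$-th factor. Substituting this identity yields a factorization of $p$ according to the proper subgraph $\G''$ obtained by deleting $Y\to X_j$, so by ``factorization implies Markov'' we get $\lawX\in\mathcal{M}(\G'')$ and causal minimality fails.

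The genuinely load-bearing ingredient, which I would cite rather than reprove, is the Lauritzen-type equivalence of the three Markov characterizations under the density assumption: this is what licenses both the use of the local Markov property in the first direction and the factorization manipulation in the second, together with the identity ``conditional independence $\Leftrightarrow$ the conditioning variable disappears from the conditional density.'' The only real bookkeeping to be careful about is the single-edge reduction and the verification that $Y$ stays a non-descendant after the deletion; everything else is mechanical.
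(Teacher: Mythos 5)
Your proposal is correct and follows essentially the same route as the paper's proof: the ``$\Leftarrow$'' direction via the local Markov property of the single-edge-deleted graph (which the paper leaves largely implicit), and the ``$\Rightarrow$'' direction via the Lauritzen factorization equivalence under the density assumption, dropping $Y$ from the $j$-th conditional factor to obtain a factorization over the reduced graph. Your explicit monotonicity reduction $\G_1\leq\G_2 \Rightarrow \mathcal{M}(\G_1)\subseteq\mathcal{M}(\G_2)$ and the non-descendant check are exactly the bookkeeping the paper's terse argument takes for granted.
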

\begin{proof}
See Appendix~\ref{app:proofpropcme}.
\end{proof}

\subsection{Interventional Distributions} \label{sec:do}
Given a directed acyclic graph (DAG)
$\G$, \citet{Pearl2009} introduces the $do$-notation as a mathematical description of interventional experiments. More precisely, $do (X_j = \tilde p(x_j))$ stands for setting the variable $X_j$ randomly according to the distribution $\tilde p(x_j)$, irrespective of its parents, while not interfering with any other variable. Formally: 
\begin{definition} \label{def:do}
Let $\X=(X_1, \ldots, X_p)$ be a collection of variables
with joint distribution $\lawX$ that we assume to be absolutely continuous with respect to the Lebesgue measure or the counting measure (i.e., there exists a probability density function or a probability mass function).
Given a DAG $\G$ over $\X$, we define the {\it interventional distribution $do (X_j= \tilde p(x_j))$} of $X_1, \ldots, X_p$ by
$$
p\big(x_1, \ldots, x_p \, | \, do (X_j=\tilde p(x_j))\big) := \prod_{i\neq j}^p p(x_i|x_{\PA[]{i}})\cdot \tilde p(x_j)\,,
$$
if $p(x_1, \ldots, x_p)>0$ and zero otherwise.
Here $\tilde p(x_j)$ is either a probability density function or a probability mass function.
Similarly, we can intervene at different nodes at the same time by defining the interventional distribution $do (X_j = \tilde p(x_j), j \in \B{J})$ for $\B{J} \subseteq \B{V}$ as
\begin{equation*}
p\big(x_1, \ldots, x_p \, | \, do (X_j=\tilde p(x_j), j \in \B{J})\big) := \prod_{i\notin \B{J}} p(x_i|x_{\PA[]{i}})\cdot \prod_{j\in \B{J}}\tilde p(x_j)
\end{equation*}
if $p(x_1, \ldots, x_p)>0$ and zero otherwise.
\end{definition}
Here, $x_{\PA[]{i}}$ denotes the tuple of all $x_j$ for $X_j$ being a parent of $X_i$ in $\G$.
\citet{Pearl2009} introduces Definition~\ref{def:do} with the special case of $\tilde p(x_j) = \delta_{x_j , \tilde x_j}$, 
where $\delta_{x_j , \tilde x_j}=1$ if $x_j=\tilde x_j$ and $\delta_{x_j , \tilde x_j}=0$ otherwise; this corresponds to a point mass at $\tilde{x_j}$. For more details on {\it soft} interventions, see \citet{Eberhardt2007}.
Note that in general: 
$$
p(x_1, \ldots, x_p \, | \, do (X_j=\tilde x_j)) \neq p(x_1, \ldots, x_p \,|\, X_j=\tilde x_j)\,.
$$
The expression $p\big(x_1, \ldots, x_p \, | \, do (X_j=\tilde x_j, j \in \B{J})\big)$ yields a distribution over $X_1, \ldots, X_p$. If we are only interested in computing the marginal $p\big(x_i \, | \, do (X_j=\tilde x_j))$, where $X_i$ is not a parent of $X_j$, we can use the parent adjustment formula \citep[][Theorem~3.2.2]{Pearl2009}
\begin{equation} \label{eq:parentadj}
p(x_i \, | \,  do (X_j=\tilde x_j)) = \sum_{x_{\PA[]{j}}} p(x_i \given \tilde x_j, x_{\PA[]{j}}) \, p(x_{\PA[]{j}}) \,.
\end{equation}

\subsection{True Causal Graphs} \label{sec:tcg}
In this section we clarify what we mean by a true causal graph $\Gc$. In short, we use this term if one can read off the results of randomized studies from $\Gc$ and the observational joint distribution. This means that the graph and the observational joint distribution lead to causal effects that one observes in practice. Two important restrictive assumptions that we make throughout this work are \emph{acyclicity} (the absence of directed cycles, in other words, no causal feedback loops are allowed) and \emph{causal sufficiency} (the absence of hidden variables that are a common cause of at least two observed variables).
\begin{definition} \label{def:tcg}
Assume we are given a distribution $\lawX$ over $X_1, \ldots, X_p$ 
and distributions 
$\C{L}_{do (X_j = \tilde p(x_j), j \in \B{J})}(\B{X})$ for all $\B{J} \subseteq \B{V} = \{1, \ldots, p\}$ (think of the variables $X_j$ having been randomized).
We then call the graph $\Gc$ a \emph{true causal graph} for these distributions if 
\begin{compactitem}
\item $\Gc$ is a directed acyclic graph;
\item the distribution $\lawX$ is Markov with respect to $\Gc$;
\item for all $\B{J} \subseteq \B{V}$ and $\tilde p(x_j)$ with $j \in \B{J}$ the distribution 
$\C{L}_{do (X_j = \tilde p(x_j), j \in \B{J})}(\B{X})$
coincides with $p\big(x_1, \ldots, x_p \,|\, do (X_j=\tilde p(x_j)), j \in \B{J}\big)$, computed from $\Gc$ as in Definition~\ref{def:do}.
\end{compactitem}
\end{definition}
Definition~\ref{def:tcg} is purely mathematical if one considers $\C{L}_{do (X_j = \tilde p(x_j), j \in \B{J})}(\B{X})$ as an abstract family of given distributions. But it is a small step to make the relation to the ``real world''. We call $\Gc$ the true causal graph {\it of a data generating process} if it is the true causal graph for the distributions $\lawX$ and $\C{L}_{do (X_j = \tilde p(x_j), j \in \B{J})}(\B{X})$, where the latter are obtained by randomizing $X_j$ according to $\tilde p(x_j)$. In some situations, the precise design of a randomized experiment may not be obvious. While most people would agree on how to randomize over medical treatment procedures, there is probably less agreement how to randomize over the tolerance of a person (does this include other changes of his personality, too?). Only sometimes, this problem can be resolved by including more variables and taking a less coarse-grained point of view. We do not go into further detail since we believe that this would require philosophical deliberations, which lie beyond the scope of this work. Instead, we may explicitly add the requirement that ``most people agree on what a randomized experiment should look like in this context''.

In general, there can be more than one true causal DAG. If one requires causal minimality, the true causal DAG is unique.
\begin{proposition} \label{prop:tcgunique}
Assume $\law{X_1,\ldots,X_p}$ has a density and
consider all true causal DAGs $\mathbb{G} := \{\G_{c,1}, \ldots, \G_{c,m}\}$ of $X_1, \ldots, X_p$. Then there is a partial order on $\mathbb{G}$ using the subgraph property $\leq$ as an ordering. This ordering has a least element $\Gc$, i.e., $\Gc \leq \G_{c,i}$ for all $i$. This element $\Gc$ is the unique true causal DAG such that $\lawX$ satisfies causal minimality with respect to $\Gc$.
\end{proposition}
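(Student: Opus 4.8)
The plan is to prove the three assertions in turn: that $\le$ partially orders $\mathbb{G}$, that $\mathbb{G}$ has a least element, and that this element is the unique causal-minimal true causal DAG. The first is immediate, since the subgraph relation $\le$ is reflexive, antisymmetric and transitive on any collection of graphs sharing the vertex set $\B{V}$. The engine of the rest is the following lemma, which I would establish first: if $\lawX$ is Markov with respect to two DAGs $\G_1 \le \G_2$, then the interventional distributions computed from $\G_1$ and from $\G_2$ via Definition~\ref{def:do} coincide. To see this I would fix a topological order of $\G_2$; since every edge of $\G_1$ is an edge of $\G_2$, the same order, say $1,\ldots,p$, is topological for $\G_1$ too. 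For any DAG $H$ with this order the Markov property gives $p(x_i \mid x_1,\ldots,x_{i-1}) = p(x_i \mid x_{\PA[H]{i}})$, because the predecessors of $i$ are non-descendants containing $\PA[H]{i}$. Applying this to both graphs yields $p(x_i \mid x_{\PA[\G_1]{i}}) = p(x_i \mid x_{\PA[\G_2]{i}})$ for every $i$, and substituting these equal factors into the product formula of Definition~\ref{def:do} shows the interventional distributions agree for every $\B{J}$ and every $\tilde p$. In particular, if $\G_2$ is a true causal DAG and $\G_1 \le \G_2$ is Markov to $\lawX$, then $\G_1$ is a true causal DAG as well.

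This lemma gives existence at once. Starting from any $\G_{c,i} \in \mathbb{G}$, I would repeatedly delete an edge whenever the result stays Markov to $\lawX$ (such an edge exists precisely when causal minimality fails, since a supergraph of a Markov graph is Markov). Each deletion preserves the true-causal property by the lemma and strictly lowers the edge count, so after finitely many steps I reach a true causal DAG $\Gc$ that is Markov to $\lawX$ but to no proper subgraph, i.e. $\Gc$ satisfies causal minimality.

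The crux is to show that such a $\Gc$ is a lower bound for all of $\mathbb{G}$, and I expect this to be the main obstacle. Let $\G$ be any true causal DAG. Intervening on all variables but $X_i$ with point masses collapses Definition~\ref{def:do} to $p(x_i \mid do(X_j = \tilde x_j,\, j \ne i)) = p(x_i \mid x_{\PA[H]{i}} = \tilde x_{\PA[H]{i}})$ for the generating DAG $H$. Since this interventional distribution belongs to the prescribed family, it is the same for $\Gc$ and for $\G$, so the map $\tilde x_{-i} \mapsto p(\cdot \mid x_{\PA[\Gc]{i}} = \tilde x_{\PA[\Gc]{i}})$ depends only on the coordinates $\PA[\Gc]{i}$ and, simultaneously, only on $\PA[\G]{i}$; hence it depends only on $\PA[\Gc]{i} \cap \PA[\G]{i}$. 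Were there a node $Y \in \PA[\Gc]{i} \setminus \PA[\G]{i}$, the conditional $p(x_i \mid x_{\PA[\Gc]{i}})$ would not vary with $x_Y$, giving $X_i \independent Y \mid \PA[\Gc]{i} \setminus \{Y\}$ and contradicting causal minimality of $\Gc$ through Proposition~\ref{prop:cme}. Thus $\PA[\Gc]{i} \subseteq \PA[\G]{i}$ for every $i$, so $\Gc \le \G$. The delicate point is the ``depends only on the intersection'' deduction, which requires varying each intervention coordinate independently; I would justify it on the support of $\lawX$ or isolate a positivity-on-a-product-set assumption, and I would also check that point-mass interventions fall under Definition~\ref{def:do}.

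Finally, uniqueness follows formally: a least element of a partial order is unique, so $\Gc$ is determined, and since \emph{every} causal-minimal true causal DAG is a lower bound by the argument above, any two of them are mutually subgraphs and therefore equal. Hence $\Gc$ is the unique true causal DAG with respect to which $\lawX$ satisfies causal minimality, completing the proof.
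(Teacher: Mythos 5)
Your proof is correct and ends at the same place, but it is organized genuinely differently from the paper's. The computational core is shared: both arguments use the prescribed interventional distributions to show that the conditional of $X_i$ given its parents is the same function no matter which true causal DAG supplies the parent set (you intervene on all of $\B{V}\setminus\{i\}$, the paper intervenes on the union of the two parent sets), and then deduce that this conditional depends only on the common parents. The deployment differs. The paper takes two arbitrary $\G_1,\G_2\in\mathbb{G}$, builds the graph with parent sets $\PA[\G_1]{i}\cap\PA[\G_2]{i}$, checks it is Markov and again a true causal DAG, and obtains the least element from finiteness of $\mathbb{G}$; causal minimality is then read off from the one-line remark that Markov proper subgraphs of true causal DAGs are true causal DAGs, and Proposition~\ref{prop:cme} is never invoked. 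You instead (i) prove that remark as a standalone lemma (two nested DAGs both Markov for $\lawX$ yield identical interventional distributions, via a common topological order) -- something the paper asserts without proof; (ii) build a causally minimal element of $\mathbb{G}$ by greedy edge deletion; and (iii) show directly via Proposition~\ref{prop:cme} that causal minimality forces $\PA[\Gc]{i}\subseteq\PA[\G]{i}$ for every true causal DAG $\G$, so that \emph{causally minimal implies least}, from which uniqueness is formal. Your route buys an explicit proof of the subgraph lemma and a cleaner logical equivalence between minimality and leastness; the paper's route buys closure of $\mathbb{G}$ under parent-wise intersection without needing causal minimality anywhere in the construction. Finally, the delicate point you flag -- that ``a function of the $\PA[\Gc]{i}$-coordinates equal to a function of the $\PA[\G]{i}$-coordinates depends only on the common coordinates'' requires varying the coordinates over a product set on which the identification with conditionals is valid -- is equally present in the paper's proof, which simply restricts to points of strictly positive density without further comment; so your caveat is apt rather than a gap relative to the paper.
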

\begin{proof}
See Appendix~\ref{app:proof_tcgunique}
\end{proof}

We now briefly comment on a true causal graph's behavior when some of the variables from the joint distribution are marginalized out.
\begin{ex}
\begin{itemize}
\item[(i)] If $X \leftarrow Z \rightarrow Y$ is the only true causal graph for $X, Y$ and $Z$, there is no true causal graph for the variables $X$ and $Y$ (the $do$-statements do not coincide).
\item[(ii)] Assume that the graph $X \rightarrow Y \rightarrow Z$ with additional $X \rightarrow Z$ is the only true causal graph for $X, Y$ and $Z$ and assume that $\law{X, Y, Z}$ is faithful with respect to this graph. Then, the only true causal graph for the variables $X$ and $Z$ is $X \rightarrow Z$.
\item[(iii)] If the situation is the same as in (ii) with the difference that $X \independent Z$ (i.e., $\law{X, Y, Z}$ is not faithful with respect to the true causal graph), the empty graph is also a true causal graph for $X$ and $Z$.
\end{itemize}
\end{ex}
Latent projections \citep{Verma1991} provide a formal way to obtain a true causal graph for marginalization. Cases (ii) and (iii) show that there are no purely graphical criteria that provide the {\it minimal} true causal graph described in Proposition~\ref{prop:tcgunique}. 

The results presented in the remainder of this paper can be understood without causal interpretation. Using these techniques to infer a true causal graph, however, requires the assumption 
%
that such a true causal DAG $\Gc$ for the observed distribution of $X_1, \ldots, X_p$ exists.
This includes the assumption that all ``relevant'' variables have been observed, sometimes called causal sufficiency, and that there are no feedback loops.

\citet{Richardson2002} introduce a representation of graphs (so-called Maximal Ancestral Graphs, or MAGs) with hidden variables that is closed under marginalization and conditioning. The FCI algorithm \citep{Spirtes2000} exploits the conditional independences in the data to partially reconstruct the graph. Less work concentrates on hidden variables in structural equation models \citep[e.g.,][]{Hoyer2008b, Janzing2009, SilvaGhahramani2009}.

\subsection{Structural Equation Models} \label{sec:sem}
A structural equation model (SEM) (also called a functional model) is defined as a tuple $(\SE, \lawN)$, where $\SE = (S_1, \ldots, S_p)$ is a collection of $p$ equations
\begin{equation} \label{eq:sem}
S_j: \quad X_j = f_j(\PA{j}, N_j)\,, \qquad j=1, \ldots, p
\end{equation}
and $\lawN = \law{N_1, \ldots, N_p}$ is the joint distribution of the noise variables, which we require to be jointly independent (thus, $\lawN$ is a product distribution) as we are assuming causal sufficiency. 
The $\PA{j}$ are considered the direct causes of $X_j$. An SEM specifies how the $\PA{j}$ affect $X_j$. Note that in physics (chemistry, biology, \dots), we would usually expect that such causal relationships occur in time, and are governed by sets of coupled differential equations. Under certain assumptions such as stable equilibria, one can derive an SEM describing how the equilibrium states of such a dynamical system will react to physical interventions on the observables involved (see \cite{MooijJS2013}). We do not deal with these issues in the present paper, but we take the SEM as our starting point. Moreover, we consider SEMs only for real-valued random variables $X_1, \ldots, X_p$.
The graph of a structural equation model is obtained simply by drawing direct edges from each parent to its direct effects, i.e., from each variable $X_k$ occurring on the right-hand side of equation \eqref{eq:sem} to $X_j$. We henceforth assume this graph to be acyclic. According to the notation defined in Section~\ref{sec:cg}, $\PA{j}$ are the parents of $X_j$. 
\citet{Pearl2009} shows in Theorem 1.4.1 that the law $\lawX$ generated by an SEM is Markov with respect to the graph.

Structural equation models contain strictly more information than their corresponding graph and law and hence also more information than the family of all interventional distributions together with the observational distribution. This information sometimes helps to answer counterfactual questions, as shown in the following example.
\begin{ex} \label{ex:cou}
Let $N_1, N_2 \sim Ber(0.5)$ and $N_3 \sim U(\{0,1,2\})$, such that the three variables are jointly independent. That is, $N_1, N_2$ have a Bernoulli distribution with parameter $0.5$ and $N_3$ is uniformly distributed on $\{0,1,2\}$. We define two different SEMs, first consider $\C{S}_A$:
$$
\C{S}_A = \left\{
\begin{array}{l}
X_1 = N_1\\
X_2 = N_2\\
X_3 = (1_{N_3 > 0} \cdot X_1 + 1_{N_3 = 0}\cdot X_2)\cdot 1_{X_1 \neq X_2} + N_3 \cdot 1_{X_1 = X_2}
\end{array}
\right.
$$ 
If $X_1$ and $X_2$ have different values, depending on $N_3$ we either choose $X_3 = X_1$ or $X_3 = X_2$. Otherwise $X_3 = N_3$. Now, $\C{S}_B$ differs from $\C{S}_A$ only in the latter case:
$$
\C{S}_B = \left\{
\begin{array}{l}
X_1 = N_1\\
X_2 = N_2\\
X_3 = (1_{N_3 > 0} \cdot X_1 + 1_{N_3 = 0}\cdot X_2)\cdot 1_{X_1 \neq X_2} + (2- N_3) \cdot 1_{X_1 = X_2}
\end{array}
\right.
$$ 
It can be checked that both SEMs generate the same observational distribution, which satisfies causal minimality with respect to the graph $X_1 \rightarrow X_3 \leftarrow X_2$. 
They also generate the same interventional distributions, for any possible intervention.
But the two models differ in a counterfactual statement\footnote{Here, we make use of Judea Pearl's definition of counterfactuals \citep{Pearl2009}.}. Suppose, we have seen a sample
$
(X_1, X_2, X_3) = (1,0,0) 
$
and we are interested in the counterfactual question, what $X_3$ would have been if $X_1$ had been $0$. From both $\C{S}_A$ and $\C{S}_B$ it follows that $N_3 = 0$, and thus the two SEMs ``predict'' different values for $X_3$ under a counterfactual change of $X_1$.
\end{ex}
If we want to use an estimated SEM to predict counterfactual questions, this example shows that we require assumptions that let us distinguish between $\C{S}_A$ or $\C{S}_B$. In this work we exploit the additive noise assumption to infer the \emph{structure} of an SEM. We do not claim that we can predict counterfactual statements.

Another property of structural equation models is that they have the power to describe many distributions\footnote{A similar but weaker statement than Proposition~\ref{prop:sur} can be found in \citep{Druzdzel2001, Janzing2010a}.}. 
\begin{proposition} \label{prop:sur}
Consider $X_1, \ldots, X_p$ and let $\lawX$ be Markov with respect to $\G$. Then there exists an SEM $(\SE, \lawN)$ with graph $\G$ that generates the distribution $\lawX$.
\end{proposition}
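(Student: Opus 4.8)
The plan is to realize the Markov factorization of $\lawX$ through structural assignments built from the probability integral transform. First I would fix a topological ordering of $\G$ and relabel so that $\PA[\G]{j} \subseteq \{1, \ldots, j-1\}$ for every $j$; then every parent precedes its child and, in particular, each index smaller than $j$ is a non-descendant of $j$. Since $\lawX$ is Markov with respect to $\G$, the parents $\PA[\G]{j}$ $d$-separate $X_j$ from the remaining predecessors, so that $X_j \independent \big(\{X_1, \ldots, X_{j-1}\} \setminus \PA[\G]{j}\big) \given \PA[\G]{j}$. Consequently the conditional law $\law{X_j \given X_1, \ldots, X_{j-1}}$ depends on the predecessors only through $\PA[\G]{j}$ and equals $\law{X_j \given \PA[\G]{j}}$; this is the identity that lets me write a structural equation for $X_j$ using only its parents.

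Next I would construct the noise and the functions. Let $N_1, \ldots, N_p \iid U(0,1)$ be jointly independent, so that $\lawN$ is automatically a product distribution. Because the $X_j$ are real-valued, a regular version of the conditional distribution $\law{X_j \given \PA[\G]{j} = x_{\PA[\G]{j}}}$ exists; writing $t \mapsto F_j(t \given x_{\PA[\G]{j}})$ for its conditional cumulative distribution function and $F_j^{-1}(\,\cdot\, \given x_{\PA[\G]{j}})$ for the associated quantile function, I set
$$
f_j(x_{\PA[\G]{j}}, n_j) := F_j^{-1}\big(n_j \given x_{\PA[\G]{j}}\big)
$$
and define the structural equations by $X_j = f_j(\PA[\G]{j}, N_j)$, listing every member of $\PA[\G]{j}$ formally as an argument so that the induced graph is exactly $\G$. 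By the inverse-transform property, for each fixed value $x_{\PA[\G]{j}}$ the variable $f_j(x_{\PA[\G]{j}}, N_j)$ has law $\law{X_j \given \PA[\G]{j} = x_{\PA[\G]{j}}}$.

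Finally I would verify by induction on $j$ that the vector $(\tilde X_1, \ldots, \tilde X_j)$ generated by these assignments satisfies $(\tilde X_1, \ldots, \tilde X_j) \Lequ (X_1, \ldots, X_j)$. The base case $j=1$ is the univariate inverse-transform statement. In the inductive step, $\tilde X_j$ is a function of the already-constructed parent values together with the fresh, independent noise $N_j$; hence, conditionally on $(\tilde X_1, \ldots, \tilde X_{j-1})$, it has law $\law{X_j \given \PA[\G]{j}}$. Combining this conditional law with the induction hypothesis for $(\tilde X_1, \ldots, \tilde X_{j-1})$ and invoking the reduction from predecessors to parents from the first step reproduces exactly the law of $(X_1, \ldots, X_j)$; at $j=p$ this gives $\lawX$. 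I expect the only real obstacle to be measure-theoretic rather than structural: one must guarantee that a regular conditional distribution exists and that the quantile functions $f_j$ can be chosen jointly measurable in $(x_{\PA[\G]{j}}, n_j)$, so that the $\tilde X_j$ are bona fide random variables. This is exactly where the real-valuedness of the $X_j$ (more generally, a standard Borel state space) enters, and it is the step that requires care beyond the bookkeeping of the induction.
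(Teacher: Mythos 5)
Your construction is exactly the paper's proof: independent $U(0,1)$ noise variables combined with $f_j(x_{\PA[\G]{j}}, n_j) = F^{-1}_{X_j \mid \PA[\G]{j}=x_{\PA[\G]{j}}}(n_j)$, the inverse conditional CDF given the parents. The paper states only this construction and omits the verification; your added steps (topological ordering, reduction from predecessors to parents via the Markov property, and the induction on $j$) correctly fill in the details that the paper leaves implicit.
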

\begin{proof}
See Appendix~\ref{app:sems}.
\end{proof}

Structural equation models have been used for a long time in fields like agriculture or social sciences \citep[e.g.,][]{Wright1921, Bollen1989}. Model selection, for example, was done by fitting different structures that were considered as reasonable given the prior knowledge about the system. These candidate structures were then compared using goodness of fit tests. 
In this work we instead consider the question of identifiability, which has not been addressed until recently.
\begin{problem}[population case]
Suppose we are given a distribution $\lawX = \law{X_1, \ldots, X_p}$ that has been generated by an (unknown) structural equation model with graph $\G_0$; in particular, $\lawX$ is Markov with respect to $\G_0$. Can the (observational) distribution $\lawX$ be generated by a structural equation model with a different graph $\G \neq \G_0$? If not, we call $\G_0$ identifiable from $\lawX$.
\end{problem}
In general, $\G_0$ is not identifiable from $\lawX$: the joint distribution $\lawX$ is certainly Markov with respect to a lot of different graphs, e.g., to all fully connected acyclic graphs. Proposition~\ref{prop:sur} states the existence of corresponding SEMs. 
What can be done to overcome this indeterminacy? The hope is that by using additional assumptions one obtains restricted models, in which we can identify the graph from the joint distribution. 
Considering graphical models, we see in Section~\ref{sec:ibm} how the assumption that $\lawX$ is Markov and faithful with respect to $\G_0$ leads to identifiability of the Markov equivalence class of $\G_0$.
Considering SEMs, we see in Section~\ref{sec:ide} that additive noise models as a special case of restricted SEMs even lead to identifiability of the correct DAG. Also Section~\ref{sec:lin} contains such a restriction based on SEMs.

\section{Alternative Methods} \label{sec:exi}

\subsection{Estimating the Markov Equivalence Class: Independence-Based Methods} \label{sec:ibm}
Conditional independence-based methods
like the PC algorithm and the FCI algorithm \citep{Spirtes2000} assume that $\lawX$ is Markov and faithful with respect to the correct graph $\G_0$ (that means \emph{all} conditional independences in the joint distribution are entailed by the Markov condition, cf. Section~\ref{sec:cg}). 
Since both assumptions put restrictions only on the conditional independences in the joint distribution, these methods are not able to distinguish between two graphs that entail exactly the same set of (conditional) independences, i.e., between Markov equivalent graphs.
Since many Markov equivalence classes contain more than one graph, conditional independence-based methods thus usually leave some arrows undirected and cannot uniquely identify the correct graph.


The first step of the PC algorithm determines the variables that are adjacent. One therefore has to test whether two variables are dependent given \emph{any} other subset of variables. The PC algorithm exploits a very clever procedure to reduce the size of the condition set. In the worst case, however, one has to perform conditional independence tests with conditioning sets of up to $p-2$ variables (where $p$ is the number of variables in the graph). 
Although there is recent work on kernel-based conditional independence tests \citep{Fukumizu2008, Zhang2011}, such tests are difficult to perform in practice if one does not restrict the variables to follow a Gaussian distribution, for example \citep[e.g.,][]{Bergsma2004}.

To prove consistency of the PC algorithm one does not only require faithfulness, but strong faithfulness \citep{Zhang2003, Kalisch2007}. \citet{Uhler2013} argue that this is a restrictive condition. Since parts of faithfulness can be tested given the data \citep{Zhang2008}, the condition may be weakened.

From our perspective independence-based methods face the following challenges: (1) We can identify the correct DAG only up to Markov equivalence classes. (2) Conditional independence testing, especially with a large conditioning set, is difficult in practice. (3) Simulation experiments suggest, that in many cases, the distribution is close to unfaithfulness. In these cases there is no guarantee that the inferred graph(s) will be close to the original one.

\subsection{Estimating the Markov Equivalence Class: Score-Based Methods} \label{sec:scorebm}
Although the roots for score-based methods for causal inference may date back even further, we mainly refer to \citet{Geiger1994}, \citet{Heckerman1997} and \citet{Chickering2002} and references therein. Given the data $\mathcal{D}$ from a vector $\X$ of variables, i.e., $n$ i.i.d.\ samples, the idea is to assign a score $S(\mathcal{D}, \G)$ to each graph $\G$ and search over the space of DAGs for the best scoring graph.
\begin{equation} \label{eq:sbmopt}
\hat{\G} := \argmax_{\G \text{ DAG over } \X} S(\mathcal{D}, \G)
\end{equation}
There are several possibilities to define such a scoring function. Often a parametric model is assumed (e.g., linear Gaussian equations or multinomial distributions), which introduces a set of parameters $\theta \in \mathbf{\Theta}$.\\

From a Bayesian point of view, we may define priors $p_{pr}(\G)$ and $p_{pr}(\theta)$ over DAGs and parameters and consider the log posterior as a score function, or equivalently (note that $p(\mathcal{D})$ is constant over all DAGs):
\begin{equation*}
S(\mathcal{D}, \G) := \log p_{pr}(\G) + \log p(\mathcal{D}|\G)\,,
\end{equation*}
where $p(\mathcal{D}|\G)$ is the marginal likelihood 
\begin{equation*}
p(\mathcal{D}|\G) = \int_{\mathbf{\Theta}} p(\mathcal{D}|\G, \theta) \cdot p_{pr}(\theta)\,d\theta.
\end{equation*}
In this case, $\hat{\G}$ defined in~\eqref{eq:sbmopt} is the mode of the posterior distribution, which is usually called the \emph{maximum a posteriori} (or MAP) estimator. Instead of a MAP estimator, one may be interested in the full posterior distribution over DAGs. This distribution can subsequently be averaged over all graphs to get a posterior of the hypothesis about the existence of a specific edge, for example.

In the case of parametric models, we call two graphs $\G_1$ and $\G_2$ {\it distribution equivalent} if for each parameter $\theta_1 \in \mathbf{\Theta}_1$ there is a corresponding parameter $\theta_2 \in \mathbf{\Theta_2}$, such that the distribution obtained from $\G_1$ in combination with $\theta_1$ is the same as the distribution obtained from graph $\G_2$ with $\theta_2$, and vice versa. It is known that in the linear Gaussian case (or for unconstrained multinomial distributions) two graphs are distribution-equivalent if and only if they are Markov equivalent. One may therefore argue that $p(\mathcal{D}|\G_1)$ and $p(\mathcal{D}|\G_2)$ should be the same for Markov equivalent graphs $\G_1$ and $\G_2$. \citet{Heckerman1995} discuss how to choose the prior over parameters accordingly.\\

Instead, we may consider the maximum likelihood estimator $\hat{\theta}$ in each graph and define a score function by using a penalty, e.g., the Bayesian Information Criterion (BIC):
\begin{equation*}
S(\mathcal{D}, \G) = \log p(\mathcal{D}|\hat{\theta},\G) - \frac{d}{2} \log n\,,
\end{equation*}
where $n$ is the sample size and $d$ the dimensionality of the parameter $\theta$.

Since the search space of all DAGs is growing super-exponentially in the number of variables \citep[e.g.,][]{Chickering2002}, greedy search algorithms are applied to solve equation~\eqref{eq:sbmopt}: at each step there is a candidate graph and a set of neighboring graphs. For all these neighbors one computes the score and considers the best-scoring graph as the new candidate. If none of the neighbors obtains a better score, the search procedure terminates (not knowing whether one obtained only a local optimum). Clearly, one therefore has to define a neighborhood relation. Starting from a graph $\G$, we may define all graphs as neighbors from $\G$ that can be obtained by removing, adding or reversing one edge.
In the linear Gaussian case, for example, one cannot distinguish between Markov equivalent graphs. It turns out that in those cases it is beneficial to change the search space to Markov equivalence classes instead of DAGs. 
The greedy equivalence search (GES) \citep{Meek1997,Chickering2002} 
starts with the empty graph and consists of two-phases. In the first phase, edges are added until a local maximum is reached; in the second phase, edges are removed until a local maximum is reached, which is then given as an output of the algorithm. \citet{Chickering2002} proves consistency of this method by using consistency of the BIC \citep{Haughton1988}. 
 
\subsection{Estimating the DAG: LiNGAM} \label{sec:lin}
\citet{Kano2003} and \citet{Shimizu2006} propose an inspiring method exploiting non-Gaussianity of the data\footnote{A more detailed tutorial can be found on \url{http://www.ar.sanken.osaka-u.ac.jp/~sshimizu/papers/Shimizu13BHMK.pdf}.}.
Although their work covers the general case, the idea is maybe best understood in the case of two variables:
\begin{ex}
Suppose
\begin{equation*}
Y= \phi X + \eps, \quad \eps \independent X\,,
\end{equation*}
where $X$ and $\eps$ are normally distributed. It is easy to check that  
\begin{equation*}
X= \tilde \phi Y + \tilde \eps, \quad \tilde \eps \independent Y\,.
\end{equation*}
with $\tilde \phi = \frac{\phi \var(X)}{\phi ^2 \var(X) + \sigma^2} \neq \frac{1}{\phi}$ and $\tilde \eps = X - \tilde \phi Y$.
\end{ex}
If we consider non-Gaussian noise, however, the structural equation model becomes identifiable.
\begin{proposition} \label{prop:linnng}
Let $X$ and $Y$ be two random variables, for which
\begin{align*}
Y&=\phi X + \eps, \quad \eps \independent X, \; \phi \neq 0
\end{align*}
holds.
Then we can reverse the process, i.e., there exists $\psi \in \R$ and a noise $\tilde \eps$, such that
$$X= \psi Y + \tilde \eps, \quad \tilde \eps \independent Y\,,$$
if and {\emph only if} $X$ and $\eps$ are Gaussian distributed.
\end{proposition}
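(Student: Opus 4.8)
The plan is to prove the two implications separately. The direction ``$\Leftarrow$'' is routine: if $X$ and $\eps$ are Gaussian, then $(X,Y)$ is jointly Gaussian (since $Y=\phi X+\eps$ is a linear function of the independent Gaussians $X$ and $\eps$), so $\mean[X\given Y]$ is affine in $Y$. Setting $\psi:=\mathrm{Cov}(X,Y)/\var(Y)$ and $\tilde\eps:=X-\psi Y$ makes $\tilde\eps$ a linear combination of jointly Gaussian variables that is uncorrelated with $Y$, hence independent of $Y$; here $\var(Y)>0$ because $\phi\neq0$ and $X$ is non-degenerate. This produces the claimed backward representation.

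The substance is the direction ``$\Rightarrow$'', and the key observation is that it is exactly a Darmois--Skitovich situation. Assume a backward representation $X=\psi Y+\tilde\eps$ with $\tilde\eps\independent Y$ exists. I would rewrite both $Y$ and $\tilde\eps$ as linear forms in the two \emph{independent} source variables $X$ and $\eps$. Substituting $Y=\phi X+\eps$ gives $\tilde\eps = X-\psi Y = (1-\psi\phi)X-\psi\eps$, so that
$$Y=\phi\,X+1\cdot\eps,\qquad \tilde\eps=(1-\psi\phi)\,X+(-\psi)\,\eps$$
are two linear combinations of the independent variables $X$ and $\eps$ that, by assumption, are independent of one another. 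The Darmois--Skitovich theorem then states that every source variable appearing with a nonzero coefficient in \emph{both} forms must be Gaussian: $X$ is forced Gaussian provided $\phi(1-\psi\phi)\neq0$, and $\eps$ is forced Gaussian provided $-\psi\neq0$, i.e.\ $\psi\neq0$.

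It therefore remains to exclude the degenerate coefficient configurations $\psi=0$ and $\psi\phi=1$. If $\psi=0$ then $\tilde\eps=X$ and the independence reads $X\independent\phi X+\eps$; together with $X\independent\eps$ this forces $X$ to be degenerate (quickest via $\mathrm{Cov}(X,\phi X+\eps)=\phi\var(X)=0$ when second moments exist, otherwise via a characteristic-function/functional-equation argument). Symmetrically, if $\psi\phi=1$ then $\tilde\eps=-\psi\eps$ and the independence reduces to $\eps\independent\phi X+\eps$, forcing $\eps$ to be degenerate. Since $\phi\neq0$ and $X,\eps$ are non-degenerate, both cases are impossible, so $\phi(1-\psi\phi)\neq0$ and $\psi\neq0$, and Darmois--Skitovich yields the Gaussianity of both $X$ and $\eps$.

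I expect the main obstacle to be the ``$\Rightarrow$'' direction, and specifically the bookkeeping needed to rule out the boundary cases $\psi=0$ and $\psi\phi=1$: these are precisely the configurations where one of the two linear forms degenerates and Darmois--Skitovich says nothing, so they must be handled separately and rely on the implicit non-degeneracy of $X$ and $\eps$. Everything else is a direct invocation of Darmois--Skitovich, once the problem is recast in terms of the independent sources $X$ and $\eps$ rather than the observed pair $(X,Y)$.
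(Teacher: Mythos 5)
Your proof is correct and follows the same route the paper itself points to: the paper does not prove Proposition~\ref{prop:linnng} inline but cites that it can be proved directly via the Darmois--Skitovi\v{c} theorem, which is exactly your argument (rewrite $Y$ and $\tilde\eps$ as linear forms in the independent sources $X$ and $\eps$, then invoke the theorem). Your additional bookkeeping --- the easy Gaussian ``$\Leftarrow$'' direction and the exclusion of the degenerate coefficient cases $\psi=0$ and $\psi\phi=1$ via non-degeneracy of $X$ and $\eps$ --- fills in precisely the details the paper delegates to the cited references.
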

\citet{Shimizu2006} were the first to report this result. They prove it even for more than two variables using Independent Component Analysis (ICA) \citep[][Theorem~11]{ica_paper}, which itself is proved using the Darmois-Skitovi\v{c} theorem \citep{skitovich, skitovich_trans, darmois}. Alternatively, Proposition~\ref{prop:linnng} can be proved directly using the Darmois-Skitovi\v{c} theorem \citep[e.g.,][Theorem~2.10]{PetersDiploma}. 
\begin{theorem}[\citet{Shimizu2006}]
Assume a linear SEM with graph $\G_0$
\begin{equation} \label{eq:lingam}
X_j = \sum_{k \in \PA[\G_0]{j}} \beta_{jk} X_k + N_j\,,\qquad j=1, \ldots, p
\end{equation}
where all $N_j$ are jointly independent and non-Gaussian distributed.
Additionally, for each $j \in \{1, \ldots, p\}$ we require $\beta_{jk} \neq 0$ for all $k \in \PA[\G_0]{j}$.
Then, the graph $\G_0$ is identifiable from the joint distribution.
\end{theorem}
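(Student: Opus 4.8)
The plan is to bootstrap the multivariate statement from the bivariate case (Proposition~\ref{prop:linnng}) together with the LiNGAM/ICA machinery, proceeding by exhibiting the whole model as an instance of Independent Component Analysis and invoking the Darmois-Skitovi\v{c} theorem to pin down the mixing structure. First I would write the system \eqref{eq:lingam} in matrix form $\X = B\X + \N$, where $B = (\beta_{jk})$ is the weighted adjacency matrix of $\G_0$. Since $\G_0$ is acyclic, there is a permutation of the indices making $B$ strictly lower triangular; in particular $I-B$ is invertible, and we obtain the reduced form $\X = (I-B)^{-1}\N =: A\N$. This is exactly the ICA generative model: the observed vector $\X$ is a linear mixture $A$ of the jointly independent, non-Gaussian sources $\N$.

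The core identifiability input is the ICA theorem (cited as \citep[Theorem~11]{ica_paper}), which rests on Darmois-Skitovi\v{c}: if $A\N$ and $\tilde A \tilde \N$ have the same distribution with both source vectors having independent non-Gaussian components, then $A$ and $\tilde A$ agree up to right-multiplication by a permutation and a nonsingular diagonal (scaling) matrix. The next step is to argue that this permutation-and-scaling ambiguity does \emph{not} translate into a genuine graph ambiguity here. Concretely, suppose a second linear SEM with graph $\G$ and coefficient matrix $\tilde B$, independent non-Gaussian noise, and all nonzero edge coefficients, also generates $\lawX$. Then $A = (I-B)^{-1}$ and $\tilde A = (I-\tilde B)^{-1}$ are two ICA mixing matrices for the same distribution, so $\tilde A = A P D$ for some permutation matrix $P$ and nonsingular diagonal $D$. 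The key structural fact is that $A$ can be brought to lower-triangular form with unit diagonal by the acyclic ordering of $\G_0$ (and similarly $\tilde A$ by the ordering of $\G$); the unit diagonal comes from $(I-B)^{-1} = I + B + B^2 + \cdots$ having ones on the diagonal. I would then show that the combination of ``$A$ permutable to lower-triangular with unit diagonal'' for both matrices, together with $\tilde A = A P D$, forces $P$ to be the identity (up to a common reordering) and $D = I$, hence $\tilde A = A$, hence $\tilde B = B$ and $\G = \G_0$. This is precisely the uniqueness-up-to-ordering argument that \citet{Shimizu2006} formalize: the permutation is fixed because only one permutation of $A$ yields a matrix with no zeros on the diagonal that is simultaneously lower-triangular, and the scaling is fixed by the unit-diagonal normalization.

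The step I expect to be the main obstacle is the transition from the ICA ambiguity (permutation times scaling) to the genuine \emph{uniqueness of the DAG}. The ICA theorem only controls the mixing matrix up to $PD$, and a priori a different permutation could correspond to a legitimately different acyclic ordering and thus a different graph; ruling this out requires using the assumption $\beta_{jk}\neq 0$ for all $k\in\PA[\G_0]{j}$ in an essential way. Without full connectivity of the support, a permuted matrix might still be compatible with triangularity because extra zeros appear; the nonvanishing-coefficient hypothesis is what guarantees that the sparsity pattern of $A$ faithfully encodes the reachability structure of $\G_0$, so that distinct DAGs give distinct (non-permutation-equivalent) mixing matrices. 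I would therefore spend most of the care on the combinatorial lemma that a matrix expressible as $(I-B)^{-1}$ for an acyclic $B$ with the stated nonzero pattern admits an essentially unique row/column permutation to unit-diagonal lower-triangular form, and that recovering this canonical form from $A$ uniquely determines $B$ and hence the edge set $\C{E}_0$. The non-Gaussianity of every $N_j$ is what activates Darmois-Skitovi\v{c} componentwise and excludes the Gaussian escape route exhibited in the two-variable example preceding Proposition~\ref{prop:linnng}.
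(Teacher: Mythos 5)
Your proposal is correct and follows essentially the same route as the paper, which does not prove this theorem itself but defers to \citet{Shimizu2006}: the reduction $\X = (I-B)^{-1}\mathbf{N}$ to an ICA model, identifiability of the mixing matrix up to column permutation and scaling via the Darmois-Skitovi\v{c}-based ICA theorem, and the resolution of that ambiguity by the triangularity/unit-diagonal argument constitute exactly the cited argument. One small caveat that does not affect your proof: the support of $A=(I-B)^{-1}$ need not encode reachability in $\G_0$ (path coefficients can cancel), but this is harmless because your argument recovers $A$ exactly and reads the edge set off $B = I - A^{-1}$, which is where the assumption $\beta_{jk}\neq 0$ is genuinely needed (the permutation ambiguity itself is already excluded by acyclicity and the absence of self-loops alone).
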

The authors call this model a linear non-Gaussian acyclic model (LiNGAM) and provide a practical method based on ICA that can be applied to a finite amount of data. 
Later, improved versions of this method have been proposed in \citep{Shimizu2011, Hyvarinen2013}.

\subsection{Estimating the DAG: Gaussian SEMs with Equal Error Variances} \label{sec:sev} 
There is another deviation from linear Gaussian SEMs that makes the graph identifiable. 
\citet{Peters2014biom} show that restricting the error (or noise) variables to have the same variance is sufficient to recover the graph structure.
\begin{theorem}[\citet{Peters2014biom}] \label{thm:sev}
Assume an SEM with graph $\G_0$
\begin{equation} \label{eq:mmm}
X_j = \sum_{k \in \PA[\G_0]{j}} \beta_{jk} X_k + N_j\,,\qquad j=1, \ldots, p
\end{equation}
where all $N_j$ are i.i.d.\ and follow a Gaussian distribution.
Additionally, for each $j \in \{1, \ldots, p\}$ we require $\beta_{jk} \neq 0$ for all $k \in \PA[\G_0]{j}$.
Then, the graph $\G_0$ is identifiable from the joint distribution.
\end{theorem}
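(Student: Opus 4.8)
The plan is to reconstruct $\G_0$ from $\lawX$ by a deterministic procedure that reads off a topological ordering from conditional variances and then recovers the edges by linear regression; since the procedure depends only on $\lawX$ and returns $\G_0$, no other graph in the model class can produce the same distribution. Writing the model in matrix form as $\X = B\X + \N$ with $N_i \iid \mathcal N(0,\sigma^2)$, we have $\X = (I-B)^{-1}\N$, so $\lawX$ is a centred Gaussian with covariance $\Sigma = \sigma^2 (I-B)^{-1}(I-B)^{-\top}$, which is strictly positive definite. First I would record the consequence that no nontrivial linear combination of a subset of the $X_i$ can be an exact affine function of a disjoint subset, since that would force a principal submatrix of $\Sigma$ to be singular.

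The key lemma I would prove is the following. Call a set $S$ \emph{ancestrally closed} if it contains the parents of each of its members (equivalently, $S$ is an initial segment of some topological order of $\G_0$). For such $S$ and any $j \notin S$, $\var(X_j \mid X_S) \ge \sigma^2$, with equality if and only if $\PA[\G_0]{j} \subseteq S$. To see this, write $X_j = \sum_{k \in \PA[\G_0]{j}} \beta_{jk} X_k + N_j$ and condition on $X_S$. Since $S$ is ancestrally closed and $j \notin S$, no element of $S$ is a descendant of $j$, hence $N_j \independent X_S$ and $N_j \independent X_k$ for every parent $k$; splitting the parents into those inside and outside $S$ gives $\var(X_j\mid X_S)=\sigma^2+\var\!\big(\sum_{k\in\PA[\G_0]{j}\setminus S}\beta_{jk}X_k \mid X_S\big)$. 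The second term is nonnegative and vanishes exactly when the sum is empty; when the sum is nonempty it is strictly positive, for otherwise $\sum_{k\in\PA[\G_0]{j}\setminus S}\beta_{jk}X_k$ would be an affine function of $X_S$, contradicting positive definiteness of $\Sigma$ (here I use $\beta_{jk}\neq 0$). The essential role of the equal-variance hypothesis is that this lower bound is the \emph{same} constant $\sigma^2$ for every node, which is what makes conditional variances of different nodes comparable.

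Given the lemma I would run the peeling recursion: set $S_0=\emptyset$ and, having built $S_k$, let $S_{k+1}=S_k\cup\{j^\ast\}$ for any $j^\ast$ minimizing $\var(X_j\mid X_{S_k})$ over $j\notin S_k$. By the lemma every such conditional variance is $\ge\sigma^2$, and a source of the induced subgraph on $\B{V}\setminus S_k$ has all its parents in $S_k$ and hence attains $\sigma^2$; so the minimum equals $\sigma^2$ and is attained. By the equality clause of the lemma, any minimizer $j^\ast$ satisfies $\PA[\G_0]{j^\ast}\subseteq S_k$, so $S_{k+1}$ remains ancestrally closed. Iterating produces a valid topological order $\pi$ of $\G_0$ computed from $\lawX$ alone.

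Finally I would recover the edges: for each $j$, regress $X_j$ on the set of its $\pi$-predecessors. Because every true parent precedes $j$ and $N_j$ is independent of the predecessors, the population least-squares predictor of $X_j$ equals $\sum_{k\in\PA[\G_0]{j}}\beta_{jk}X_k$; as $\Sigma$ is positive definite this predictor is unique, so the regression coefficient is $\beta_{jk}\neq 0$ on each true parent and $0$ on every non-parent predecessor. Hence the predecessors with nonzero coefficient are exactly $\PA[\G_0]{j}$, independently of which valid order $\pi$ the recursion happened to return; the reconstructed edge set is therefore a function of $\lawX$ alone and coincides with $\G_0$. Applying the same reconstruction to any equal-variance Gaussian SEM that generates $\lawX$ yields its graph, which must then equal $\G_0$, giving identifiability. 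I expect the main obstacle to be the strict inequality in the key lemma---ruling out an accidental collapse of the conditional variance to $\sigma^2$ when a parent lies outside $S$---which is exactly where nondegeneracy of $\Sigma$ and the assumption $\beta_{jk}\neq0$ enter.
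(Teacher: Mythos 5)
Your proof is correct, but it follows a genuinely different route from the one the paper relies on. The paper does not prove Theorem~\ref{thm:sev} directly; it refers to \citet{Peters2014biom}, and it points out (just after Theorem~\ref{thm:mul}) that the key ingredient of that proof is the graphical statement recorded here as Proposition~\ref{prop:graph}: assuming two distinct graphs $\G$ and $\Gp$ both admit equal-variance Gaussian SEM representations of $\lawX$ satisfying causal minimality, one extracts a reversed edge $Y \rightarrow L$ in $\G$, $L \rightarrow Y$ in $\Gp$, together with a set $\B{S}$ of common non-descendants, conditions on $X_{\B{S}} = \B{s}$, and is left with a \emph{bivariate} linear Gaussian model that would have to be reversible with the \emph{same} error variance in both directions---which forces the coefficient to vanish, contradicting $\beta_{jk} \neq 0$. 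That is a proof by contradiction in which the multivariate case is reduced to a bivariate obstruction, exactly parallel to the proof of Theorem~\ref{thm:mul}. You instead give a constructive identification procedure: the conditional-variance lemma ($\var(X_j \given X_{\B{S}}) \ge \sigma^2$ for ancestrally closed $\B{S}$, with equality iff $\PA[\G_0]{j} \subseteq \B{S}$), the peeling recursion that reads off a topological order, and population regression to recover the edge set. Your key lemma, peeling step, and edge-recovery step are all sound (the strict inequality correctly uses positive definiteness of $\Sigma$ together with $\beta_{jk}\neq 0$, and the final argument that the procedure is a function of $\lawX$ alone correctly delivers identifiability without needing to know $\sigma^2$). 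The trade-off: the paper's route is modular---Proposition~\ref{prop:graph} lifts \emph{any} bivariate identifiability statement to the multivariate setting, which is why the same skeleton covers additive noise models, discrete models, and the equal-variance Gaussian case alike---whereas your route is more elementary (no graphical separation argument at all), is constructive, and directly suggests an estimation algorithm based on ordering nodes by conditional variances followed by sparse regression; it is essentially the approach later developed in the literature on equal-variance causal discovery, and it would not transfer to settings where the bivariate obstruction is not expressible through second moments.
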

For estimating the coefficients 
$\beta_{jk}$ and the error variance $\sigma^2$, 
\citet{Peters2014biom} propose to use a penalized maximum likelihood method (BIC). 
For optimization they propose a greedy search algorithm in the space of DAGs.  
Rescaling the variables changes the error terms. Therefore, in many applications Theorem~\ref{thm:sev} cannot be sensibly applied. The BIC criterion, however, always allows to compare the method's score with the score of a linear Gaussian SEM that uses more parameters and does not make the assumption of equal error variances.

\section{Identifiability of Continuous Additive Noise Models} \label{sec:ide}
Recall that equation~\eqref{eq:sem} defines the general form of an 
SEM:
$X_j = f_j(\PA{j},N_j)\,, j=1, \ldots, p$
with jointly independent variables $N_i$.
We have seen that these models are too general to identify the graph (Proposition~\ref{prop:sur}). It turns out, however, that
constraining the function class leads to identifiability. As a first step we restrict the form of the function to be additive 
with respect to the noise variable:
\begin{equation} \label{eq:anm}
X_j = f_j(\PA{j}) + N_j\,, \qquad j=1, \ldots, p
\end{equation}
and assume that all noise variables $N_j$ have a strictly positive density.
For those models with strictly positive density, causal minimality reduces to the condition that each function $f_j$ is not constant in any of its arguments.
\begin{proposition} \label{prop:nonconst}
Consider a distribution generated by a model~\eqref{eq:anm} and assume that the functions $f_j$ are not constant in any of its arguments, i.e., for all $j$ and $i \in \PA[]{j}$ there are some $x_{\PA[]{j} \setminus \{i\}}$ and some $x_i \neq x'_i$ such that 
$$
f_j(x_{\PA[]{j} \setminus \{i\}}, x_i) \neq 
f_j(x_{\PA[]{j} \setminus \{i\}}, x'_i)\,.
$$
Then the joint distribution satisfies causal minimality with respect to the corresponding graph. Conversely, if there is a $j$ and $i$ such that $f_j(x_{\PA[]{j} \setminus \{i\}}, \cdot)$ is constant, causal minimality is violated.
\end{proposition}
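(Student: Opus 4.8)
The plan is to reduce causal minimality to a purely local condition via Proposition~\ref{prop:cme} and then exploit the additive structure one parent at a time. I would first record two facts that drive the whole argument. Since the model is an SEM, $\lawX$ is Markov with respect to the graph $\G$ (Pearl's Theorem~1.4.1, invoked in Section~\ref{sec:sem}), so Proposition~\ref{prop:cme} applies: causal minimality holds if and only if for every $j$ and every $X_i \in \PA[]{j}$ we have $X_j \notindependent X_i \given \PA[]{j}\setminus\{X_i\}$. Second, the joint density factorizes as $p(x_1,\ldots,x_p) = \prod_k p_{N_k}\big(x_k - f_k(x_{\PA[]{k}})\big)$, and since each $p_{N_k}$ is strictly positive this density is strictly positive on all of $\R^p$; hence it is a density with respect to a product measure (as required by Proposition~\ref{prop:cme}) and every conditional distribution appearing below has full support $\R$.

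Writing $q := x_{\PA[]{j}\setminus\{i\}}$ and $z := x_i$, the additive equation reads $X_j = f_j(q,z) + N_j$ with $N_j \independent \PA[]{j}$ (the parents are non-descendants of $j$ and hence independent of $N_j$; one must check this holds \emph{jointly}, so that conditioning on the parents leaves $N_j$ with its original law). Consequently the conditional density of $X_j$ given its parents is
$$p_{X_j \given X_i = z,\; X_{\PA[]{j}\setminus\{i\}} = q}(w) = p_{N_j}\big(w - f_j(q,z)\big).$$
For the converse (easy) direction this formula suffices: if $f_j(q,\cdot)$ is constant, the right-hand side does not depend on $z$, so $X_j \independent X_i \given \PA[]{j}\setminus\{X_i\}$, and Proposition~\ref{prop:cme} yields that causal minimality fails.

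For the main direction I would prove the contrapositive. Suppose $X_j \independent X_i \given \PA[]{j}\setminus\{X_i\}$ for some $j,i$. Then the displayed conditional density is independent of $z$ for a.e.\ $z$; fixing $q$ and two values $z,z'$ and setting $a = f_j(q,z)$, $b = f_j(q,z')$, we obtain $p_{N_j}(w-a) = p_{N_j}(w-b)$ for a.e.\ $w$. The crux is a short lemma: a strictly positive probability density cannot be periodic. Indeed, the identity gives $p_{N_j}(u) = p_{N_j}(u + (a-b))$ a.e.; if $c := a-b \neq 0$ then $p_{N_j}$ is $c$-periodic, and $\int_\R p_{N_j} = \sum_{n\in\Z}\int_{[0,c)} p_{N_j} = \infty$ since $\int_{[0,c)} p_{N_j} > 0$, contradicting $\int p_{N_j} = 1$. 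Hence $a=b$, so $f_j(q,z) = f_j(q,z')$ for a.e.\ $z,z'$ and a.e.\ $q$; because all supports are $\R$, this says $f_j(q,\cdot)$ is a.e.\ constant for a.e.\ $q$. Invoking continuity of $f_j$ (the standing assumption of this section) upgrades ``a.e.\ constant'' to ``constant'', so $f_j$ does not depend on its $i$-th argument --- precisely the negation of the non-constancy hypothesis.

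I expect the main obstacle to lie in the two passages from probabilistic statements to pointwise statements about $f_j$: (i) the non-periodicity lemma, where strict positivity of the noise density is \emph{indispensable}, as it rules out the degenerate densities that would otherwise permit $a \neq b$; and (ii) the upgrade from ``a.e.'' to ``everywhere'', where strict positivity again guarantees full support (so the relevant a.e.\ set is all of $\R$) and continuity of $f_j$ eliminates the remaining null set. These are exactly the points where the hypotheses of the proposition --- strictly positive noise density and continuous structural functions --- are used essentially rather than cosmetically.
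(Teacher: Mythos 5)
Your proof is correct, and its skeleton coincides with the paper's: both directions are routed through Proposition~\ref{prop:cme}, the hard direction is proved by contraposition, and the final step upgrades ``$f_j(x_{\PA[]{j}\setminus\{i\}},\cdot)$ is a.e.\ constant'' to ``constant'' via continuity of $f_j$. The core of the hard direction, however, is genuinely different. The paper works with the \emph{conditional mean}: assuming w.l.o.g.\ $\mean N_j = 0$, the mean of $X_j$ given its parents is $f_j(x_A,x_i)$, and the conditional independence $X_j \independent X_i \given \PA[]{j}\setminus\{X_i\}$ forces this mean to be a function of $x_A$ alone (invoking property (2b) of Dawid, 1979), giving $f_j(x_A,x_i)=c_{x_A}$ almost everywhere. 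That argument is shorter, but it silently assumes $N_j$ is integrable --- an assumption absent from the proposition's hypotheses (a Cauchy noise has a strictly positive density and no mean). Your route through the \emph{conditional density} $p_{N_j}\big(w - f_j(q,z)\big)$ and the non-periodicity lemma is moment-free, so it covers such noise distributions, and it makes explicit exactly where positivity and continuity enter; this is a modest but real gain in generality. One small correction: strict positivity is not ``indispensable'' for the non-periodicity lemma itself, since for \emph{any} probability density invariance under a nonzero translation forces the period-cell integral to be either zero (total mass $0$) or positive (total mass $\infty$), both impossible. Where positivity genuinely matters is, as you note elsewhere, in guaranteeing full support, so that the a.e.\ identities hold Lebesgue-a.e.\ on all of $\R$ and continuity can complete the pointwise upgrade.
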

\begin{proof}
See Appendix~\ref{app:proofnonconst}
\end{proof}
Linear functions and Gaussian variables identify only the correct Markov equivalence class and not necessarily the correct graph. In the remainder of this section we establish results showing that this is an exceptional case. We develop conditions that guarantee the identifiability of the DAG. Proposition~\ref{prop:biv1} indicates that this condition is rather weak.

Throughout this section we assume that all random variables are absolutely continuous with respect to the Lebesgue measure. \citet{Peters2011a} provides an extension for variables that are absolutely continuous with respect to the counting measure.

\subsection{Bivariate Additive Noise Models}
We now add another assumption about the form of the structural equations. 
\begin{definition} \label{def:ibanm}
Consider an additive noise model \eqref{eq:anm} with two variables, i.e., the two equations
$X_i = N_i$ and $X_j = f_j(X_i) + N_j$ with $\{i,j\} = \{1,2\}$. 
We call this SEM an \emph{identifiable bivariate additive noise model}
if the triple $(f_j, \law{X_i}, \law{N_j})$ 
satisfies Condition~\ref{cond}. In particular, we require the noise variables to have strictly positive densities.
\end{definition}

\begin{condition} \label{cond}
The triple $(f_j, \law{X_i}, \law{N_j})$  does not solve the following differential equation
for all $x_i,x_j$ with $\nu''(x_j-f(x_i))f'(x_i)\neq 0$:
\begin{align}\label{DGL}
\xi'''&=   \xi''  \left(-\frac{\nu'''f'}{\nu''}
+\frac{f''}{f'}\right) 
-2 \nu''f''f' 
+\nu'f'''+\frac{\nu'\nu'''f''f'}{\nu''}-\frac{\nu'(f'')^2}{f'}\,,
\end{align}
Here, $f := f_j$, and $\xi:=\log p_{X_i}$ and $\nu:=\log p_{N_j}$ are the logarithms of the strictly positive densities.
To improve readability, we have skipped the arguments $x_j-f(x_i)$, $x_i$, and $x_i$ 
for $\nu$, $\xi,$ and $f$ and their derivatives, respectively.
\end{condition}
\citet{Zhang2009} even allow for a bijective transformation of the data, i.e., $X_j = g_j(f_j(X_i) + N_j)$ and obtain a similar differential equation as~\eqref{DGL}.

As the name in Definition~\ref{def:ibanm} already suggests, we have identifiability for this class of SEMs.
\begin{theorem}\label{thm:biv}
Let $\lawX = \law{X_1, X_2}$ be generated by an identifiable bivariate additive noise model with graph $\G_0$ and assume causal minimality, i.e., a non-constant function $f_j$ (Proposition~\ref{prop:nonconst}). Then, $\G_0$ is identifiable from the joint distribution.
\end{theorem}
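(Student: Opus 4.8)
The plan is to exploit the characterization of identifiability through the differential equation \eqref{DGL} underlying Condition~\ref{cond}. Without loss of generality assume $\G_0$ is $X_1 \to X_2$, so that the generating model is $X_1 = N_1$, $X_2 = f(X_1) + N_2$ with $N_1 \independent N_2$ and strictly positive densities. The only DAGs on two nodes are $X_1 \to X_2$, $X_2 \to X_1$ and the empty graph. Since Proposition~\ref{prop:nonconst} guarantees that causal minimality (a non-constant $f$) forces $X_1 \notindependent X_2$, the empty graph cannot generate $\lawX$. It therefore suffices to rule out a \emph{backward} additive noise model, i.e.\ to show there is no function $g$ and no variable $\tilde N_1 \independent X_2$ with strictly positive density such that $X_1 = g(X_2) + \tilde N_1$ reproduces $\lawX$.

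Suppose, for contradiction, that such a backward model exists. Writing $\pi := \log p$ for the (globally defined, strictly positive, and by assumption three times differentiable) log joint density, the forward model gives $\pi(x_1,x_2) = \xi(x_1) + \nu(x_2 - f(x_1))$ with $\xi = \log p_{X_1}$ and $\nu = \log p_{N_2}$, while the backward model gives $\pi(x_1,x_2) = \bar\xi(x_2) + \bar\nu(x_1 - g(x_2))$ for $\bar\xi = \log p_{X_2}$ and $\bar\nu = \log p_{\tilde N_1}$. From the backward representation, $\partial_{x_1}\partial_{x_2}\pi = -g'(x_2)\,\bar\nu''(x_1-g(x_2))$ and $\partial_{x_1}^2\pi = \bar\nu''(x_1-g(x_2))$, so $\partial_{x_1}\partial_{x_2}\pi / \partial_{x_1}^2\pi = -g'(x_2)$ depends on $x_2$ alone. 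Clearing the denominator, this is equivalent to the identity
\[
\partial_{x_1}^2\partial_{x_2}\pi \cdot \partial_{x_1}^2\pi \;=\; \partial_{x_1}\partial_{x_2}\pi \cdot \partial_{x_1}^3\pi ,
\]
which must then hold at every point.

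Next I would substitute the forward representation into this identity. With $z := x_2 - f(x_1)$ a direct computation yields $\partial_{x_1}\partial_{x_2}\pi = -\nu''f'$, $\partial_{x_1}^2\pi = \xi'' + \nu''(f')^2 - \nu'f''$, $\partial_{x_1}^2\partial_{x_2}\pi = \nu'''(f')^2 - \nu''f''$ and $\partial_{x_1}^3\pi = \xi''' - \nu'''(f')^3 + 3\nu''f'f'' - \nu'f'''$ (arguments suppressed as in Condition~\ref{cond}). The coefficient of $\xi'''$ in the cleared identity is exactly $\partial_{x_1}\partial_{x_2}\pi = -\nu''f'$, so at every point with $\nu''(x_2-f(x_1))\,f'(x_1)\neq 0$ I can solve for $\xi'''$; after collecting terms the cancellations reproduce precisely the right-hand side of \eqref{DGL}. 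Hence a backward model forces the triple $(f,\law{X_1},\law{N_2})$ to solve \eqref{DGL} for all $(x_1,x_2)$ with $\nu''f'\neq 0$, contradicting Condition~\ref{cond} in the definition of an identifiable bivariate additive noise model. With the empty graph already excluded, this shows $\G_0$ is identifiable.

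The step I expect to be the main obstacle is the regularity underlying the backward representation: to differentiate $\bar\xi(x_2) + \bar\nu(x_1 - g(x_2))$ twice one must know a priori that $g$ and $\bar\nu$ are sufficiently smooth, which is not immediate from the mere existence of an independent additive decomposition and needs a separate argument (deducing smoothness of $g$ from that of $p$). A related technical point is the careful handling of the sets where $\partial_{x_1}^2\pi = 0$ or $\nu''f' = 0$: the division-free form of the identity sidesteps the first issue, and since Condition~\ref{cond} only quantifies over points with $\nu''f'\neq 0$ it suffices that the equation fail at a single such point, but one must still check that these exceptional sets do not interfere with the continuity and limiting arguments.
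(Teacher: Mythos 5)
Your proof is correct and takes essentially the same route as the paper's: assume a backward additive noise model, write the log joint density $\pi$ in both the forward and backward forms, observe that the backward form forces the ratio of the mixed and pure second $x_1$-derivatives of $\pi$ to depend on $x_2$ alone, and then substitute the forward form to recover the differential equation \eqref{DGL} at all points with $\nu''f'\neq 0$, contradicting Condition~\ref{cond}; the empty graph is excluded via causal minimality exactly as in the paper. The only (cosmetic) differences are that you clear denominators to get a division-free identity -- which is in fact the form the paper itself uses in the proof of Corollary~\ref{cor:gaussianity_implies_linearity} -- and that you explicitly flag the smoothness of $g$ and $\bar\nu$ as a needed regularity assumption, a point the paper's proof leaves implicit as well.
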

\begin{proof}
The proof of \citet{Hoyer2008} is reproduced in Appendix~\ref{app:proof_biv}.
\end{proof}
Intuitively speaking, we expect a ``generic'' triple $(f_j, \law{X_i}, \law{N_j})$ to satisfy Condition~\ref{cond}. The following proposition presents one possible formalization. After fixing $(f_j,\law{N_j})$ we consider the space of all distributions $p_X$ such that Condition~\ref{cond} is violated. This space is contained in a three dimensional space. Since the space of continuous distributions is infinite dimensional, we can therefore say that Condition~\ref{cond} is satisfied for ``most distributions'' $\law{X_i}$.
\begin{proposition} \label{prop:biv1}
If for a fixed pair $(f_j,\law{N_j})$ 
there exists $y\in \R$ such that $\nu''(y-f(x))f'(x)\neq 0$ for 
all but a  countable set of points
$x\in \R$,
the set of all $p_X$ for which $(f_j, \law{X_i}, \law{N_j})$ does not satisfy Condition~\ref{cond}
is contained in a $3$-dimensional space. 
\end{proposition}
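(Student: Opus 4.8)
The plan is to fix the pair $(f_j, \law{N_j})$ and regard Condition~\ref{cond}, i.e.\ the differential equation \eqref{DGL}, as a constraint on the unknown function $\xi = \log p_{X_i}$. The key observation is that \eqref{DGL} has the form $\xi''' = \xi'' \cdot G(x_i) + H(x_i)$, where $G$ and $H$ are fixed functions built only from $f$, $\nu$ and their derivatives (evaluated at the arguments indicated in Condition~\ref{cond}), and crucially \emph{do not depend on $\xi$}. So, for each fixed value of $y$, the constraint ``$(f_j, \law{X_i}, \law{N_j})$ violates Condition~\ref{cond}'' means that $\xi$ satisfies this inhomogeneous linear ODE in the variable $x_i$ (with the substitution $x_j = y$ used to pin down the arguments of $\nu$). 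The hypothesis that $\nu''(y-f(x))f'(x)\neq 0$ for all but countably many $x$ guarantees that $G$ and $H$ are well-defined (the coefficient $\nu''$ appears in denominators) on a cofinite set, so the ODE genuinely controls $\xi$ almost everywhere.

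The heart of the argument is the standard existence-and-uniqueness theory for linear ODEs. Writing $u := \xi''$, the equation becomes a first-order \emph{linear} ODE $u' = G\, u + H$. Its solution is determined uniquely once we fix the value $u(x_0) = \xi''(x_0)$ at a single base point $x_0$; explicitly $u(x) = e^{\int_{x_0}^x G}\big(\xi''(x_0) + \int_{x_0}^x H e^{-\int_{x_0}^t G}\,dt\big)$. Integrating twice to recover $\xi$ then introduces two further constants of integration, namely $\xi'(x_0)$ and $\xi(x_0)$. Hence the entire function $\xi$ — and therefore $p_{X_i} = e^\xi$ — is completely determined by the three real numbers $\big(\xi(x_0), \xi'(x_0), \xi''(x_0)\big)$. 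This is precisely the assertion that the set of admissible $\xi$ (equivalently, of densities $p_X$) violating Condition~\ref{cond} lies inside a three-dimensional space: the map sending each such $\xi$ to this triple of initial data is injective, so the space of solutions embeds into $\R^3$.

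The main obstacle to watch is the gap between ``$\nu''(y-f(x))f'(x)\neq 0$ on a cofinite set'' and ``the linear ODE controls $\xi$ on all of $\R$.'' On the cofinite set the ODE determines $\xi''$ (and hence $\xi$ up to the two integration constants); one must argue that the countably many excluded points do not give $\xi$ any extra freedom. This follows because $\xi = \log p_{X_i}$ is assumed smooth (the density is strictly positive and differentiable enough for \eqref{DGL} to make sense), so $\xi$ and its derivatives extend continuously across isolated exceptional points and are thus pinned down there by continuity from the surrounding cofinite region. I would also note that one needs to fix the distinguished value $y$ at the outset (the proposition only asserts existence of \emph{some} such $y$); the constraint for that single $y$ already forces $\xi$ into the three-parameter family, which is all that is claimed. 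A minor bookkeeping point is that not every triple in $\R^3$ yields a normalizable probability density, so the solution set is contained in — but need not fill — a three-dimensional space, which matches the statement's phrasing ``contained in a $3$-dimensional space.''
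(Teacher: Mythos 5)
Your proposal is correct and follows essentially the same route as the paper's own proof: reduce the violation of Condition~\ref{cond} to the linear inhomogeneous ODE $\xi''' = \xi'' G + H$ with $G,H$ determined by $(f,\nu)$ at the fixed $y$, solve the resulting first-order equation for $z := \xi''$ explicitly, and conclude that $\xi$ is pinned down by the three initial values $\xi(x_0), \xi'(x_0), \xi''(x_0)$, so the admissible log-densities lie in a $3$-dimensional space. One terminological quibble: the complement of a countable set is co-countable (and dense in $\R$), not cofinite, and its points need not be isolated, but your continuity argument for extending $\xi''$ across the exceptional points goes through unchanged since density of the good set suffices.
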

The condition $\nu''(y-f(x))f'(x) \neq 0$ holds for all $x$ if there is no interval where $f$ is constant and the logarithm of the noise density is not linear, for example.
\begin{proof}
See Appendix~\ref{app:proof_biv1}.
\end{proof}
In the case of Gaussian variables, the differential equation \eqref{DGL} simplifies. We thus have the following result.   
\begin{corollary}\label{cor:gaussianity_implies_linearity}
If $X_i$ and $N_j$ follow a Gaussian distribution and $(f_j, \law{X_i}, \law{N_j})$ does not satisfy Condition~\ref{cond}, then $f_j$ is linear.
\end{corollary}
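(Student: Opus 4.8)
The plan is to substitute the Gaussian assumptions directly into the differential equation~\eqref{DGL} and exploit the drastic simplifications that result. Writing $\xi=\log p_{X_i}$ and $\nu=\log p_{N_j}$ for the Gaussian log-densities, both are quadratic polynomials, so their second derivatives $\xi''$ and $\nu''$ are nonzero constants ($-1/\sigma_{X_i}^2$ and $-1/\sigma_{N_j}^2$, respectively) while their third derivatives vanish identically: $\xi'''\equiv 0$ and $\nu'''\equiv 0$. Not satisfying Condition~\ref{cond} means the triple \emph{does} solve~\eqref{DGL} at every $(x_i,x_j)$ with $\nu''(x_j-f(x_i))f'(x_i)\neq 0$; since $\nu''$ is now a nonzero constant, this is exactly the open set where $f'(x_i)\neq 0$.

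Plugging $\xi'''=\nu'''=0$ and constant $\xi'',\nu''$ into~\eqref{DGL} collapses it to
\begin{equation*}
0 = \xi''\,\frac{f''}{f'} - 2\nu'' f'' f' + \nu'\left(f''' - \frac{(f'')^2}{f'}\right),
\end{equation*}
where $f$ and its derivatives are evaluated at $x_i$ and $\nu'$ at $x_j-f(x_i)$. The decisive observation is that every factor except $\nu'$ depends on $x_i$ alone, whereas $\nu'(x_j-f(x_i))=-(x_j-f(x_i)-\mu_{N_j})/\sigma_{N_j}^2$ is affine in $x_j$ with nonzero slope. Hence, for each fixed $x_i$, the right-hand side is an affine function of $x_j$ that must vanish for all $x_j$; both its slope and its intercept must therefore be zero.

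The slope condition forces $f'''-(f'')^2/f'=0$, after which the intercept condition reduces to $\xi'' f''/f' - 2\nu'' f'' f'=0$, i.e. $f''\bigl(\xi''/f' - 2\nu'' f'\bigr)=0$ on the set $\{f'\neq 0\}$. This leaves two possibilities at each such point: either $f''=0$, or $\xi''/f' = 2\nu'' f'$, which gives $(f')^2=\xi''/(2\nu'')=\sigma_{N_j}^2/(2\sigma_{X_i}^2)$, a positive constant, so that $f'$ is constant and again $f''=0$. In either case $f''$ vanishes, so $f=f_j$ is affine, proving the corollary.

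I expect the only real subtlety to be bookkeeping rather than a genuine obstacle. One must separate the two roles of the variables correctly---that $\nu'$ is evaluated at $x_j-f(x_i)$ and so genuinely varies with $x_j$, while all remaining terms are functions of $x_i$ only---and then verify that the ``second branch'' $(f')^2=\sigma_{N_j}^2/(2\sigma_{X_i}^2)$ does not secretly yield a nonlinear solution; it does not, since a constant derivative again means $f''=0$. A minor caveat is that the argument establishes linearity only on the open set $\{f'\neq 0\}$, which is the regime singled out by Condition~\ref{cond} and, under causal minimality (non-constant $f$, Proposition~\ref{prop:nonconst}), the relevant one.
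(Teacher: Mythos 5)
Your proof is correct in substance and follows essentially the same route as the paper's: Gaussianity makes $\xi$ and $\nu$ quadratic, so $\xi'''=\nu'''=0$ and $\xi'',\nu''$ are nonzero constants; equation~\eqref{DGL} then collapses to an identity whose $x_j$-dependence sits entirely in $\nu'$, and exploiting the freedom in $x_j$ yields the pointwise dichotomy $f''=0$ or $(f')^2=\xi''/(2\nu'')$, from which $f''=0$ follows on $\{f'\neq 0\}$ because on the open set $[f''\neq 0]$ the derivative $f'$ would be constant --- a contradiction. The only difference in execution: you read off the vanishing of the slope and intercept of an affine function of $x_j$ (which also gives the unused relation $f'''=(f'')^2/f'$), whereas the paper restricts to the level set $\{(x,y): y-f(x)=\alpha\}$ with $\nu'(\alpha)=0$, killing the $\nu'$ terms in one stroke; both devices lead to the same identity $f''\bigl(2\nu''(f')^2-\xi''\bigr)=0$.

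The one point that needs repair is your closing caveat. Working from \eqref{DGL} literally, your argument establishes $f''=0$ only on the open set $\{f'\neq 0\}$, and appealing to causal minimality does not finish the job: Proposition~\ref{prop:nonconst} gives only non-constancy of $f$, which is compatible with $f'$ vanishing on an interval, so a priori $f$ could still be non-affine (constant on one interval, affine with nonzero slope elsewhere). Fortunately the gap closes in one line. On any connected component $(a,b)$ of $\{f'\neq 0\}$ your argument makes $f'$ a constant $c\neq 0$; if $a$ (or $b$) were finite, it would lie outside $\{f'\neq 0\}$, so continuity of $f'$ would force $c=\lim_{x\to a^+}f'(x)=f'(a)=0$, a contradiction. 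Hence $\{f'\neq 0\}$ is empty or all of $\RN$, and in either case $f''\equiv 0$, i.e., $f_j$ is affine. The paper's own proof does not meet this issue because it starts from the existence of a backward model and derives the identity in cleared form, without dividing by $f'$ or restricting to $f'\neq 0$, so its conclusion $f''\bigl(2C(f')^2-\xi''\bigr)=0$ holds at every $x$; note, though, that your starting point --- the literal negation of Condition~\ref{cond} --- is exactly the hypothesis stated in the corollary, so your reading is if anything the more faithful one.
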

\begin{proof}
See Appendix~\ref{app:proof_gaussianity_implies_linearity}.
\end{proof}
Although non-identifiable cases are rare, the question remains when identifiability is violated. \citet{Zhang2009} prove that non-identifiable additive noise models necessarily fall into one out of five classes. 
\begin{proposition}[\citet{Zhang2009}] \label{prop:kun}
Consider $X_2=f_2(X_1)+N_2$ with fully supported noise variable $N_2$ that is independent of $X_1$ and three times differentiable function $f_2$. Let further $\frac{d}{dx_1}f_2(x_1)\frac{d^2}{d x_1^2} \log p_{N_2}(x_2) = 0$ only at finitely many points $(x_1, x_2)$. If there is a backward model, i.e., we can write $X_1=g_1(X_2)+\tilde N_1$ with $\tilde N_1$ independent of $X_2$, then one of the following must hold.
\begin{itemize}
\item[I.] $X_1$ is Gaussian, $N_2$ is Gaussian and $f$ is linear.
\item[II.] $X_1$ is log-mix-lin-exp, $N_2$ is log-mix-lin-exp and $f$ is linear.
\item[III.] $X_1$ is log-mix-lin-exp, $N_2$ is one-sided asymptotically exponential and $f$ is strictly monotonic with $f'(x_1) \rightarrow 0$ as $x_1 \rightarrow \infty$ or as $x_1 \rightarrow -\infty$.
\item[IV.] $X_1$ is log-mix-lin-exp, $N_2$ is generalized mixture of two exponentials and $f$ is strictly monotonic with $f'(x_1) \rightarrow 0$ as $x_1 \rightarrow \infty$ or as $x_1 \rightarrow -\infty$.
\item[V.] $X_1$ is generalized mixture of two exponentials, $N_2$ is two-sided asymptotically exponential and $f$ is strictly monotonic with $f'(x_1) \rightarrow 0$ as $x_1 \rightarrow \infty$ or as $x_1 \rightarrow-\infty$. 
\end{itemize}
\end{proposition}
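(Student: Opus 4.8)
The plan is to start from the characterization already implicit in Theorem~\ref{thm:biv}. Under the forward model $X_2=f_2(X_1)+N_2$ with $N_2\independent X_1$, the joint log-density is $\pi(x_1,x_2)=\xi(x_1)+\nu(x_2-f(x_1))$, where $f:=f_2$, $\xi:=\log p_{X_1}$ and $\nu:=\log p_{N_2}$. As shown in the proof of Theorem~\ref{thm:biv} (Appendix~\ref{app:proof_biv}), a backward additive-noise model $X_1=g_1(X_2)+\tilde N_1$ with $\tilde N_1\independent X_2$ can exist only if the triple $(f,\law{X_1},\law{N_2})$ solves the differential equation~\eqref{DGL} at every point $(x_1,x_2)$ with $\nu''(x_2-f(x_1))f'(x_1)\neq 0$. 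The finiteness hypothesis on the zero set of $f'\cdot\nu''$ guarantees that this set is the complement of a finite set, so \eqref{DGL} holds on a dense open set and, by continuity, I may freely divide by $\nu''$ and $f'$ and differentiate there.

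The core move is to eliminate $\xi$. Fix $x_1$ and let $x_2$ vary; then $z:=x_2-f(x_1)$ sweeps an interval, so in \eqref{DGL} the factors $\nu',\nu'',\nu'''$ are evaluated at the free argument $z$, while $\xi'',\xi'''$ and $f',f'',f'''$ depend on $x_1$ alone. Writing \eqref{DGL} as $\xi'''=\xi''\,G_1+G_0$ with $G_1=-\frac{\nu'''f'}{\nu''}+\frac{f''}{f'}$ and $G_0=-2\nu''f''f'+\nu'f'''+\frac{\nu'\nu'''f''f'}{\nu''}-\frac{\nu'(f'')^2}{f'}$, I differentiate with respect to $x_2$ (equivalently $z$). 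Since $\xi''',\xi''$ are independent of $x_2$, the left-hand side drops out and I obtain $\xi''=-(\partial_z G_0)/(\partial_z G_1)$. Because the left-hand side depends only on $x_1$, a second differentiation in $z$ gives $\partial_z\!\big[(\partial_z G_0)/(\partial_z G_1)\big]=0$, a relation containing only $\nu$ (at $z$) and the constants $f'(x_1),f''(x_1),f'''(x_1)$, with $\xi$ fully removed.

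This reduced relation is, for each fixed $x_1$, an ordinary differential equation in $z$ for $\nu$ whose coefficients depend on $x_1$ only through $a=f'(x_1),\,b=f''(x_1),\,c=f'''(x_1)$. Requiring it to hold simultaneously for all $z$ and all admissible $(a,b,c)$ realized along the curve $x_1\mapsto(f',f'',f''')$ forces the separation of the $z$-dependence from the $x_1$-dependence, and the argument splits according to whether $f''\equiv 0$ (so $f$ is linear) or $f$ is strictly monotone with $f''\not\equiv 0$. Solving the resulting essentially constant-coefficient ODE for $\nu$ pins $\log p_{N_2}$ down to one of the listed forms (Gaussian, log-mix-lin-exp, one- or two-sided asymptotically exponential, or a generalized mixture of two exponentials); back-substitution into $\xi''=-(\partial_z G_0)/(\partial_z G_1)$ then determines $\log p_{X_1}$, and in the nonlinear branches an analysis of the ODE as $z\to\pm\infty$ yields the stated asymptotic flatness $f'\to 0$. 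Matching admissible $(\xi,\nu,f)$-combinations produces precisely classes I--V.

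The main obstacle is the case analysis in the last step: organizing the branching so that it is exhaustive and internally consistent, showing that degenerate sub-branches collapse into one of the five listed forms rather than generating new ones, and deriving the precise tail descriptions (``asymptotically exponential'', ``generalized mixture'') together with the boundary behaviour $f'\to 0$ from the \emph{global} solvability of the ODE over all $z$. The elimination of $\xi$ and the reduction to an ODE in $\nu$ are mechanical once the finiteness hypothesis licenses the divisions and differentiations; it is the bookkeeping of the surviving sub-cases, and verifying that each candidate triple is genuinely realizable (needed only if one also wants tightness, since the proposition asserts only necessity), that constitutes the real work.
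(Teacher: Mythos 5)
The paper does not actually prove this proposition: it is quoted from \citet{Zhang2009}, and Appendix~\ref{app:kun} supplies only the definitions (log-mix-lin-exp, asymptotically exponential, generalized mixture of two exponentials) needed to state it. So your sketch can only be measured against the original argument of \citet{Zhang2009}, whose broad strategy --- start from the differential equation~\eqref{DGL}, eliminate $\xi$ by differentiating in the $x_2$ direction, then classify the surviving $(\nu,f)$ pairs --- your plan does mirror.

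As a proof, however, the proposal has two genuine gaps. First, the elimination step $\xi'' = -(\partial_z G_0)/(\partial_z G_1)$ is only legitimate where $\partial_z G_1 \neq 0$, i.e., where $\nu'''/\nu''$ is non-constant in $z$. The degenerate branch $\partial_z G_1 \equiv 0$ is precisely the branch $\nu''' = c\,\nu''$, whose solutions are the Gaussian ($c=0$) and log-mix-lin-exp ($c \neq 0$) noise densities --- that is, it contains classes I and II of the statement. It therefore cannot be waved off as a ``degenerate sub-branch that collapses''; it needs its own analysis, and that analysis requires input your sketch never invokes, namely that $\xi$ and $\nu$ are logarithms of integrable probability densities. (Compare the paper's proof of Corollary~\ref{cor:gaussianity_implies_linearity}, where normalizability --- the fact that $\nu'$ must vanish somewhere --- is exactly what forces $f$ to be linear in the Gaussian case; without it there are non-linear solutions such as $f' = Ae^{cx}$.) Second, and more fundamentally, the entire content of the proposition is the exhaustive five-class list together with the asymptotic statements ($f' \to 0$, one-/two-sided exponential tails), and your plan explicitly defers this as ``the real work.'' Those tail descriptions do not fall out of solving an essentially constant-coefficient ODE pointwise in $z$; they come from demanding that the candidate $\nu$ and $\xi$ be globally defined, normalizable log-densities on all of $\R$ while the compatibility relation holds for every admissible $x_1$. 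Until that case bookkeeping is actually carried out, what you have is a plausible roadmap --- essentially the roadmap of \citet{Zhang2009} --- rather than a proof of the proposition.
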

Precise definitions can be found in Appendix~\ref{app:kun}. In particular, we obtain identifiability whenever the function $f$ is not injective. 
Proposition~\ref{prop:kun} states that belonging to one of these classes is a necessary condition for non-identifiability. We now show sufficiency for two classes. The linear Gaussian case is well-known and easy to prove.
\begin{ex}\label{ex:cou1}
Let $X_2 = a X_1 + N_2$ with independent $N_2\sim \mathcal{N}(0,\sigma^2)$ and $X_1 \sim \mathcal{N}(0,\tau^2)$. 
We can then consider all variables in $\mathcal{L}_2$ and project $X_1$ onto $X_2$. This leads to an orthogonal decomposition $X_1 = \tilde a X_2 + \tilde N_1$. Since for jointly Gaussian variables uncorrelatedness implies independence, we obtain a backward additive noise model. Figure~\ref{fig:counter} (left) shows the joint density and the functions for the forward and backward model.
\end{ex}
We also give an example of a nonidentifiable additive noise model with non-Gaussian distributions, where the forward model is described by case II, and the backwards model by case IV:
\begin{ex} \label{ex:cou2}
Let $X_2 = a X_1 + b + N_2$ with independent log-mix-lin-exp $N_2$ and $X_1$, i.e., we have the log-densities
$$
\xi(x) = \log p_{X_2}(x) = c_1 \exp(c_2 x) + c_3 x + c_4
$$
and
$$
\nu(x) = \log p_{N_2}(n) = \tc_1 \exp(\tc_2 n) + \tc_3 n + \tc_4 \,.
$$
Then $X_2$ is a generalized mixture of exponential distributions. If and only if $c_2 = -a \tc_2$ and $c_3 \neq a \tc_3$ we obtain a valid backward model $X_1 = g(X_2) + \tilde N_1$ with log-mix-lin-exp $\tilde N_1$. Again, Figure~\ref{fig:counter} (right) shows the joint distribution over $X_1$ and $X_2$ and forward and backward functions.
\end{ex}
\begin{proof}
See Appendix~\ref{app:cou2}.
\end{proof}
\begin{figure}
\includegraphics[width=0.46\textwidth]{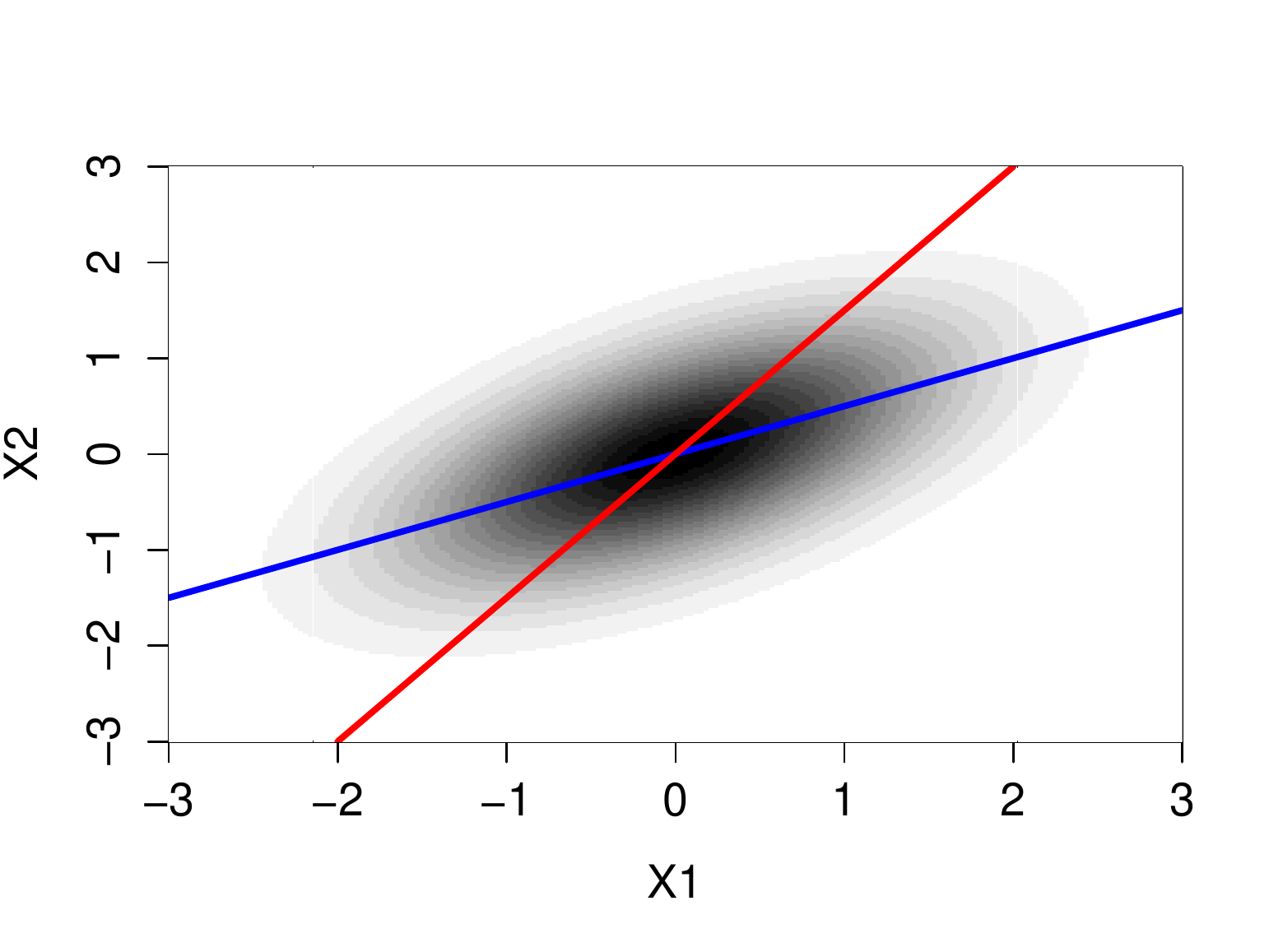}
\hfill
\includegraphics[width=0.46\textwidth]{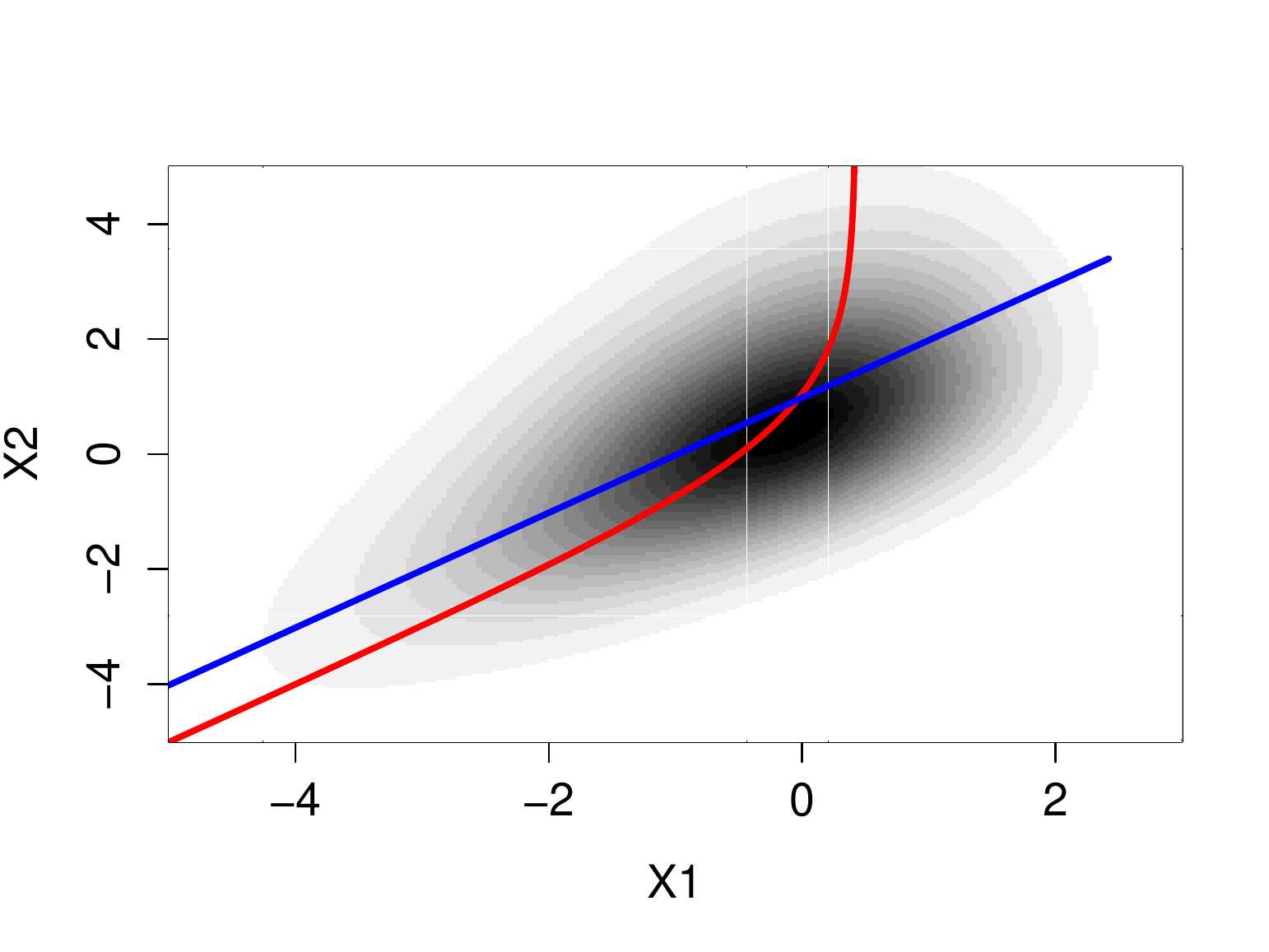}
\caption{Joint density over $X_1$ and $X_2$ for two non-identifiable examples. The left panel shows Example~\ref{ex:cou1} (linear Gaussian case) and the right panel shows Example~\ref{ex:cou2} (the latter plot is based on kernel density estimation). The blue function corresponds to the forward model $X_2 = f(X_1) + N_2$, the red function to the backward model $X_1 = g(X_2)+\tilde N_1$.}
\label{fig:counter}
\end{figure}
Example~\ref{ex:cou2} shows how parameters of function, input and noise distribution have to be ``fine-tuned'' to yield non-identifiability \citep{Steudel2010}. 

It can be shown that bivariate identifiability even holds generically when causal feedback is allowed (i.e., if both $X$ causes $Y$ \emph{and} $Y$ causes $X$), at least when assuming noise and input distributions to be Gaussian \citep{Mooij_et_al_NIPS_11}.

\subsection{From Bivariate to Multivariate Models}
It turns out that Condition~\ref{cond} also suffices to prove identifiability in the multivariate case. Assume we are given $p$ structural equations $X_j = f_j(\PA{j}) + N_j$ as in~\eqref{eq:anm}. If we fix all arguments of the functions $f_j$ except for one parent and the noise variable, we obtain a bivariate model.
One may expect that it suffices to put restrictions like Condition~\ref{cond} on this triple of function, input and noise distribution. 
This is not the case. 
\begin{ex} \label{ex:motivating}
Consider the following SEM
$$
X_1 = N_1,\quad
X_2 = f_2(X_1) + N_2,\quad
X_3 = f_3(X_1) + a \cdot X_2 + N_3
$$
with 
$N_1 \sim \mathcal{U}[0,1]$, 
$N_2 \sim \mathcal{N}(0,\sigma_2^2)$ and
$N_3 \sim \mathcal{N}(0,\sigma_3^2)$, i.e., $N_1$ is uniformly distributed on $[0,1]$ and $N_2$ and $N_3$ are normally distributed. The variables $X_2$ and $X_3$ themselves are non-Gaussian but
$$
X_3 \given_{X_1=x_1} = c+ a \cdot X_2 \given_{X_1=x_1} + N_3
$$ 
is a linear Gaussian equation for all $x_1$. We can revert this equation and obtain the same joint distribution by an SEM of the form
$$
X_1 = M_1,\quad
X_2 = g_2(X_1) + b \cdot X_3 + M_2,\quad
X_3 = g_3(X_1) + M_3
$$
for some 
$M_1 \sim \mathcal{U}[0,1]$,
$M_2 \sim \mathcal{N}(0,\tilde \sigma_2^2)$ and 
$M_3 \sim \mathcal{N}(0,\tilde \sigma_3^2)$.
Thus, the DAG is not identifiable from the joint distribution.
\end{ex}
Instead, we need to put restrictions on conditional distributions.
\begin{definition} \label{def:wnn}
Consider an additive noise model \eqref{eq:anm} with $p$ variables. We call this SEM a \emph{restricted additive noise model} 
if for all $j \in \B{V}$, $i \in \PA{j}$ and all sets 
$\B{S} \subseteq \B{V}$ with 
$\PA{j} \setminus \{i\} \subseteq \B{S} \subseteq \ND{j} \setminus \{i,j\}$, there is an $x_{\B{S}}$ with $p_{\B{S}}(x_{\B{S}}) > 0$, s.t.
\begin{equation*}
\Big(f_j(x_{\PA{j}\setminus \{i\}}, \underbrace{\cdot}_{X_i}), \law{X_i \given X_{\B{S}}=x_{\B{S}}}, \law{N_j}\Big)
\end{equation*}
satisfies Condition~\ref{cond}. Here, the underbrace indicates the input component of $f_j$ for variable $X_i$.
In particular, we require the noise variables to have non-vanishing densities and the functions $f_j$ to be continuous and three times continuously differentiable.
\end{definition}
Assuming causal minimality, we can identify the structure of the SEM from the distribution.
\begin{theorem}\label{thm:mul}
  Let $\lawX = \law{X_1, \ldots, X_p}$ be generated by a restricted additive noise model with graph $\G_0$ and assume that $\lawX$ satisfies causal minimality with respect to $\G_0$, i.e., the functions $f_j$ are not constant 
(Proposition~\ref{prop:nonconst}). Then, $\G_0$ is identifiable from the joint distribution.
\end{theorem}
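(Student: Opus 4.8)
The plan is to reduce the multivariate identifiability problem to the bivariate result (Theorem~\ref{thm:biv}) via the conditional-distribution reformulation built into Definition~\ref{def:wnn}. Suppose for contradiction that the same $\lawX$ is generated both by the given restricted additive noise model with graph $\G_0$ and by a second additive noise model with a different graph $\G \neq \G_0$, where causal minimality holds with respect to both. First I would argue that, since $\lawX$ is Markov and causally minimal with respect to both $\G_0$ and $\G$, the two graphs must in fact differ in the orientation of at least one edge: they cannot differ merely in skeleton, because causal minimality pins down the set of edges (via Proposition~\ref{prop:cme}, non-adjacency forces a conditional independence). So there is a pair $(i,j)$ with $i \to j$ in $\G_0$ but $j \to i$ in $\G$.

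The next step is to localize the contradiction to a two-variable conditional model. I would pick an edge $i\to j$ in $\G_0$ whose reversal appears in $\G$, chosen minimally with respect to some topological ordering so that conditioning on an appropriate set removes all confounding. Concretely, choose a set $\B{S}$ with $\PA[\G_0]{j}\setminus\{i\} \subseteq \B{S} \subseteq \ND[\G_0]{j}\setminus\{i,j\}$ that also contains $\PA[\G]{i}\setminus\{j\}$; the acyclicity of both graphs together with the topological arguments should guarantee such an $\B{S}$ exists and that, conditioned on $X_{\B{S}}=x_{\B{S}}$, the variable $X_j$ is given by $f_j(x_{\PA[\G_0]{j}\setminus\{i\}}, X_i) + N_j$ with $N_j$ still independent of $X_i$. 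Thus conditionally we obtain a genuine bivariate additive noise model in the forward direction. The reversed model $\G$ must likewise furnish, after conditioning on the \emph{same} $x_{\B{S}}$, a backward additive noise representation $X_i = \tilde f(X_j) + \tilde N_i$ with $\tilde N_i \independent X_j$. But Definition~\ref{def:wnn} guarantees that for \emph{some} value $x_{\B{S}}$ the triple $\big(f_j(x_{\PA{j}\setminus\{i\}},\cdot),\, \law{X_i\given X_{\B{S}}=x_{\B{S}}},\, \law{N_j}\big)$ satisfies Condition~\ref{cond}, which by Theorem~\ref{thm:biv} forbids any backward additive noise model. This is the desired contradiction.

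I expect the main obstacle to lie in verifying that conditioning on $X_{\B{S}}$ legitimately yields a \emph{bivariate additive noise model in both directions simultaneously}, and in particular that the noise independence survives the conditioning. In the forward direction this is immediate because $\B{S}\subseteq\ND[\G_0]{j}$ keeps $N_j$ independent of everything in the conditioning set; the delicate part is the backward direction, where one must show that the reversed model's structural equation for $X_i$, after conditioning on $X_{\B{S}}$, really collapses to a function of $X_j$ alone plus independent noise rather than retaining dependence on further variables. I would handle this by exploiting the Markov properties of $\G$ to verify the conditional independences $\tilde N_i \independent (X_j, X_{\B{S}})$, using that $\B{S}$ was chosen to block all paths other than the edge under scrutiny. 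The second subtlety is the combinatorial choice of which edge to reverse first: one must ensure that the reversal of $i\to j$ can be examined in isolation without other differing edges contaminating the conditional bivariate model, which is why a careful induction over a common topological layer, or over the number of differing edges, is needed to set up the single clean two-variable comparison.
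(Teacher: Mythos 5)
Your overall strategy---reduce to the bivariate case by conditioning on a suitable set $\B{S}$ and invoke Definition~\ref{def:wnn} together with Theorem~\ref{thm:biv}---is exactly the paper's strategy, and the analytic half of your argument (that the noise independence survives conditioning, in both the forward and the backward direction) is sound: it is precisely what Lemmas~\ref{lem:noi} and~\ref{lem:cond} deliver, given the right conditioning set. The genuine gap is in the graphical half, which is where the real work of the paper lies (Proposition~\ref{prop:graph}(i)). Your opening claim is false: causal minimality does \emph{not} pin down the skeleton. Example~\ref{ex:nf} in the paper exhibits a single distribution that is Markov and causally minimal with respect to two graphs with different skeletons ($\G_1$ contains the edge between $X$ and $Z$, $\G_2$ does not). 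Non-adjacency of $i$ and $j$ in $\G$ only yields a conditional independence given some $d$-separating set of $\G$, while minimality with respect to $\G_0$ only yields conditional \emph{dependence} given sets sandwiched between $\PA[\G_0]{j}\setminus\{i\}$ and $\ND[\G_0]{j}$; these statements need not clash, so you cannot conclude via Proposition~\ref{prop:cme} that the two graphs differ merely by edge reversals.

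Second, even granting a reversed edge, the existence of a set $\B{S}$ containing both $\PA[\G_0]{j}\setminus\{i\}$ and $\PA[\G]{i}\setminus\{j\}$ with $\B{S}\subseteq \ND[\G_0]{j}\setminus\{i,j\}$ \emph{and} $\B{S}\subseteq \ND[\G]{i}\setminus\{i,j\}$ (the latter is indispensable, since otherwise the backward noise need not be independent of $(X_j, X_{\B{S}})$ after conditioning) does not follow from acyclicity for an arbitrarily chosen reversed edge; for a generic reversed edge it can simply fail. The pair must be chosen carefully, and this is the content of the paper's central lemma: one prunes nodes that are childless with identical parents in both graphs, takes a node $L$ that is childless in one graph, partitions its neighbors according to their status in the other graph, and then rules out all problematic configurations by a case analysis---using Lemma~\ref{lem:cmc} (which itself requires strictly positive densities and the intersection property of conditional independence) to exclude adjacency discrepancies, and a nontrivial $d$-separation argument in an augmented graph to show that the ``youngest'' relevant neighbor in the other graph must be a reversed-edge child $Y$. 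Only for that specific pair $(L,Y)$ do the two non-descendant inclusions hold simultaneously. Your proposal defers exactly this step to ``topological arguments should guarantee'' and an unspecified induction over differing edges; as it stands, the proof is missing its key lemma, and the one concrete justification you offer for it (skeleton equality from minimality) is refuted by the paper's own Example~\ref{ex:nf}.
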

\begin{proof}
See Appendix \ref{app:proof_mul}.
\end{proof}
Our proof of Theorem~\ref{thm:mul} contains a graphical statement that turns out to be a main argument for proving identifiability for Gaussian models with equal error variances \citep{Peters2014biom}. We thus state it explicitly as a proposition.
\begin{proposition}\label{prop:graph}
Let $\G$ and $\Gp$ be two different DAGs over variables $\X$.
\begin{enumerate}
\item[(i)] Assume that $\lawX$ has a strictly positive density and satisfies the Markov condition and causal minimality with respect to $\G$ and $\Gp$. Then there are variables $L, Y \in \B{X}$ such that for the sets
$\B{Q}:=\PA[\G]{L} \setminus \{Y\}$, $\B{R}:= \PA[\Gp]{Y} \setminus \{L\}$ and $\B{S}:=\B{Q} \cup \B{R}$
we have
\begin{itemize}
\item $Y \rightarrow L$ in $\G$ and $L \rightarrow Y$ in $\Gp$
\item $\B{S} \subseteq \ND[\G]{L} \setminus \{Y\}$ and $\B{S} \subseteq \ND[\Gp]{Y} \setminus \{L\}$ 
\end{itemize}
\item[(ii)] In particular, if $\lawX$ is Markov and faithful with respect to $\G$ and $\Gp$ (i.e., both graphs belong to the same Markov equivalence class), there are variables $L, Y$ such that
\begin{itemize}
\item $Y \rightarrow L$ in $\G$ and $L \rightarrow Y$ in $\Gp$
\item $\PA[\G]{L} \setminus \{Y\}  = \PA[\Gp]{Y} \setminus \{L\}$
\end{itemize}
\end{enumerate}
\end{proposition}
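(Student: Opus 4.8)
The plan is to prove part~(i) first and then derive part~(ii) as the special case where Markov equivalence forces the parent sets to coincide. The starting point is that $\G$ and $\Gp$ are distinct DAGs over the same variables, both compatible with $\lawX$ in the strong sense of causal minimality. Since they differ, there must be at least one edge on which they disagree. The key combinatorial idea I would use is to look at a topological ordering and locate a ``lowest'' point of disagreement: among all pairs of adjacent nodes whose edge orientation differs between $\G$ and $\Gp$, I would select one, call the variables $L$ and $Y$, in a way that makes the candidate non-descendant sets $\B{Q} = \PA[\G]{L} \setminus \{Y\}$ and $\B{R} = \PA[\Gp]{Y} \setminus \{L\}$ behave well. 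The natural normalization is to orient the labeling so that $Y \rightarrow L$ holds in $\G$ while $L \rightarrow Y$ holds in $\Gp$.

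The substantive content is then to verify the non-descendant containments $\B{S} \subseteq \ND[\G]{L} \setminus \{Y\}$ and $\B{S} \subseteq \ND[\Gp]{Y} \setminus \{L\}$, where $\B{S} := \B{Q} \cup \B{R}$. For the parents of $L$ in $\G$, every element of $\B{Q}$ is a parent of $L$ and hence trivially a non-descendant of $L$ in $\G$; symmetrically $\B{R} \subseteq \ND[\Gp]{Y}$. The delicate direction is the \emph{cross} containment: one must show $\B{R} \subseteq \ND[\G]{L}$ and $\B{Q} \subseteq \ND[\Gp]{Y}$. Here I would argue by contradiction using acyclicity. If some node in $\B{R}$ (a parent of $Y$ in $\Gp$) were a descendant of $L$ in $\G$, then, combined with the edge $Y \rightarrow L$ in $\G$ and the acyclicity of $\Gp$ together with the edge $L \rightarrow Y$, one assembles a directed cycle in one of the graphs. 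The existence of such a pair $(L,Y)$ with \emph{no} creation of cycles is exactly what a careful extremal choice (e.g.\ picking $L$ to be maximal in a topological order of $\G$ among the disagreeing edges, or using the acyclicity of both graphs simultaneously) is designed to guarantee. This extremal/cycle-avoidance argument is the step I expect to be the main obstacle, since it must juggle the orderings of two different DAGs at once.

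For part~(ii), I would invoke Lemma~\ref{lemma_immo}: two Markov equivalent DAGs share the same skeleton and the same immoralities, and by Remark~\ref{rem:cmiw} faithfulness implies causal minimality, so part~(i) applies and furnishes $L, Y$ with $Y \rightarrow L$ in $\G$ and $L \rightarrow Y$ in $\Gp$. It remains to upgrade the containments to the equality $\PA[\G]{L} \setminus \{Y\} = \PA[\Gp]{Y} \setminus \{L\}$. The plan is to show that any common neighbor creates or destroys an immorality if the parent sets differ: if $Z \in \PA[\G]{L} \setminus \{Y\}$ but $Z \notin \PA[\Gp]{Y} \setminus \{L\}$, then because $\G$ and $\Gp$ share the skeleton, $Z$ is still adjacent to the relevant node, and tracking the orientation of the edges among $Z$, $L$, and $Y$ across the two graphs would produce an immorality in one graph but not the other, contradicting Lemma~\ref{lemma_immo}. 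Running this argument symmetrically in both directions gives the two inclusions and hence equality.
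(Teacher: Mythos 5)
There is a genuine gap, and it is fatal to your part~(i): your argument is purely graph-theoretic---it never uses the Markov condition, causal minimality, or the strictly positive density---but the statement is simply false at that level of generality, so no extremal choice or cycle-avoidance trick can rescue it. Two problems. First, two distinct DAGs need not contain \emph{any} commonly adjacent pair with reversed orientation (take $\G$ empty and $\Gp$ a single edge $X_1 \rightarrow X_2$), so the set of ``disagreeing orientations'' you propose to optimize over may be empty; only the distributional assumptions exclude this (Markov w.r.t.\ the empty graph gives $X_1 \independent X_2$, contradicting causal minimality w.r.t.\ $\Gp$). Second, even when a reversed edge exists, the cross containments cannot be obtained from acyclicity. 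Concretely, take three variables with $\G: Y \rightarrow L \rightarrow R$ and $\Gp: L \rightarrow Y \leftarrow R$. Both are DAGs, $Y$--$L$ is the unique reversed edge, yet $\B{R} = \PA[\Gp]{Y}\setminus\{L\} = \{R\}$ and $R$ is a descendant of $L$ in $\G$, so the conclusion of (i) fails for the only available pair. Note that no directed cycle can be ``assembled in one of the graphs'': the $\G$-path $L \rightarrow R$ and the $\Gp$-edge $R \rightarrow Y$ live in different graphs and cannot be concatenated, which is exactly why your cycle argument has nothing to grip. What excludes this configuration is distributional: Markov w.r.t.\ $\G$ forces $Y \independent R \given L$, while causal minimality w.r.t.\ $\Gp$ together with positivity (Proposition~\ref{prop:cme} and Lemma~\ref{lem:cmc}, where the intersection property enters) forces $Y \notindependent R \given L$.

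For comparison, the paper's proof makes conditional independence do the heavy lifting throughout: it first strips away nodes that are childless in both graphs with identical parent sets, so the remaining node $L$ is a sink of $\G$, making $\B{S} \subseteq \ND[\G]{L}$ automatic; it then partitions the neighbors of $L$ and invokes Lemma~\ref{lem:cmc} repeatedly---including one delicate $d$-separation argument in an augmented graph---to rule out every configuration except the one where the $\Gp$-youngest $\G$-parent $Y$ of $L$ is a $\Gp$-child of $L$, which is what yields $\B{S} \subseteq \ND[\Gp]{Y}$. Your sketch for part~(ii) is in the right spirit but also incomplete as stated: pure immorality-tracking cannot work in triangles, where there are no immoralities at all, and indeed for Markov equivalent triangles some reversed pairs genuinely violate the equality $\PA[\G]{L}\setminus\{Y\} = \PA[\Gp]{Y}\setminus\{L\}$ (e.g.\ $\G: Y\rightarrow Z,\, Z \rightarrow L,\, Y \rightarrow L$ versus $\Gp: L \rightarrow Y,\, L \rightarrow Z,\, Y \rightarrow Z$, where the pair $(L,Y)$ fails but $(L,Z)$ works). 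The argument goes through only if you use the specific pair delivered by part~(i) \emph{together with} its non-descendant property, which is needed to orient the edge between $Y$ and a node $Z \in \B{Q}$ in $\Gp$; since your part~(i) is unproven, part~(ii) is unsupported as well.
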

\begin{proof}
See Appendix \ref{app:propmain}.
\end{proof}
If the distribution is Markov and faithful with respect to the underlying graph it is known that we can recover the correct Markov equivalence class. \citet{Chickering1995} proves that two graphs within this Markov equivalence class can be transformed into each other by a sequence of so-called covered edge reversals. This result implies part (ii) of the proposition. Part (i) establishes a similar statement when replacing faithfulness by causal minimality.

Although Theorem~\ref{thm:mul} is stated for additive noise models, it can be seen as an example of a more general principle.
\begin{remark}
Theorem~\ref{thm:mul} is not limited to restricted additive noise models. Whenever we have a restriction like Condition~\ref{cond} that ensures identifiability in the bivariate case (Theorem~\ref{thm:biv}), the multivariate version (Theorem~\ref{thm:mul}) remains valid. The proof we provide in the appendix stays exactly the same. The algorithms in Section~\ref{sec:alg}, however, use standard regression methods and therefore rely on the additive noise assumption.
\end{remark}
The result can therefore also be used to prove identifiability of SEMs that are restricted to discrete additive noise models \citep{Peters2011a} or post-nonlinear additive noise models \citep{Zhang2009}. In the latter model class we allow a bijective nonlinear distortion: $X_j = g_j\big(f_j(\PA{j}) + N_j \big)$. These models allow for more complicated functional relationships but are harder to fit from empirical data than the additive noise models considered in this work.

We explicitly state one specific identifiability result 
that we believe to constitute an important model class for applications.
Without giving an identifiability result like Corollary~\ref{cor:new} \citet{Tamada2011} have already used this result for structure learning \citep[see also][]{Tamada2011b}.
Lemma~6 of \citet{Zhang2009} implies that Theorem~\ref{thm:mul} remains valid if we replace Condition~\ref{cond} in Definition~\ref{def:wnn} by the condition that $f_j$ is nonlinear and $\mathcal{L}(N_j)$ is Gaussian. We formulate this as a corollary.
\begin{corollary} \label{cor:new}
\begin{enumerate}
\item[(i)]
Let $\lawX = \law{X_1, \ldots, X_p}$ be generated by an SEM with
\begin{equation} 
X_j = f_{j}(X_{\PA{j}}) + N_j 
\end{equation}
with normally distributed noise variables $N_j \sim \mathcal{N}(0,\sigma_j^2)$ and three times differentiable functions $f_{j}$ that are not linear in any component: denote the parents of $X_j$ by $X_{k_1}, \ldots, X_{k_{\ell}}$, then the function $f_{j}(x_{k_1}, \ldots, x_{k_{a-1}}, \cdot, x_{k_{a+1}}, \ldots, x_{k_{\ell}})$ is assumed to be nonlinear for all $a$ and some $x_{k_1}, \ldots, x_{k_{a-1}},$ $x_{k_{a+1}}, \ldots, x_{k_{\ell}} \in \R^{\ell - 1}$.
\item[(ii)]
As a special case, 
let $\lawX = \law{X_1, \ldots, X_p}$ be generated by an SEM with
\begin{equation} \label{eq:semaddadd}
X_j = \sum_{k \in \PA{j}} f_{j,k}(X_{k}) + N_j 
\end{equation}
with normally distributed noise variables $N_j \sim \mathcal{N}(0,\sigma_j^2)$ and three times differentiable, nonlinear functions $f_{j,k}$. 
\end{enumerate}
\noindent
In both cases (i) and (ii), we can identify the corresponding graph $\G_0$ from the distribution $\lawX$. 

Both statements remain true if the noise distributions for source nodes, i.e., nodes with no parents, are allowed to have a non-Gaussian density with full support on the real line $\R$.
\end{corollary}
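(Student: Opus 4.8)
The plan is to deduce Corollary~\ref{cor:new} from Theorem~\ref{thm:mul} by verifying that, under the stated hypotheses, the SEM is a \emph{restricted additive noise model} in the sense of Definition~\ref{def:wnn}. The key observation is that the paper's own Remark already tells us that Theorem~\ref{thm:mul} survives if we replace Condition~\ref{cond} in Definition~\ref{def:wnn} by \emph{any} bivariate identifiability criterion; and the text explicitly invokes Lemma~6 of \citet{Zhang2009}, which states that for Gaussian noise a nonlinear function always yields an identifiable bivariate additive noise model. So the heart of the argument is purely a bookkeeping reduction: I must check that freezing all but one parent, together with Gaussian noise, leaves us in a situation where the replacement condition (``$f_j$ nonlinear in the relevant argument, $\law{N_j}$ Gaussian'') holds.

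First I would treat case~(i). Fix $j$, a parent $X_i$ of $X_j$, and a conditioning set $\B{S}$ with $\PA{j}\setminus\{i\}\subseteq\B{S}\subseteq\ND{j}\setminus\{i,j\}$. Conditioning on $X_{\B{S}}=x_{\B{S}}$ freezes all coordinates of $f_j$ except the $X_i$-slot, producing a univariate function $x_i\mapsto f_j(x_{\PA{j}\setminus\{i\}},x_i)$. By hypothesis there exist values of the remaining parents for which this restricted function is nonlinear and three times differentiable; I would need to argue that such a freezing value occurs with positive conditional density so that a valid $x_{\B{S}}$ exists, using that $\lawX$ has full support (the noise densities are strictly positive). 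The noise $N_j$ is $\mathcal{N}(0,\sigma_j^2)$ regardless of conditioning, since $N_j$ is independent of all non-descendants. Hence the triple $(f_j(x_{\PA{j}\setminus\{i\}},\cdot),\law{X_i\given X_{\B{S}}=x_{\B{S}}},\law{N_j})$ is a nonlinear-function/Gaussian-noise pair, which by Lemma~6 of \citet{Zhang2009} is identifiable. Case~(ii) is then immediate: when $f_j=\sum_k f_{j,k}(X_k)$ with each $f_{j,k}$ nonlinear, freezing the other parents leaves $f_{j,i}(x_i)$ plus a constant, which is nonlinear in $x_i$, so (ii) is a special instance of (i). Invoking Theorem~\ref{thm:mul} (via its Condition~\ref{cond}-replacement form) then yields identifiability of $\G_0$.

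For the final sentence about source nodes, I would note that the bivariate identifiability criterion is only ever applied to the \emph{effect} node $X_j$ and its noise $N_j$, while the \emph{input} distribution $\law{X_i\given X_{\B{S}}}$ is completely unconstrained in Definition~\ref{def:wnn}. A source node $X_i$ never plays the role of an effect in any such triple—it only ever appears as the input $X_i$—so its marginal may be an arbitrary full-support density without affecting the verification above. I would make this precise by checking that every node $j$ with $\PA{j}\neq\emptyset$ still has Gaussian $N_j$, which is all Lemma~6 of \citet{Zhang2009} requires.

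The main obstacle I anticipate is the genericity/positivity check that a suitable conditioning value $x_{\B{S}}$ with $p_{\B{S}}(x_{\B{S}})>0$ actually realizes the nonlinear slice guaranteed by hypothesis. The hypothesis supplies nonlinearity for \emph{some} freezing of the other parents, but Definition~\ref{def:wnn} demands the existence of an admissible $x_{\B{S}}$ in the \emph{conditional} law over a possibly larger set $\B{S}\supseteq\PA{j}\setminus\{i\}$. I would resolve this by using the strict positivity of the noise densities to argue that $\lawX$ has full support, so that $p_{\B{S}}$ is positive on an open set, and then by continuity of $f_j$ transfer the nonlinearity witnessed at one point to a neighborhood of positive conditional probability. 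Everything else is routine once the reduction to Lemma~6 of \citet{Zhang2009} is in place.
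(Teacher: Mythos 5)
Your overall route is the one the paper itself announces just before the corollary: replace Condition~\ref{cond} in Definition~\ref{def:wnn} by ``$f_j$ nonlinear and $\law{N_j}$ Gaussian'', check that the given SEM satisfies this replacement condition, and invoke Theorem~\ref{thm:mul} through the Remark. Your bookkeeping is essentially right, and in fact simpler than you make it: since all noise densities are strictly positive, the Markov factorization makes the joint density strictly positive \emph{everywhere}, so any $x_{\B{S}}$ extending the freezing value at which nonlinearity is witnessed has $p_{\B{S}}(x_{\B{S}})>0$; because Definition~\ref{def:wnn} is existential in $x_{\B{S}}$, no continuity/neighborhood transfer is needed. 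Your treatment of (ii) as a special case of (i), and of the source-node relaxation (Gaussianity is only ever needed for the noise $N_j$ of nodes that have parents in whichever of the two competing graphs is being used), also matches the paper.

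There are, however, two gaps relative to the paper's own proof. First, you never verify \emph{causal minimality}, which is an explicit hypothesis of Theorem~\ref{thm:mul}; the paper dispatches it in one line via Proposition~\ref{prop:nonconst} (nonlinearity in each component implies non-constancy in each component). Without this step, Theorem~\ref{thm:mul} cannot be invoked at all. Second, and more substantively, the paper's appendix proof does not use Lemma~6 of \citet{Zhang2009} as a black box: it re-runs the proof of Theorem~\ref{thm:mul} to obtain the two conditional equations $L^* = f_L(\B{q},Y^*)+N_L$ and $Y^*=g_Y(\B{r},L^*)+N_Y$, and then appeals to Proposition~\ref{prop:kun}, noting that with $N_L$ Gaussian only case~I can occur, which would force $f_L(\B{q},\cdot)$ to be linear --- a contradiction. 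But Proposition~\ref{prop:kun} carries the proviso that $\frac{d}{dx_1}f(x_1)\cdot\frac{d^2}{dx_2^2}\log p_{N}(x_2)$ vanishes at only finitely many points; for Gaussian noise this means $f'$ may vanish at only finitely many points, which is \emph{not} implied by nonlinearity alone. The paper closes exactly this hole: if $f'$ vanished at some point $n_y$, the Zhang--Hyv\"arinen computation would force $g'\equiv 0$ for the backward function, contradicting its nonlinearity. Your argument inherits whatever regularity proviso Lemma~6 carries, and a careful version of your proof would need this (or an equivalent) applicability check; that is precisely the nontrivial analytic step your reduction hides.
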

\begin{proof}
See Appendix \ref{app:proofcornew}.
\end{proof}

Theorem~\ref{thm:mul} requires the positivity of densities in order to make use of the intersection property of conditional independence. \citet{Petersinters} shows that the intersection property still holds under weaker assumptions. It also discusses fundamental limits of causal inference when positivity is violated.

\subsection{Estimating the Topological Order} \label{sec:toporder}
We now investigate the case when we drop the assumption of causal minimality. Assume therefore that we are given a distribution $\lawX$ from an additive noise model with graph $\G_0$. We cannot recover the correct graph $\G_0$ because we can always add edges $i \rightarrow j$ or remove edges that ``do not have any effect'' without changing the distribution. This is formalized by the following lemma.
\begin{lemma} \label{lem:nocm}
Let $\lawX$ be generated by an additive noise model with graph $\G_0$. 
\begin{enumerate}
\item[(a)] For each supergraph $\G \geq \G_0$ there is an additive noise model that leads to the distribution $\lawX$.
\item[(b)] For each subgraph $\G \leq \G_0$ such that $\lawX$ is Markov with respect to $\G$ there is an additive noise model that leads to the distribution $\lawX$. Furthermore, there is an additive noise model with unique graph $\G_0^{\text{min}} \leq \G_0$ that leads to $\lawX$ and satisfies causal minimality.
\end{enumerate}
\end{lemma}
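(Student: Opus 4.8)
The plan is to treat the two parts separately, with part~(a) amounting to bookkeeping and part~(b) requiring the actual work.

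\textbf{Part (a).} For a DAG supergraph $\G \geq \G_0$ we have $\PA[\G_0]{j} \subseteq \PA[\G]{j}$ for every $j$, since adding edges can only enlarge parent sets. I would keep the original noise variables $N_j$ and define new functions $\tilde f_j$ on the larger argument list that simply ignore the additional coordinates, i.e.\ $\tilde f_j(x_{\PA[\G]{j}}) := f_j(x_{\PA[\G_0]{j}})$. Then $X_j = \tilde f_j(\PA[\G]{j}) + N_j$ is an additive noise model whose graph is a subgraph of $\G$ and which, mechanism by mechanism, coincides with the original model; hence it generates $\lawX$. (Only DAG supergraphs are admissible, as an additive noise model requires an acyclic graph.)

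\textbf{Part (b), existence.} Fix a topological order of $\G_0$; it is also one for $\G$, since $\G \leq \G_0$ has fewer edges. For each $j$ write $\B{D}_j := \PA[\G_0]{j} \setminus \PA[\G]{j}$ for the parents removed in passing to $\G$. Removing edges cannot turn a non-descendant into a descendant, so $\B{D}_j \subseteq \ND[\G]{j} \setminus \PA[\G]{j}$, and the local Markov property of $\lawX$ with respect to $\G$ yields $X_j \independent \B{D}_j \given \PA[\G]{j}$. Moreover, every element of $\PA[\G_0]{j}$ is a non-descendant of $j$ in $\G_0$, so the noise $N_j$ is independent of $(\PA[\G]{j}, \B{D}_j)$ jointly. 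The key step is to deduce that $f_j$ does not, almost surely, depend on the removed coordinates: conditioning on $\PA[\G]{j}=a$ and computing the conditional characteristic function of $X_j = f_j(a,\B{D}_j)+N_j$ gives $\mean[e^{\mathbf{i}t X_j}\given \B{D}_j = d, \PA[\G]{j}=a] = e^{\mathbf{i}t f_j(a,d)}\,\varphi_{N_j}(t)$, where $\varphi_{N_j}$ is the characteristic function of $N_j$ and $\mathbf{i}$ the imaginary unit. Since $\varphi_{N_j}$ is continuous with $\varphi_{N_j}(0)=1$, it is nonzero near $0$, and the conditional independence forces $e^{\mathbf{i}t f_j(a,d)}$ to be independent of $d$ for small $t$, hence $f_j(a,d)=f_j(a,d')$ over the conditional support. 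I would then set $g_j(x_{\PA[\G]{j}}) := \mean[f_j(x_{\PA[\G]{j}},\B{D}_j)\given \PA[\G]{j}=x_{\PA[\G]{j}}]$ (which equals this constant value, giving a measurable function) and $M_j := N_j$. Then $X_j = g_j(\PA[\G]{j}) + M_j$ holds $\lawX$-almost surely, the $M_j$ are jointly independent with strictly positive densities, and an induction along the topological order shows that this additive noise model on $\G$ reproduces $\lawX$.

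\textbf{Part (b), minimality and uniqueness.} I would first record that the original model has strictly positive joint density: the triangular map $(N_1,\dots,N_p)\mapsto(X_1,\dots,X_p)$ has unit Jacobian, so $p_{\X}(x)=\prod_j p_{N_j}(x_j - f_j(x_{\PA[\G_0]{j}}))>0$. Consider the finite, nonempty family $\mathcal{F}$ of subgraphs $\G\leq\G_0$ to which $\lawX$ is Markov. Using the intersection property of conditional independence (valid here by positivity), I would show $\mathcal{F}$ is closed under edgewise intersection, so $\G_0^{\text{min}} := \bigcap_{\G\in\mathcal{F}}\G$ lies in $\mathcal{F}$ and is its least element. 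By the existence part, $\lawX$ is generated by an additive noise model on $\G_0^{\text{min}}$; and since no proper subgraph of $\G_0^{\text{min}}$ is in $\mathcal{F}$, the distribution is Markov to no proper subgraph, i.e.\ causal minimality holds. Any additive noise model $\leq\G_0$ generating $\lawX$ with causal minimality has a graph that is a minimal element of $\mathcal{F}$; as $\G_0^{\text{min}}$ is the unique such element, that graph equals $\G_0^{\text{min}}$, giving uniqueness.

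\textbf{Main obstacle.} The delicate step is the characteristic-function argument in the existence part: converting $X_j \independent \B{D}_j \given \PA[\G]{j}$ into genuine non-dependence of $f_j$ on $\B{D}_j$, while handling measurability and the almost-sure/conditional-support qualifications with care. The uniqueness claim likewise rests essentially on positivity of the density, through the intersection property.
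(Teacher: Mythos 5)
Your proposal is correct, but in part (b) it takes a genuinely different route from the paper's. The paper argues iteratively through Proposition~\ref{prop:nonconst}: Markovianity with respect to a proper subgraph violates causal minimality, hence some $f_j$ is constant in one of its arguments (shown there by a conditional-mean argument plus continuity of $f_j$); that single edge is deleted, the procedure is repeated, and the intersection property is invoked to argue that the order of removals does not matter, which yields $\G_0^{\text{min}}$ and its uniqueness. You instead work node by node and all at once: the local Markov property with respect to $\G$ gives $X_j \independent \B{D}_j \given \PA[\G]{j}$ for the entire set $\B{D}_j$ of deleted parents, and your characteristic-function argument converts this into almost-sure constancy of $f_j$ in those coordinates. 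This buys you two things: you need neither integrability of $N_j$ (the paper's proof of Proposition~\ref{prop:nonconst} takes conditional means, implicitly assuming $\mean\abs{N_j}<\infty$) nor continuity of $f_j$ (used there to pass from a.e.\ to everywhere constancy); and you sidestep a subtlety in the paper's iteration, namely that the removable edge produced by Proposition~\ref{prop:nonconst} need not lie in $\G_0 \setminus \G$, so reaching a \emph{prescribed} Markov subgraph $\G$ really requires the observation (immediate in your setup, obtainable in the paper's via weak union) that every edge of $\G_0\setminus\G$ is individually removable. For minimality and uniqueness, your lattice formulation — the family of Markov subgraphs of $\G_0$ is closed under edgewise intersection, hence has a least element — is a cleaner packaging of the same essential ingredient the paper uses, the intersection property for strictly positive densities; note only that the closure claim still needs a short derivation (weak union, intersection, contraction applied to the ordered Markov property node by node), which you assert rather than carry out, much as the paper asserts order-independence of its removals.
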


\begin{proof}
See Appendix~\ref{app:prooftoporder}.
\end{proof}
Despite this indeterminacy we can still recover the correct order of the variables. Given a permutation $\pi \in S_p$ on $\{1, \ldots, p\}$ we therefore define the fully connected DAG $\G^{\text{full}}_{\pi}$ by the DAG that contains all edges $\pi(i) \rightarrow \pi(j)$ for $i<j$. 
As a direct consequence of Theorem~\ref{thm:mul} and Lemma~\ref{lem:nocm} we can identify the set of true orderings:
\begin{corollary} \label{cor:toporder}
Let $\lawX = \law{X_1, \ldots, X_p}$ be generated by an additive noise model with graph $\G_0$. Assume that the SEM corresponding to the minimal graph $\G^{\text{min}}_0$ defined as in Lemma~\ref{lem:nocm} (b) is a restricted additive noise model. We can then identify the set $\Pi^0$ of true orderings
$$
\Pi^0 := \{\pi \in S_p\ |\ \G_{\pi}^{\text{full}} \geq \G^{\text{min}}_0\}\ .
$$
\end{corollary}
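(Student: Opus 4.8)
The plan is to obtain Corollary~\ref{cor:toporder} as a direct consequence of Theorem~\ref{thm:mul} and Lemma~\ref{lem:nocm}, as the corollary itself advertises. The key observation is that the set $\Pi^0$ is defined purely in terms of the \emph{minimal} graph $\G^{\text{min}}_0$, so the whole task reduces to showing that $\G^{\text{min}}_0$ is identifiable from $\lawX$; once we recover $\G^{\text{min}}_0$, the set $\Pi^0 = \{\pi \in S_p\ |\ \G_{\pi}^{\text{full}} \geq \G^{\text{min}}_0\}$ can be read off deterministically by checking, for each of the finitely many permutations, whether the associated fully connected DAG is a supergraph of $\G^{\text{min}}_0$.

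First I would invoke Lemma~\ref{lem:nocm}(b) to pass from the given additive noise model with graph $\G_0$ to the additive noise model with the unique minimal graph $\G^{\text{min}}_0 \leq \G_0$ that generates the same distribution $\lawX$ and satisfies causal minimality. By the hypothesis of the corollary, this minimal-graph SEM is a \emph{restricted} additive noise model. Since causal minimality holds and the model is a restricted additive noise model, Theorem~\ref{thm:mul} applies directly and tells us that $\G^{\text{min}}_0$ is identifiable from $\lawX$. This is the heart of the argument: identifiability of the minimal graph is precisely what Theorem~\ref{thm:mul} delivers under these assumptions.

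Next I would confirm that recovering $\G^{\text{min}}_0$ really does pin down $\Pi^0$. Because $\Pi^0$ is \emph{defined} as the collection of permutations $\pi$ whose fully connected DAG $\G_{\pi}^{\text{full}}$ dominates $\G^{\text{min}}_0$ in the subgraph order, and because we have just identified $\G^{\text{min}}_0$ uniquely from the distribution, the set $\Pi^0$ is determined by $\lawX$. Concretely, a permutation $\pi$ lies in $\Pi^0$ if and only if every edge of $\G^{\text{min}}_0$ respects the ordering induced by $\pi$, i.e.\ whenever $X_a \rightarrow X_b$ in $\G^{\text{min}}_0$, the index of $a$ precedes that of $b$ in the list $\pi(1), \ldots, \pi(p)$. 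Thus $\Pi^0$ is exactly the set of topological orderings consistent with $\G^{\text{min}}_0$, and it is computable from the identified graph.

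The main obstacle, such as it is, is conceptual rather than technical: one must be careful that the quantity we are claiming to identify, $\Pi^0$, is genuinely a function of the observable distribution and not of the particular (non-minimal) SEM we started with. This is guaranteed by the \emph{uniqueness} clause of Lemma~\ref{lem:nocm}(b), which ensures $\G^{\text{min}}_0$ is well-defined independently of the representation of $\lawX$, combined with the identifiability from Theorem~\ref{thm:mul}. I would also note in passing that the definition of $\Pi^0$ in terms of $\G^{\text{min}}_0$ rather than $\G_0$ is essential: without causal minimality, edges that ``do not have any effect'' (cf.\ Lemma~\ref{lem:nocm}(a)) can be added or removed freely, so only the orderings compatible with the minimal graph are recoverable, and these are exactly the ones captured by $\Pi^0$.
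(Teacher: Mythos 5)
Your proof is correct and follows essentially the same route as the paper's: both rest on Lemma~\ref{lem:nocm} together with the identifiability guarantee of Theorem~\ref{thm:mul} applied to the restricted, causally minimal ANM with graph $\G^{\text{min}}_0$. The only difference is organizational --- you recover $\G^{\text{min}}_0$ first and then read off $\Pi^0$ as the set of topological orderings consistent with it, whereas the paper runs a per-permutation existence test (using Lemma~\ref{lem:nocm}(a) for $\pi \in \Pi^0$ and deriving a contradiction from Theorem~\ref{thm:mul} for $\pi \notin \Pi^0$); the mathematical content is identical.
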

\begin{proof}
See Appendix~\ref{app:prooftoporder2}.
\end{proof}
This result is useful, for example, if the search over structures is performed in the space of permutations rather than in the space of DAGs \citep[e.g.][]{FriedmanKoller:MLJ03,teykol05,BuhlmannPetersErnest2013}.

\section{Algorithms} \label{sec:alg}
The theoretical results do not imply an algorithm for finitely many data that is either computationally or statistically efficient. In this section we propose an algorithm called RESIT that is based on independence-tests and two simple algorithms that make use of an independence score. We prove correctness of RESIT in the population case.

\subsection{Regression with Subsequent Independence Test (RESIT)} \label{sec:icml}
In practice, we are given i.i.d.\ data from the joint distribution and try to estimate the corresponding DAG. The following method is based on the fact that for each node $X_i$ the corresponding noise variable $N_i$ is independent of all non-descendants of $X_i$. In particular, for each sink node $X_i$ we have that $N_i$ is independent of $\X \setminus \{X_i \}$. 
We therefore propose an iterative procedure: in each step we identify and disregard a sink node. This is done by regressing each of the remaining variables on all other remaining variables and measuring the independence between the residuals and those other variables. The variable leading to the least dependent residuals is considered the sink node 
(Algorithm~\ref{alg:icml}, lines $4-13$). This first phase of the procedure yields a causal ordering or a fully connected DAG. In the second phase we visit every node and eliminate incoming edges until the residuals are not independent anymore, 
see Algorithm~\ref{alg:icml}, lines $15-22$.
The procedure can make use of any regression method and dependence measure, in this work we choose the $p$-value of the HSIC independence test \citep{Gretton2008} as a dependence measure. Under independence, \citet{Gretton2008} provide an asymptotically correct null distribution for the test statistic times sample size. (We use moment matching to approximate this distribution by a gamma distribution.) Since under dependence the test statistic is guaranteed to converge to a value different from zero, we know that the $p$-value converges to zero only for dependence. As a regression method we choose linear regression, gam regression (R package \texttt{mgcv}) or Gaussian process regression (R package \texttt{gptk}). 

Algorithm~\ref{alg:icml} is a slightly modified version of the one proposed in \citep{Mooij2009}. In this work, we always want to obtain a graph estimate; we thus consider the node with the least dependent residuals as being the sink node, instead of stopping the search when no independence hypothesis is accepted as in \citep{Mooij2009}.
\begin{algorithm}[h]
\caption{Regression with subsequent independence test (RESIT)}
\label{alg:icml}
\begin{algorithmic}[1]
  \STATE {\bfseries Input:} I.i.d. samples of a $p$-dimensional distribution on $(X_1, \ldots, X_p)$ \vspace{0.0cm}
   \STATE $S:=\{1, \ldots, p\}, \pi := [\ ]$ \vspace{0.15cm}
   \STATE PHASE 1: Determine causal order.
   \REPEAT
   \FOR{$k \in S$}
   \STATE Regress $X_k$ on $\{X_i\}_{i \in S \setminus \{k\}}$.
   \STATE Measure dependence between residuals and $\{X_i\}_{i \in S \setminus \{k\}}$.  
   \ENDFOR
   \STATE Let $k^*$ be the $k$ with the weakest dependence.
   \STATE $S:=S \setminus \{k^*\}$
   \STATE $\mathrm{pa}(k^*):=S$
   \STATE $\pi := [k^*, \pi]\qquad $  ($\pi$ will be the causal order, its last component being a sink)
   \UNTIL{$\#S=1$} \vspace{0.15cm}
   \STATE PHASE 2: Remove superfluous edges.
   \FOR{$k \in \{2, \ldots, p\}$}
   \FOR{$\ell \in \mathrm{pa}(\pi(k))$}
   \STATE Regress $X_{\pi(k)}$ on $\{X_i\}_{i \in \mathrm{pa}(\pi(k)) \setminus \{\ell\}}$.
   \IF{residuals are independent of $\{X_i\}_{i \in \{\pi(1), \ldots, \pi(k-1)\}}$} 
   \STATE $\mathrm{pa}(\pi(k)) := \mathrm{pa}(\pi(k)) \setminus \{\ell \}$ 
   \ENDIF
   \ENDFOR
   \ENDFOR
   \STATE {\bfseries Output:} $(\mathrm{pa}(1), \ldots, \mathrm{pa}(p))$
\end{algorithmic}
\end{algorithm}

Given that we have infinite data, a consistent non-parametric regression method and a perfect independence test (``independence oracle''), RESIT is correct.
\begin{theorem} \label{thm:algo}
Assume $\lawX = \law{X_1, \ldots, X_p}$ is generated by a restricted additive noise model with graph $\G_0$ and assume that $\lawX$ satisfies causal minimality with respect to $\G_0$. Then, RESIT used with a consistent non-parametric regression method and an independence oracle is guaranteed to find the correct graph $\G_0$ from the joint distribution $\lawX$.
\end{theorem}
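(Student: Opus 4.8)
The plan is to prove correctness phase by phase, using throughout that causal minimality forces $\G_0 = \G_0^{\mathrm{min}}$, so that recovering $\G_0^{\mathrm{min}}$ is the same as recovering $\G_0$. The first phase should output an ordering $\pi$ that is a \emph{true} topological ordering, i.e. $\pi \in \Pi^0 = \{\pi \in S_p : \G^{\mathrm{full}}_\pi \geq \G_0^{\mathrm{min}}\}$ in the sense of Corollary~\ref{cor:toporder}; the second phase should then delete exactly the edges of $\G^{\mathrm{full}}_\pi$ that are absent from $\G_0$.

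For Phase 1, I would first record the bookkeeping fact that makes the phase tractable: when a node $k^*$ is removed from the current set $S$ and prepended to $\pi$, the set $S\setminus\{k^*\}$ of not-yet-removed nodes is exactly the set of predecessors of $k^*$ in the \emph{final} ordering $\pi$. Hence the residual tested at that step, namely that of regressing $X_{k^*}$ on $\{X_i\}_{i\in S\setminus\{k^*\}}$, is in the final ordering the residual of $X_{k^*}$ on all of its $\pi$-predecessors. If every such residual is independent of the corresponding predecessors (call the ordering \emph{compatible}), then writing $X_{\pi(j)} = g_j(X_{\pi(1)},\ldots,X_{\pi(j-1)}) + R_j$ defines an additive noise model with the fully connected DAG $\G^{\mathrm{full}}_\pi$: each $R_j$ is independent of $(X_{\pi(1)},\ldots,X_{\pi(j-1)})$ and therefore of $(R_1,\ldots,R_{j-1})$, since each earlier $R_i$ is a function of $X_{\pi(1)},\ldots,X_{\pi(i)}$; by telescoping the $R_j$ are jointly independent and the model reproduces $\lawX$. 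By the combination of Theorem~\ref{thm:mul} and Lemma~\ref{lem:nocm} that underlies Corollary~\ref{cor:toporder}, an additive noise model for $\lawX$ with graph $\G^{\mathrm{full}}_\pi$ can exist only if $\pi \in \Pi^0$; this gives $\G^{\mathrm{full}}_\pi \geq \G_0$ and completes Phase 1.

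What remains, and what I expect to be the main obstacle, is to show that the ordering produced is actually compatible, i.e. that at every iteration the least-dependent node selected has residuals that are genuinely independent (dependence zero under the oracle), not merely minimally dependent. The easy half is that a sink of the sub-model is always available: maintaining the invariant that the current $S$ is ancestral in $\G_0$, the marginal $X_S$ is itself a restricted additive noise model with graph $\G_0|_S$ satisfying causal minimality, and for any sink $t$ of $\G_0|_S$ the noise $N_t$ is independent of all non-descendants, hence of $X_{S\setminus\{t\}}$; so regression returns $N_t$ and the oracle reports independence. The delicate half is the converse, that \emph{only} sinks have independent residuals, which is exactly what keeps the invariant ancestral: if a non-sink $X_k$ admitted $X_k = g(X_{S\setminus\{k\}}) + R$ with $R \independent X_{S\setminus\{k\}}$, one must contradict identifiability. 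Here I would condition on a suitable set, exploiting that the joint independence $R \independent X_{S\setminus\{k\}}$ entails the associated conditional independences, so as to expose along some true edge $Y\to L$ of $\G_0|_S$ a backward bivariate additive-noise representation that Condition~\ref{cond} (Theorem~\ref{thm:biv}) forbids; selecting the edge and conditioning set that simultaneously avoid descendants in both directions is precisely the content of Proposition~\ref{prop:graph}, which can be reused for this step.

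Finally, Phase 2 cleans $\G^{\mathrm{full}}_\pi$ down to $\G_0$. Since $\pi$ is a true ordering, $\PA[\G_0]{\pi(k)} \subseteq \{\pi(1),\ldots,\pi(k-1)\}$ for each $k$, and the algorithm tests residual independence against \emph{all} predecessors. Dropping a non-parent leaves a regressor set still containing $\PA[\G_0]{\pi(k)}$, so the residual is still the noise $N_{\pi(k)}$, which is independent of all predecessors (they are non-descendants); the test passes and the edge is removed. Dropping a true parent $\ell$ leaves $f_{\pi(k)}$ non-constant in the missing argument, so by causal minimality (Proposition~\ref{prop:nonconst}, equivalently Proposition~\ref{prop:cme}) the residual stays dependent on $X_\ell$, a predecessor; the test fails and the edge is kept. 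Because the test always conditions on the fixed full predecessor set, its outcome depends only on whether the current regressor set still contains $\PA[\G_0]{\pi(k)}$; hence the greedy removals terminate with parent set exactly $\PA[\G_0]{\pi(k)}$ irrespective of the order in which edges are examined, and the output graph is $\G_0$.
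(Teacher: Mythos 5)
Your architecture is the same as the paper's (Phase 1: a node passes the oracle test if and only if it is a sink of the current induced subgraph, maintained inductively via the ancestral-set invariant; Phase 2: an edge is removed if and only if it is superfluous), and your easy half of Phase 1 and most of Phase 2 are sound. But the step you yourself flag as the main obstacle --- that a non-sink $X_k$ cannot satisfy $X_k = g(X_{\B{S}\setminus\{k\}}) + R$ with $R \independent X_{\B{S}\setminus\{k\}}$ --- is not actually proved, and the tool you propose, Proposition~\ref{prop:graph}, cannot be ``reused'' as a black box for it. Proposition~\ref{prop:graph} requires $\lawX$ to be Markov \emph{and causally minimal} with respect to both DAGs. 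The hypothetical graph encoding the false independence (all of $\B{S}\setminus\{k\}$ pointing into $k$, completed by some DAG on $\B{S}\setminus\{k\}$, e.g., via Proposition~\ref{prop:sur}) need not satisfy causal minimality; and even if you pass to its minimal subgraph, Proposition~\ref{prop:graph} only returns \emph{some} edge oriented differently in the two graphs --- nothing forces that edge to point into $k$. That localization matters, because $k$ is the only node of the hypothetical model for which a backward \emph{additive-noise} equation is available; the equations of the other nodes (coming from Proposition~\ref{prop:sur}) are general SEM equations, so a reversed edge between two nodes of $\B{S}\setminus\{k\}$ yields no contradiction with Theorem~\ref{thm:biv}. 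The paper supplies exactly the construction you are missing: take $Z$ to be the $\G_0$-youngest child of the falsely chosen node $Y$, set $\B{S} := \B{X}\setminus(\{Y,Z\}\cup \DE[\G_0]{Z})$, prove by $d$-separation (this is where ``youngest'' is used) that $\DE[\G_0]{Z} \independent Y \given \B{S}\cup\{Z\}$, use this conditional independence to convert the residual independence from the full regression into $Y = g_Y(\B{S},Z)+\tilde N_Y$ with $\tilde N_Y \independent (\B{S},Z)$, and only then condition on $\B{S}=\B{s}$ to obtain a forward model for $Z$ (from $\G_0$) and a backward model for $Y$, contradicting Theorem~\ref{thm:biv} via the restricted-ANM condition.

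Two smaller points. First, your opening reduction (``compatible ordering $\Rightarrow \pi \in \Pi^0$'') over-claims what Theorem~\ref{thm:mul} and Corollary~\ref{cor:toporder} state: both compare \emph{restricted} additive noise models, whereas the ANM you build from the regression functions has no reason to satisfy Condition~\ref{cond}; the one-sided variant you need is true but would itself require proof (and the whole step becomes redundant once only-sinks-pass is established, since then the output ordering is topological for $\G_0$ directly). Second, in Phase 2 the failure of the test after dropping a true parent $\ell$ needs Lemma~\ref{lem:cmc}, not merely Proposition~\ref{prop:cme}: the remaining regressor set can be a strict superset of $\PA[\G_0]{\pi(k)}\setminus\{\ell\}$, and it is the intersection property (positive density) that upgrades minimality to dependence given such larger conditioning sets --- exactly the role Lemma~\ref{lem:cmc} plays in the paper's one-line treatment of Phase 2.
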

\begin{proof}
See Appendix~\ref{app:proofalgo}
\end{proof}
RESIT performs $\mathcal{O}(p^2)$ independence tests, which is polynomial in the number of nodes. In phase 2 of the algorithm, superfluous edges are removed by variable selection. This is performed $\mathcal{O}(p)$ times. 
Both the independence test and the variable selection method may scale with the sample size, of course.
RESIT's polynomial behavior in $p$ may come as a surprise since problems in Bayesian network learning are often NP-hard \citep[e.g.][]{Chickering1996}.
Despite this theoretical guarantee, RESIT does not scale well to a high number of nodes. 
Since we cannot make use of an independence oracle in practice, we have to detect dependence between a random variable and a random vector from finitely many data. For high dimensions, this is a statistically hard problem that requires huge sample sizes. 

\subsection{Independence-Based Score} \label{sec:indbscore}
Searching for sink nodes makes the method described in Section~\ref{sec:icml} inherently asymmetric. Mistakes made in the first iterations propagate through the whole procedure. We therefore investigate the performance of independence-based score methods. Theorem~\ref{thm:mul} ensures that if the data come from a restricted additive noise model we can fit only one structure to the data. In order to estimate the graph structure we can test all possible DAGs and determine which DAG yields the most independent residuals. But even in the limit of infinitely many data we may find more than one DAG satisfying this constraint, some of which may not satisfy causal minimality.
We therefore propose to take a penalized independence score
\begin{equation} \label{eq:pis}
  \hat \G = \argmin_{\G} \sum_{i=1}^p \mathrm{DM}(\mathrm{res}_i^{\G,\mathrm{RM}}, \mathrm{res}_{-i}^{\G,\mathrm{RM}}) + \lambda \,\#(\text{edges})\,.
\end{equation}
Here, $\mathrm{res}_{i}$ are the residuals of node $X_i$, when regressing it on its parents; they depend on the graph $\G$ and on the regression method $\mathrm{RM}$. We denote the residuals of all variables except for $X_i$ by $\mathrm{res}_{-i}$ and $\mathrm{DM}$ denotes a measure of dependence. 
Note that variables $\B{N} = (N_1, \ldots, N_p)$ are jointly independent if and only if each $N_i$ is independent of $\B{N} \setminus \{N_i\}$, $i=1, \ldots, p$.
We do not prove (or claim) that the minimizer of~\eqref{eq:pis} is a consistent estimator for the correct DAG; we expect this to depend on the choice of $\mathrm{DM}$ and $\mathrm{RM}$ and $\lambda$.

As dependence measure we use {minus the logarithm of the $p$-values of an independence test based on the Hilbert Schmidt Independence Criterion HSIC} \citep{Gretton2008}. As regression methods we use linear regression, generalized additive models (gam) or Gaussian process regression.
For the regularization parameter $\lambda$ we propose to use $\log(0.05) - \log(0.01)$. This is a heuristic choice that is based on the following idea: we only allow for an additional edge if it allows the $p$-value to increase from $0.01$ to $0.05$ or, equivalently, by a factor of five. 
In practice, $p$-values estimated by bootstrap techniques or $p$-values that are smaller than computer precision can become zero and the logarithm becomes minus infinity. We therefore always consider the maximum of the computed $p$-value and $10^{-350}$. Although our choices seem to work well in practice, we do not claim that they are optimal.

\subsubsection{Brute-Force} \label{sec:bf}
For small graphs, we can solve equation~\eqref{eq:pis} by computing the score for all possible DAGs and choose the DAG with the lowest score.
Since the number of DAGs grows hyper-exponentially in the number of nodes, this method becomes quickly computationally intractable; e.g., for $p=7$, there are $1,138,779,265$ DAGs \citep{OEIS}. Nevertheless, we use this algorithm up to $p=4$ for comparison.


\subsubsection{Greedy DAG Search (GDS)} \label{sec:gds}
A strategy to circumvent the computational complexity of equation~\eqref{eq:pis} is to use greedy search algorithms \citep[e.g.,][]{Chickering2002}. At each step we are given a current DAG and score neighboring DAGs that are arranged in some order (see below). Here, all DAGs are called neighbors that can be reached by an edge reversal, addition or removal. Whenever a DAG has a better score than the current DAG, we stop scoring other neighbors and exchange the latter by the former. 
To obtain ``better'' steps, in each step we consider at least $p$ neighbors.
In order to reduce the running time of the algorithm, we do not score neighboring DAGs in a completely random order but start by adding or removing edges into nodes whose residuals are highly dependent on the other residuals instead. More precisely, we are randomly sorting the nodes, choosing each node one by one with a probability proportional to the reciprocal dependence measure of its residuals.
If all neighboring DAGs have a worse score than the current graph $G$, we nevertheless consider the best neighbor $H$. If $H$ has a neighbor with a better score than $G$, we continue with this graph. Otherwise we stop and output $G$ as the optimal graph. 
This is a simple version of tabu search \citep[e.g.][]{Koller2009} that is used to avoid local optima. 
This method is not guaranteed to find the best scoring graph. 

Code for the proposed methods is provided on the first and second authors' homepages.

\section{Experiments} \label{sec:exp}

\subsection{Experiments on Synthetic Data}
For varying sample size $n$ and number of variables $p$ we compare the described methods.  
Given a value of $p$, we randomly choose an ordering of the variables with respect to the uniform distribution and include each of the $p(p-1)/2$ possible edges with a probability of $2/(p-1)$. This results in an expected number of $p$ edges and can be considered as a (modestly) sparse setting.
For a linear and a nonlinear setting we report the average structural Hamming distance \citep{Acid2003, Tsamardinos2006} 
to the true directed acyclic graph and to the true completed partially directed acyclic graph over $100$ simulations.
The structural Hamming distance (SHD) between two partially directed acyclic graphs
counts how many edge types do not coincide. Estimating a non-edge or a directed edge instead of an undirected edge, for example, contributes an error of one to the overall distance.
We also report analogous results for the structural intervention distance (SID), which has recently been proposed \citep{Peters2013sid}. 
Given the estimated graph we can infer the intervention distribution $p(X_j\given do(X_i=x_i))$ by the parent adjustment \eqref{eq:parentadj}.
We call a pair of nodes $(X_i,X_j)$ \emph{good} if the intervention distribution $p(X_j\given do(X_i=x_i))$ inferred from the estimated DAG coincides with the intervention distribution inferred from the correct DAG for all observational distributions $\lawX$. 
The SID counts the number of pairs that are not good. Some methods output a Markov equivalence class instead of a single DAG. Different DAGs within such a class lead to  different intervention distribution and thus different SIDs. In that case, we therefore provide the smallest and largest SID attained by members within the Markov equivalence class. 
As the SHD, the SID is a purely structural measure that is independent of any distribution. The rationale behind the new measure is that a reversed edge in the estimated DAG leads to more false causal effects than an additional edge does. The SHD, however, weights both errors equally.

We compare the 
greedy DAG search (GDS), brute-force (BF), regression with subsequent independence test (RESIT), linear non-Gaussian additive models (LINGAM), the PC algorithm (PC) with partial correlation and significance level $0.01$ and greedy equivalence search (GES), see Sections~\ref{sec:gds}, \ref{sec:bf}, \ref{sec:icml}, \ref{sec:lin}, \ref{sec:ibm} and \ref{sec:scorebm}, respectively. We also compare them with the conservative PC algorithm (CPC), suggested by \citet{Ramsey2006}, and random guessing (RAND). The latter chooses a random DAG with edge inclusion probability uniformly chosen between zero and one. Its estimate does not depend on the data.

\subsubsection{Linear Structural Equation Models}
We first consider a linear setting as in equation~\eqref{eq:lingam}, where the coefficients $\beta_{jk}$ are uniformly chosen from $[-2,-0.1] \cup [0.1,2]$ and the noise variables $N_j$ are independent and distributed according to $K_j \cdot \mathrm{sign}(M_j)\cdot |M_j|^{\alpha_j}$ with $M_j \iid \mathcal{N}(0,1)$, $K_j \iid \mathcal{U}([0.1,0.5])$ and $\alpha_j \iid \mathcal{U}([2,4])$. 
The top box plot in Figure~\ref{fig:linear} compares the SHD of the estimated structure to the correct DAG for $p=4$ and $n=100$. The brute-force method performs best, which indicates that the score function in equation~\eqref{eq:pis} is a sensible choice for small graphs. Greedy DAG search performs almost equally well, it does not encounter many local optima in this setting.
The constraint-based methods and greedy equivalent search perform worse.
Comparing SID leads to the same conclusion (Figure~\ref{fig:linear}, bottom).
\begin{figure}[h]
\begin{center}
\includegraphics[width=0.8\textwidth]{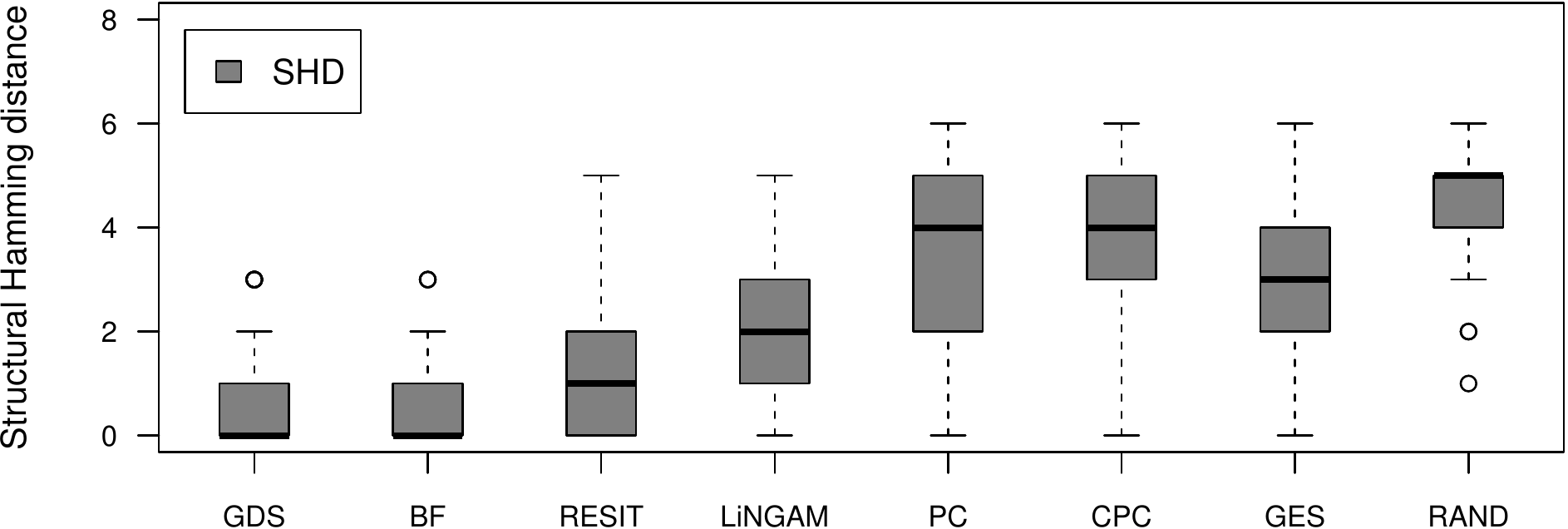}
\includegraphics[width=0.8\textwidth]{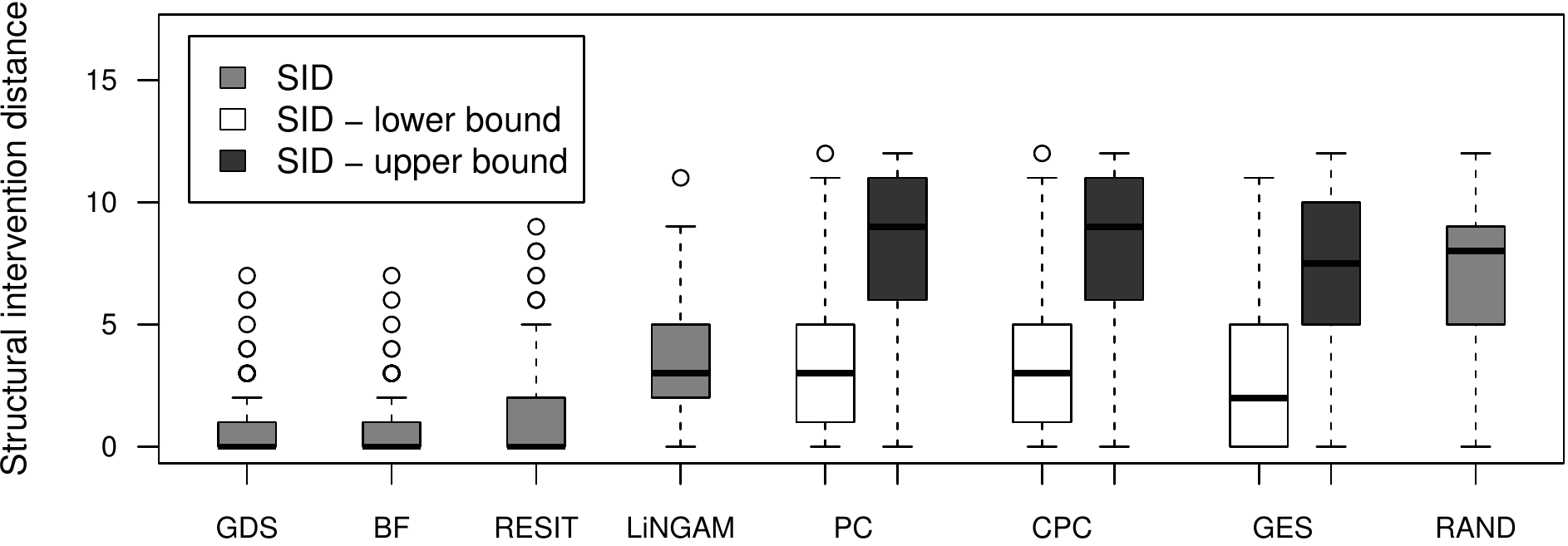}
\end{center}
\caption{Box plots of the SHD between the estimated structure (either DAG or CPDAG) and the correct DAG for $p=4$ and $n=100$ for linear non-Gaussian SEMs (top). The SID is computed between the correct DAG and the estimated DAG (bottom). Some methods estimate only the Markov equivalence class. We then compute the SID to the ``best'' and to the ``worst'' DAG within the equivalence class; therefore a lower and an upper bound are shown.}
\label{fig:linear}
\end{figure}

Tables~\ref{tab:shd} and~\ref{tab:sid} provide summaries for $p \in \{4,15\}$ and $n \in \{100,500\}$. We additionally show distances of the estimated CPDAGs to the true CPDAGs. Therefore, if methods output a DAG instead of a CPDAG, this DAG is transformed into the CPDAG of the corresponding Markov equivalence class.
For $p=4$ and $n=500$, GDS and brute force find almost always the correct graph ($86$ and $90$ out of $100$). RESIT and LiNGAM still perform much better than the PC methods and GES.
For $p=15$, the performance of RESIT (and GES) in relation to the other methods seems to be better when evaluating SID compared to evaluating the SHD. This indicates that the pruning (and penalization of the number of edges) does not work perfectly.
The brute-force method is not applicable to $p=15$.
\begin{table}[h]
\caption{Linear SEMs: SHD between the estimated structure and the correct DAG and SHD between the estimated CPDAG to the correct CPDAG; 
for both the average and the standard deviation over $100$ experiments are shown (best averages are highlighted).
 } 
 
\begin{center}
{\scriptsize
\begin{tabular}{c||c|c|c|c|c|c|c|c} 
& GDS & BF & RESIT & LiNGAM & PC & CPC & GES & RAND \\ \hline \hline
&\multicolumn{8}{c}{$p=4, n=100$}\\ \hline 
DAG & $0.7 \pm 0.9$& $0.6 \pm 0.8$ \cellcolor{lightgray} &$ 1.2 \pm 1.3$&$ 1.9 \pm 1.2$&$ 3.5  \pm 1.5$&$ 3.6  \pm 1.4$&$ 3.1 \pm 1.7$&$ 4.4 \pm 1.0$ \\ 
\hhline{-|-|-|-|-|-|-|-|-}
CPDAG & $1.1 \pm 1.5 $&$ 0.9 \pm 1.4 $ \cellcolor{lightgray}&$ 1.5  \pm 1.7$&$ 2.4  \pm 1.5$&$ 2.4  \pm 1.7$&$ 2.3  \pm 1.6$&$ 2.0  \pm 2.0$&$ 4.3 \pm 1.4$ \\ 
\hline\hline 
&\multicolumn{8}{c}{$p=4, n=500$}\\ \hline
DAG & $0.2  \pm 0.6$&$ 0.1  \pm 0.3$ \cellcolor{lightgray}&$ 0.6  \pm 0.8$&$ 0.5  \pm 0.8$&$ 3.1  \pm 1.4$&$ 3.2  \pm 1.4$&$ 2.9  \pm 1.6$&$ 4.1  \pm 1.2$ \\ 
\hline
                           CPDAG & $0.3  \pm 0.9$&$ 0.2  \pm 0.5$ \cellcolor{lightgray}&$ 0.9  \pm 1.3$&$ 0.8  \pm 1.2$&$ 1.9  \pm 1.8$&$ 1.6  \pm 1.7$&$ 1.6  \pm 1.9$&$ 3.9 \pm 1.4$\\ 
\hline\hline
&\multicolumn{8}{c}{$p=15, n=100$}\\\hline
  DAG & $12.2 \pm 5.3$&$ - $&$ 25.2 \pm 8.3 $&$ 11.1  \pm 3.7$ \cellcolor{lightgray}&$ 13.0  \pm 3.6$&$ 13.7  \pm 3.7$&$ 12.7  \pm 4.2$&$ 57.4 \pm 26.4$ \\ 
\hline
                              CPDAG                     & $13.2  \pm 5.4$&$ - $&$ 27.0  \pm 8.5$&$ 12.4  \pm 3.9$ &$ 10.7 \pm 3.5 $\cellcolor{lightgray}&$ 10.8  \pm 3.8$&$ 12.4  \pm 4.9$&$ 58.5 \pm 27.1$\\ 
\hline\hline
&\multicolumn{8}{c}{$p=15, n=500$}\\ \hline
  DAG      & $6.1 \pm 6.4 $&$ - $&$ 51.2  \pm 17.8$&$ 3.4 \pm 2.8$ \cellcolor{lightgray}&$ 10.2  \pm 3.8$&$ 10.8 \pm 4.2 $&$ 8.7 \pm 4.6 $&$ 57.6 \pm 24.2$\\ 
\hline
                              CPDAG & $6.8 \pm 6.9$&$ - $&$ 54.5  \pm 18.5$&$ 4.5  \pm 3.8$ \cellcolor{lightgray}&$ 8.2  \pm 4.6$&$ 7.5 \pm 4.4 $&$ 7.1 \pm 5.6 $&$ 58.9 \pm 25.0$
\end{tabular}
}
\label{tab:shd}
\end{center}
\end{table}

\begin{table}[h]
\caption{Linear SEMs: SID to the correct DAG; the table shows average and standard deviation over $100$ experiments.} 
\begin{center}
{\scriptsize
\begin{tabular}{c|c|c|c|c|c|c|c} 
GDS & BF & RESIT & LiNGAM & PC & CPC & GES & RAND \\ \hline \hline
\multicolumn{8}{c}{$p=4, n=100$}\\ \hline
\multirow{2}{*}{$1.0 \pm 1.5$}&\cellcolor{lightgray}&\multirow{2}{*}{$1.5 \pm 2.2$}&\multirow{2}{*}{$3.3 \pm 2.1$}&$3.4 \pm 2.9$&$3.2 \pm 2.7$&$2.9 \pm 3.3$ & \multirow{2}{*}{$7.0 \pm 2.8$} \\ 
&\multirow{-2}{*}{$0.8 \pm 1.4$} \cellcolor{lightgray}&&&$8.0 \pm 3.2$&$8.5 \pm 3.2$&$7.2 \pm 3.5$& \\ 
\hline \hline
\multicolumn{8}{c}{$p=4, n=500$}\\ \hline
\multirow{2}{*}{$0.2 \pm 0.7$}&\cellcolor{lightgray}&\multirow{2}{*}{$0.3 \pm 1.0$}&\multirow{2}{*}{$0.9 \pm 1.4$}&$2.8 \pm 3.1$&$2.3 \pm 2.7$&$2.1 \pm 2.9$ & \multirow{2}{*}{$6.3 \pm 2.8$} \\ 
&\multirow{-2}{*}{$0.1 \pm 0.4$}\cellcolor{lightgray}&&&$7.4 \pm 3.4$&$7.6 \pm 3.3$&$6.9 \pm 3.6$&\\ 
\hline \hline
\multicolumn{8}{c}{$p=15, n=100$}\\ \hline
\cellcolor{lightgray}&\multirow{2}{*}{$-$}&\multirow{2}{*}{$35.3 \pm 21.2$}&\multirow{2}{*}{$45.1 \pm 24.1$}&$36.5 \pm 21.3$&$32.5 \pm 20.2$&$26.5 \pm 18.3$ \cellcolor{lightgray}& \multirow{2}{*}{$55.6 \pm 27.1$}\\
\multirow{-2}{*}{$32.3 \pm 24.1$}\cellcolor{lightgray}&&&&$63.7 \pm 30.3$&$66.4 \pm 31.5$&$37.6 \pm 20.6$ \cellcolor{lightgray}& \\ 
\hline \hline
\multicolumn{8}{c}{$p=15, n=500$}\\ \hline
\cellcolor{lightgray}&\multirow{2}{*}{$-$}&\multirow{2}{*}{$18.1 \pm 13.8$}&\multirow{2}{*}{$14.2 \pm 14.6$}&$33.6 \pm 29.5$&$23.2 \pm 19.8$&$18.1 \pm 21.4$ & \multirow{2}{*}{$57.5 \pm 34.1$}\\
\multirow{-2}{*}{$12.6 \pm 16.3$}\cellcolor{lightgray}&&&&$55.0 \pm 32.9$&$55.6 \pm 32.4$&$31.6 \pm 22.2$&
\end{tabular}
}
\label{tab:sid}
\end{center}
\end{table}

\subsubsection{Nonlinear Structural Equation Models}
We also sample data from nonlinear SEMs. We choose an additive structure as in equation~\eqref{eq:semaddadd} and sample the functions from a Gaussian process with bandwidth one.
The noise variables $N_j$ are independent and normally distributed with a uniformly chosen variance. 
Tables~\ref{tab:shdnl} and~\ref{tab:sidnl}
show summaries for $p \in \{4,15\}$ and $n \in \{100,500\}$. 
We cannot run the brute-force method on data sets with $p=15$. 
For $p=4$, we have a similar situation as in Figure~\ref{fig:linear} with GDS and the BF method outperforming all others (RESIT performing a bit worse). Remarkably, for $p=15$ and $n=100$, a lot of the methods do not perform much better than random guessing when comparing the SID. The estimated CPDAG of the constraint-based methods can have very different lower and upper bounds for SID. This means that some DAGs within the equivalence class perform much better than others. (The methods do not propose any particular DAG, they treat all DAGs within the class equally.) 
\begin{table}[t]
\caption{Nonlinear SEMs: SHD between the estimated structure and the correct DAG and SHD between the estimated CPDAG to the correct CPDAG; 
for both the average and the standard deviation over $100$ experiments are shown. 
 } 
 
\begin{center}
{\scriptsize
\begin{tabular}{c||c|c|c|c|c|c|c|c} 
& GDS & BF & RESIT & LiNGAM & PC & CPC & GES & RAND \\ \hline \hline
&\multicolumn{8}{c}{$p=4, n=100$}\\ \hline
DAG & $1.5 \pm 1.4$&$ 1.0 \pm 1.0 $\cellcolor{lightgray}&$ 1.7  \pm 1.3$&$ 3.5 \pm 1.2 $&$ 3.5 \pm 1.5 $&$ 3.8 \pm 1.4 $&$ 3.5 \pm 1.3 $&$ 4.0 \pm 1.3$\\ 
\hhline{-|-|-|-|-|-|-|-|-}
                              CPDAG &                    $1.7 \pm 1.7 $&$ 1.2 \pm 1.4 $\cellcolor{lightgray}&$ 2.0 \pm 1.6 $&$ 3.0 \pm 1.4 $&$ 2.9 \pm 1.5 $&$ 2.7 \pm 1.4 $&$ 3.4 \pm 1.7 $&$ 3.9 \pm 1.4$ \\ 
\hline\hline 
&\multicolumn{8}{c}{$p=4, n=500$}\\ \hline                              DAG &                         $0.5 \pm 0.9$&$ 0.3 \pm 0.5 $\cellcolor{lightgray}&$ 0.8 \pm 0.9 $&$ 3.7 \pm 1.2 $&$ 3.5 \pm 1.5 $&$ 3.8 \pm 1.5 $&$ 3.3 \pm 1.5 $&$ 4.1 \pm 1.2$ \\ \hhline{-|-|-|-|-|-|-|-|-}
                              CPDAG &                    $0.6 \pm 1.1 $\cellcolor{lightgray}&$ 0.6 \pm 1.0 $\cellcolor{lightgray}&$ 1.0 \pm 1.3 $&$ 3.0 \pm 1.7 $&$ 3.1 \pm 1.9 $&$ 2.8 \pm 1.8 $&$ 3.4 \pm 1.9 $&$ 3.8 \pm 1.6$ \\ 
\hline\hline 
&\multicolumn{8}{c}{$p=15, n=100$}\\ \hline                              DAG& $14.3 \pm 4.9$&$ - $&$ 15.4 \pm 5.7$&$ 15.4 \pm 3.6$&$ 14.2 \pm 3.5 $\cellcolor{lightgray}&$ 15.5 \pm 3.6 $&$ 24.8 \pm 6.3 $&$ 56.8 \pm 24.1$ \\ 
\hhline{-|-|-|-|-|-|-|-|-}
                              CPDAG                 & $15.1 \pm 5.4 $&$ - $&$ 16.5 \pm 5.9 $&$ 15.3 \pm 4.0 $&$ 13.3 \pm 3.6 $\cellcolor{lightgray}&$ 13.3  \pm 4.0$ \cellcolor{lightgray}&$ 26.4  \pm 6.5$&$ 58.0 \pm 24.7$\\ 
\hline\hline 
&\multicolumn{8}{c}{$p=15, n=500$}\\ \hline
 DAG      & $13.0 \pm 8.4 $&$ - $&$ 10.1 \pm 5.7 $\cellcolor{lightgray}&$ 21.4 \pm 6.9 $&$ 13.9 \pm 4.5 $&$ 15.1 \pm 4.8 $&$ 26.8 \pm 8.5 $&$ 56.1 \pm 26.8$\\ \hhline{-|-|-|-|-|-|-|-|-}
                              CPDAG & $14.2 \pm 9.2 $&$ - $&$ 11.3 \pm 6.3 $\cellcolor{lightgray}&$ 21.1 \pm 7.3 $&$ 13.7 \pm 4.9 $&$ 13.4 \pm 5.1 $&$ 28.6 \pm 8.8 $&$ 57.0 \pm 27.3$\\ 
\end{tabular}
}
\label{tab:shdnl}
\end{center}
\end{table}
\begin{table}[t]
\caption{Nonlinear SEMs: SID to the correct DAG; the table shows average and standard deviation over $100$ experiments.}  
\begin{center}
{\scriptsize
\begin{tabular}{c|c|c|c|c|c|c|c} 
GDS & BF & RESIT & LiNGAM & PC & CPC & GES & RAND \\ \hline \hline
\multicolumn{8}{c}{$p=4, n=100$}\\ \hline
\multirow{2}{*}{$2.0 \pm 2.5$}&\cellcolor{lightgray}&\multirow{2}{*}{$2.0 \pm 1.9$}&\multirow{2}{*}{$8.2 \pm 2.8$}&$4.7 \pm 3.2$&$4.3 \pm 2.7$&$4.7 \pm 3.2$ & \multirow{2}{*}{$6.3 \pm 3.1$}\\ 
&\multirow{-2}{*}{$1.4 \pm 1.7$}\cellcolor{lightgray}&&&$7.8 \pm 3.4$&$8.5 \pm 3.2$&$7.2 \pm 3.2$&\\ 
\hline \hline
\multicolumn{8}{c}{$p=4, n=500$}\\ \hline
\multirow{2}{*}{$0.6 \pm 1.8$}&\cellcolor{lightgray}&\multirow{2}{*}{$0.9 \pm 1.3$}&\multirow{2}{*}{$8.0 \pm 2.8$}&$4.3 \pm 3.7$&$3.7 \pm 3.3$&$3.6 \pm 3.0$ & \multirow{2}{*}{$6.6 \pm 3.4$}\\ 
&\multirow{-2}{*}{$0.2 \pm 0.8$}\cellcolor{lightgray}&&&$7.3 \pm 3.2$&$8.1 \pm 3.2$&$6.5 \pm 3.3$&\\ 
\hline \hline
\multicolumn{8}{c}{$p=15, n=100$}\\ \hline
\multirow{2}{*}{$50.6 \pm 25.3$}&\multirow{2}{*}{$-$}&\cellcolor{lightgray}&\multirow{2}{*}{$65.0 \pm 28.3$}&$49.7 \pm 24.6$&$40.4 \pm 21.6$\cellcolor{lightgray}&$49.0 \pm 27.3$ & \multirow{2}{*}{$60.0 \pm 29.9$}\\
&&\multirow{-2}{*}{$44.4 \pm 23.9$}\cellcolor{lightgray}&&$68.6 \pm 31.5$&$76.7 \pm 32.8$\cellcolor{lightgray}&$53.6 \pm 28.9$& \\ 
\hline \hline
\multicolumn{8}{c}{$p=15, n=500$}\\ \hline
\multirow{2}{*}{$35.9 \pm 26.8$}&\multirow{2}{*}{$-$}&\cellcolor{lightgray}&\multirow{2}{*}{$67.3 \pm 28.1$}&$49.9 \pm 29.0$&$36.4 \pm 22.1$&$40.2 \pm 23.3$ & \multirow{2}{*}{$58.9 \pm 27.8$}\\
&&\multirow{-2}{*}{$24.6 \pm 18.6$}\cellcolor{lightgray}&&$60.3 \pm 31.0$&$70.3 \pm 34.6$&$44.6 \pm 24.0$&
\end{tabular}
}
\label{tab:sidnl}
\end{center}
\end{table}
\begin{figure}[h]
\begin{center}
\includegraphics[width=0.8\textwidth]{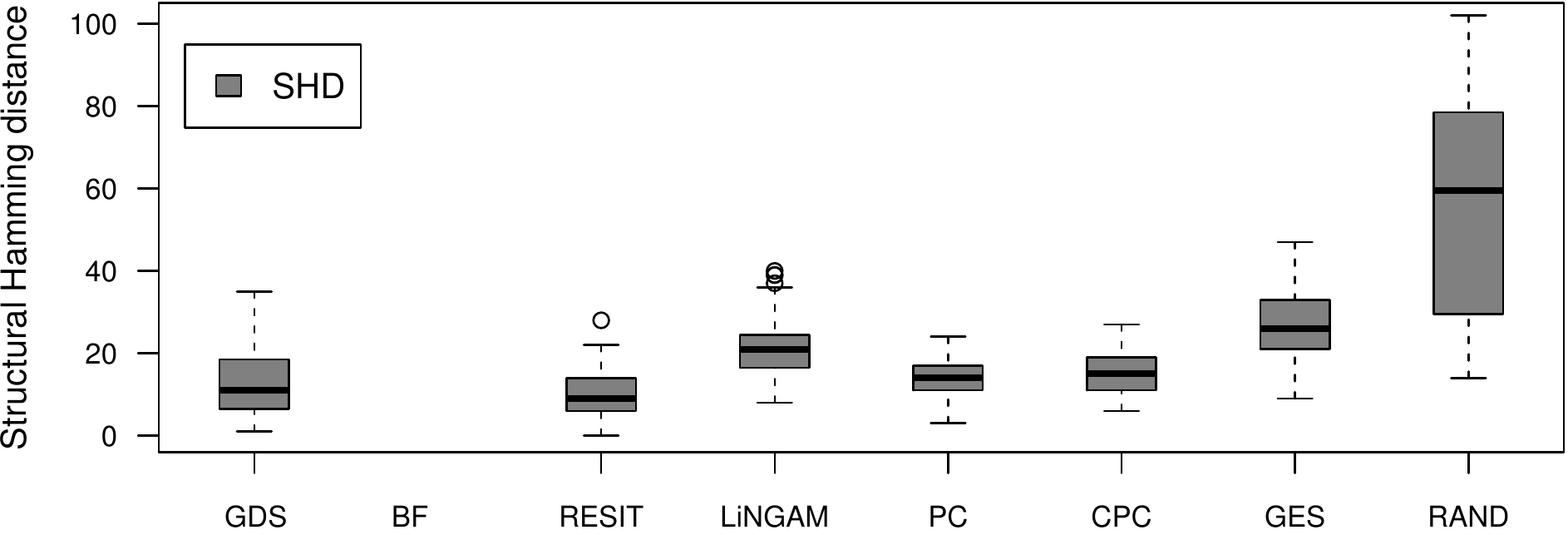}
\includegraphics[width=0.8\textwidth]{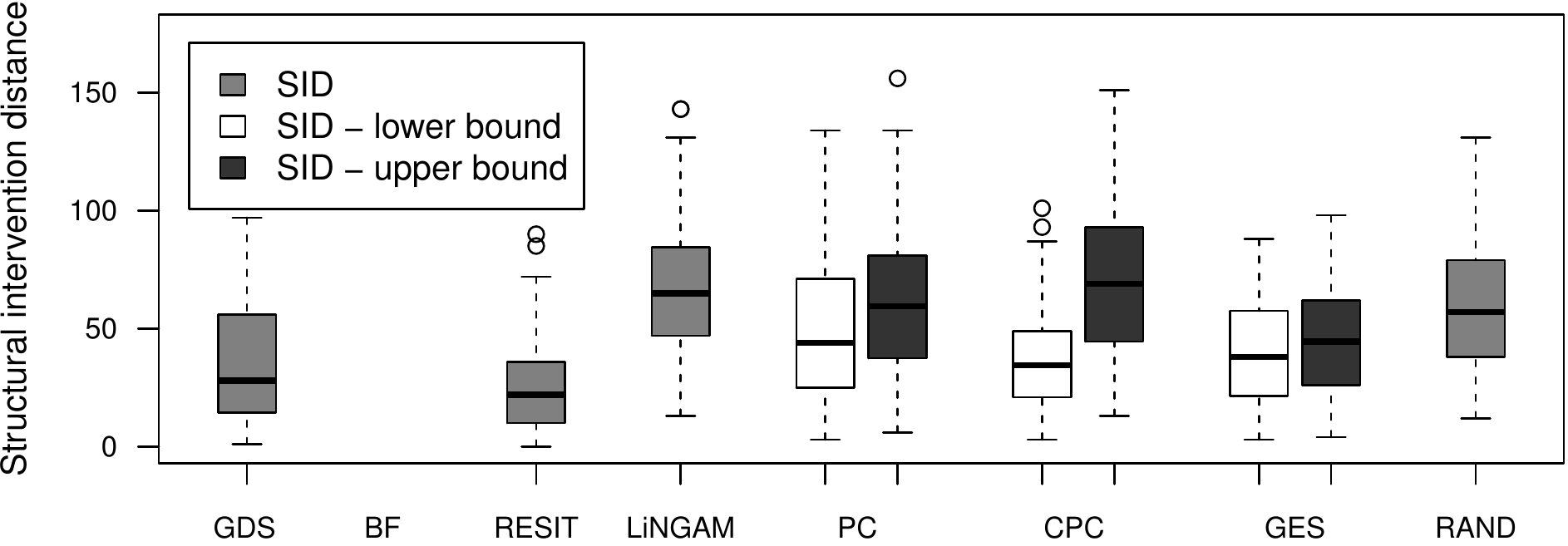}
\end{center}
\caption{Similar to Figure~\ref{fig:linear}: box plots of the SHD between estimated structure and correct DAG (top) and
box plots of the SID to the correct DAG (bottom) for $p=15$, $n=500$ and nonlinear Gaussian SEMs.}
\label{fig:nonlinear}
\end{figure}
Figure~\ref{fig:nonlinear} shows box plots of SHD and SID for the special case $p=15$ and $n=500$.
This time, RESIT perform slightly better than all other methods. It makes use of the nonlinearity of the structural equations. Again, the high SHD for GES indicates that the estimate probably contains too many edges (since its SID is better than the one for the PC methods).\\

In conclusion, for $p=4$, the brute force method works best for both linear and nonlinear data. Roughly speaking, for $p=15$, LiNGAM and GDS work best in the linear non-Gaussian setting and RESIT works best for nonlinear data. If one does not know whether the data are linear or nonlinear, GDS provides an alternative that works reasonably well in both settings.

\subsection{Altitude, Temperature and Duration of Sunshine}
We consider recordings of average temperature $T$, average duration of sunshine $DS$ and the altitude $A$ at $349$ German weather stations \citep{DWD}. Figure~\ref{fig:alt} shows scatter plots of all pairs. 
\begin{figure}[h]
\begin{center}
\includegraphics[width=0.31\textwidth]{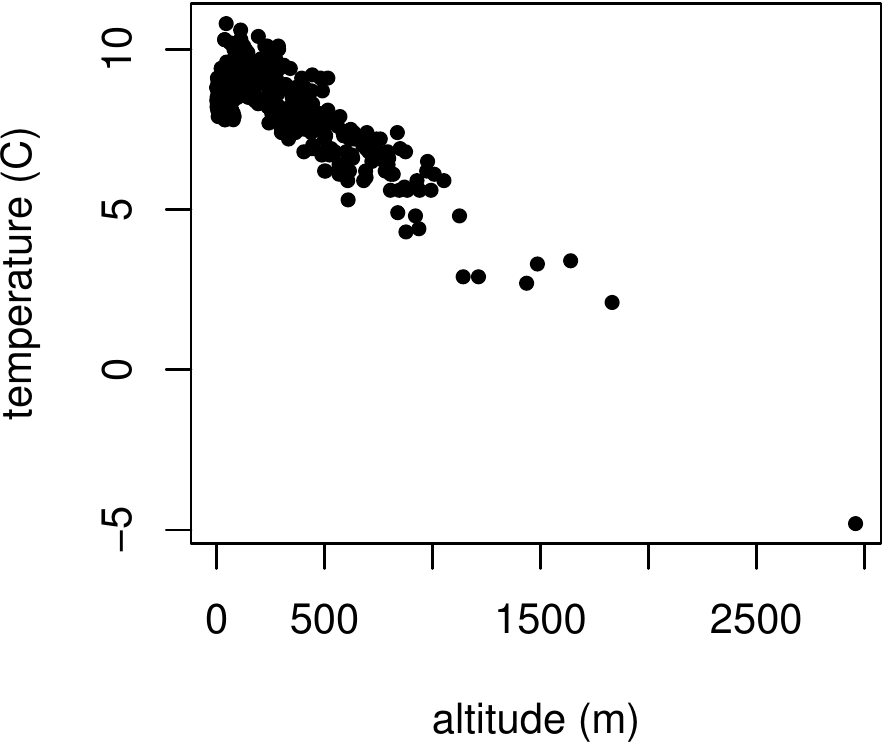}
\hspace{0.02\textwidth}
\includegraphics[width=0.31\textwidth]{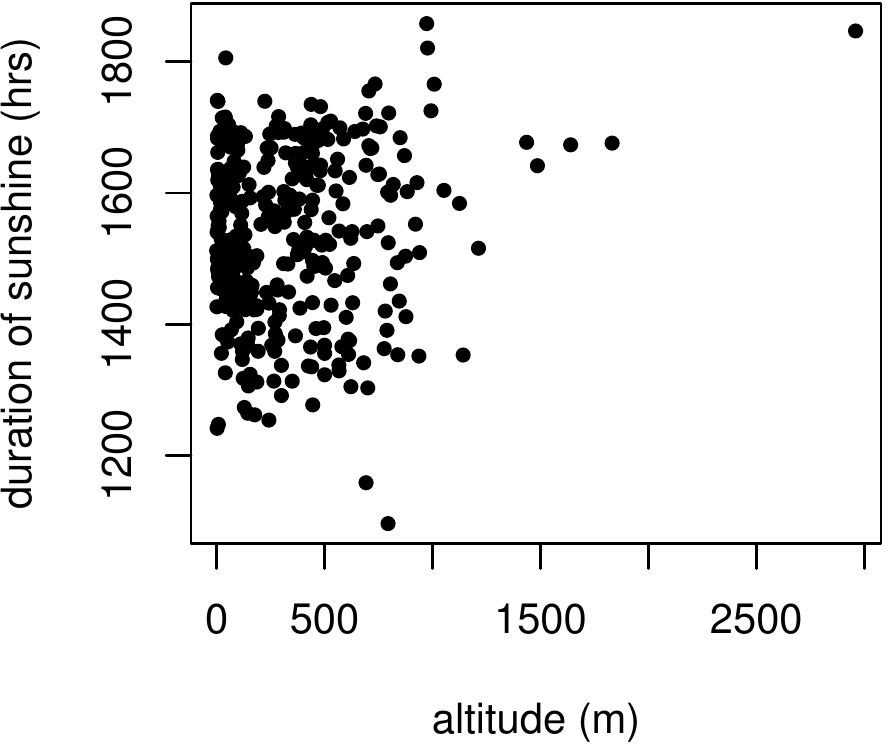}
\hspace{0.02\textwidth}
\includegraphics[width=0.31\textwidth]{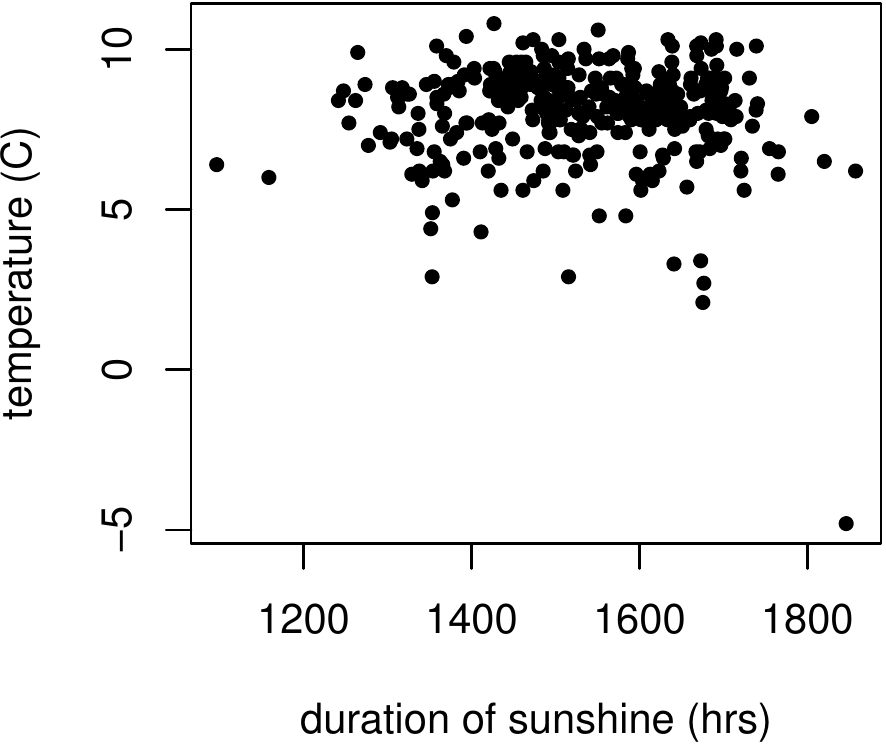}
\end{center}
\caption{Scatter plots of the three pairs, altitude, temperature and duration of sunshine.}
\label{fig:alt}
\end{figure}
LiNGAM estimates $T \rightarrow A$, PC and CPC estimate $T \rightarrow A \leftarrow DS$, GES estimates a fully connected DAG. The brute-force estimate with linear regression obtains a score of $103.6$. Since we are taking the logarithm to base $10$ in equation~\eqref{eq:pis}, we see that the model does not fit the data well. More sensible seems the gam regression, for which both GDS and brute-force output the DAG
$T \leftarrow A \rightarrow DS$ and $T \rightarrow DS$, which receives a score of $5.9$. 
Also RESIT outputs this DAG. Although there might be a feedback between duration of sunshine and temperature through the generation of clouds, we believe that the link from sunshine to temperature should be stronger. In fact, the corresponding DAG
$T \leftarrow A \rightarrow DS$ with $T \leftarrow DS$ receives the second best score.
Furthermore, these data may be confounded by geographical location. Together with the possible feedback loop and a possible deviation from additive noise models this might be the reason why we do not obtain clear independence of the residuals: the HSIC between the residuals of temperature and the two others leads to a $p$-value of $0.012$ (the other two $p$-values are both about $0.12$). In practice, we often expect some violations of the model assumptions. This example, however, indicates that it may still possible to obtain reasonable estimates of the underlying causal structure if the violations are not too strong.

\subsection{Cause-Effect Pairs}
We have tested the performance of additive noise models on a collection of various cause-effect pairs,
an extended version of the ``Cause-effect pairs'' dataset described in \citep{CauseEffectPairs}.
As of this writing, this dataset consists of observations of $86$ different pairs of variables from 
various domains. The task is to infer which variable is the cause and which variable the effect,
for each of the pairs. For example, one of the pairs consists of 349 measurements of
altitude and temperature taken at different weather stations in Germany \citep{DWD}, the same data
as considered in the previous subsection. It should be obvious that here the altitude is the cause,
and the temperature is the effect. The complete dataset and a more detailed
description of each pair can be obtained from
\url{http://webdav.tuebingen.mpg.de/cause-effect}.


For each pair of variables $(X_i,Y_i)$, with $i=1,\dots,86$, we test the two possible additive noise
models that correspond with the two different possible causal directions, $X_i
\to Y_i$ and $Y_i \to X_i$. For both directions, we estimate the functional
relationship by performing Gaussian Process regression using the GPML toolbox
\citep{RasmussenNickisch10}. We use the expected value of the Gaussian Process given the observations as an estimate of the functional dependence between the cause and the effect. The goodness-of-fit is then evaluated by testing independence of the residuals and the inputs. Here, we use the HSIC as an independence test and approximate the null distribution with a gamma distribution in order to obtain $p$-values \citep{Gretton2005JMLR}. We thus obtain two
$p$-values for each pair, one for each possible causal direction (where a high $p$-value corresponds to not rejecting independence, i.e., not rejecting the causal model). We then rank the pairs according to the highest of the two $p$-values of the pair. Using this ranking, we can make decisions for only a subset of the pairs, starting with the pair for which the highest of the two $p$-values is the largest among all pairs (we say these pairs have a high rank). In this way we trade off accuracy, i.e., percentage of correct decisions, versus the amount of decisions taken. 

Five of the pairs have multivariate $X_i$ or $Y_i$, and we did not include those in the analysis for convenience. Furthermore, not all the pairs are independent; for example, life expectancy versus
latitude occurs more than once, but measurements were done in different years
and for different gender. We therefore assigned weights to the cause-effect pairs to compensate for this when calculating the accuracy and decision rate. For example, the pair life expectancy versus latitude appears eight times (for different combinations of gender and year), hence each of these pairs is weighted down with the factor $1/8$; on the other hand, the pair altitude vs.\ temperature at weather stations occurs only once, and therefore gets weight $1$. Denoting the weight of each pair with $w_i$, the ``effective'' number of pairs becomes $\sum_{i=1}^{86} w_i = 68$.
If the set of highest-ranked pairs is denoted $\C{I}$, and the set of correct decisions is denoted $\C{C}$, then the \emph{accuracy} (fraction of correct decisions) is defined as
$$\text{accuracy} = \frac{\sum_{i\in \C{I} \cap \C{C}} w_i}{\sum_{i\in \C{I}} w_i}$$
and the \emph{decision rate} (fraction of decisions taken) is defined as
$$\text{decision rate} = \frac{\sum_{i\in \C{I}} w_i}{\sum_{i=1}^{86} w_i}.$$

The results are plotted in Figure \ref{fig:cep}. It shows the accuracy (dark blue line) as a function of the decision rate, together with confidence intervals (light blue regions).
The amount of cause-effect pairs from which the accuracy can be estimated decreases proportionally to the decision rate; the accuracies reported for low decision rates therefore have higher uncertainty than the accuracies reported for high decision rates. For each decision rate, we have plotted the 68\% and 95\% confidence intervals for the estimated success probability assuming a binomial distribution using the Clopper-Pearson method. If for a given decision rate, the 95\% confidence region lies above the line at 50\%, the method performs significantly better than random guessing (for that decision rate). For example, if we take a decision for all pairs, $72 \pm 6 \%$ of the decisions are correct, significantly more than random guessing. If we only take the $20\%$ most confident decisions, all of
them are correct, again significantly more than random guessing.
\begin{figure}[h]
\centerline{\includegraphics[width=0.7\textwidth]{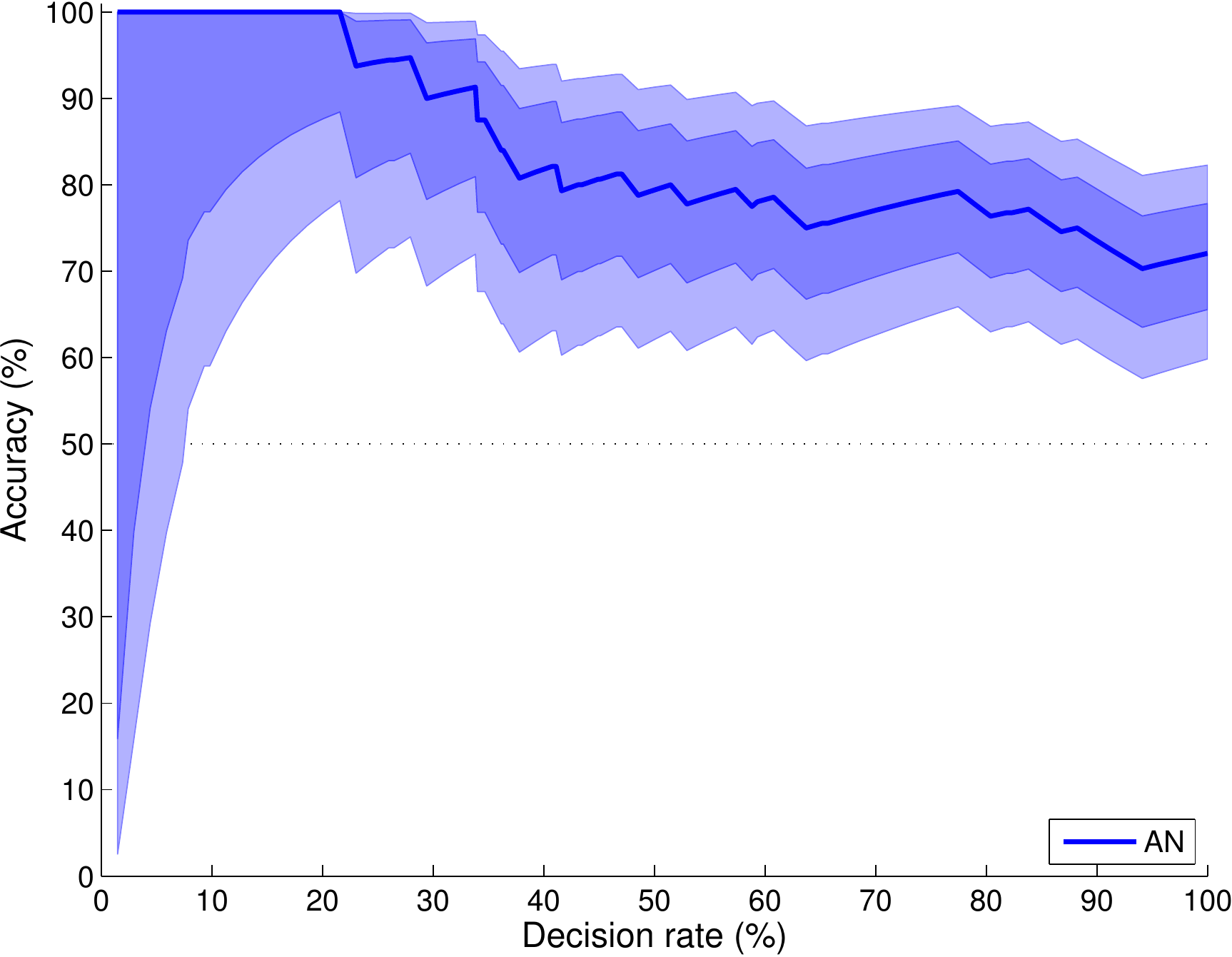}}
\caption{\label{fig:cep}Results of the additive noise method on 86 cause-effect pairs, showing estimated accuracy, 68\% and 95\% confidence intervals for each decision rate.}
\end{figure}



\section{Discussion and Future Work} \label{sec:con}
Apart from a few exceptions 
we can identify the directed acyclic graph from
a bivariate distribution
that has been generated by a
structural equation model with continuous additive noise.
Such an identifiability in the bivariate case generalizes under mild assumptions to identifiability in the multivariate case (i.e., graphs with more than two variables).
This can be beneficial for the field of causal inference: 
if the true data generating process can be represented by a restricted structural equation model like additive noise models, the causal graph can be inferred from the joint distribution.
We believe that formulating the problem using structural equation models rather than graphical models made it easier to state and exploit the assumption of additive noise.
While the language of graphical models allow us to define some notion connecting a graph to the distribution (e.g., faithfulness), SEMs allow us to impose specific restrictions on the possible functional relationships between nodes and its children. This is closer in spirit to a machine learning approach where properties of function classes play a crucial role in the estimation.

Both artificial and real data sets indicate that methods based on restricted structural equation models can outperform traditional constraint-based methods.
We have proposed a score that reflects the independence of residuals.
Although the score seems to be suitable to detect the correct graph structure, 
it remains unclear how to find the best scoring DAG when an exhaustive search is infeasible.
One possibility is to search this space by greedily choosing best-scoring neighbors. 
Multiple random initializations may decrease the chance that the greedy DAG search gets stuck in local optima by the additional cost of computational complexity.
We further believe that the proposed score may benefit from an extended version of HSIC that is able to estimate mutual independence instead of pairwise independence.
Recently, \citet{Nowzohour2013} have suggested a penalized likelihood based score for bivariate models. They estimate the noise distribution and use the BIC for penalization. In principle this idea can again be combined with a brute-force search as in Section~\ref{sec:bf} or a greedy DAG search as in Section~\ref{sec:gds}.
Making the methods applicable to larger graphs ($p>20$) remains a major challenge. 
Also, studying the statistical properties of the methods (for example, establishing consistency) is
an important task for future research.

\section*{Acknowledgements}
We thank Peter B\"uhlmann and Markus Kalisch for helpful discussions and Patrik Hoyer for the collaboration initiating the idea of nonlinear additive noise models \citep{Hoyer2008}. 
The research leading to these results has received funding from the People Programme (Marie Curie Actions) of the European Union's Seventh Framework Programme (FP7/2007-2013) under REA grant agreement no 326496.
JM was supported by NWO, the Netherlands Organization for Scientific Research (VENI grant 639.031.036).
We thank the anonymous reviewers for their insightful comments.

\appendix


\section{Proofs} \label{app:proofs}

\subsection{Proof of Proposition~\ref{prop:cme}} \label{app:proofpropcme}
\begin{proof}
``if'': Assume that causal minimality is not satisfied. Then, there is an $X_j$ and a $Y \in \PA[\G]{j}$, such that $\lawX$ is also Markov with respect to the graph obtained when removing the edge $Y \rightarrow X_j$ from $\G$.\\
``only if'': If $\lawX$ has a density, the Markov condition is equivalent to the Markov factorization \citep[][Theorem~3.27]{Lauritzen1996}. Assume that $Y \in \PA[\G]{j}$ and $X_j \independent Y\,|\,\PA[\G]{j} \setminus \{Y\}$. Then
$P(\X)= P(X_j|\PA[\G]{j}\setminus\{Y\}) \prod_{k\neq j}P(X_k|\PA[\G]{k})$, which implies that $\lawX$ is Markov w.r.t.\ $\G$ without $Y\rightarrow X_j$.
\end{proof}

\subsection{Proof of Proposition~\ref{prop:tcgunique}\label{app:proof_tcgunique}}
\begin{proof}
We will prove that for all $\G_{1}$ and $\G_{2}$ in $\mathbb{G}$ there is DAG $\G \in \mathbb{G}$ such that $\G \leq \G_1$ and $\G \leq \G_2$. This implies the existence of a least element since the set $\mathbb{G}$ is finite.
Consider any node $X_i$ and denote the $\G_1$-parents by $X_{j_1}, \ldots, X_{j_r}, X_{k_{r+1}}, \ldots, X_{k_{r+s}}$ and the $\G_2$-parents by $X_{j_1}, \ldots, X_{j_r}, X_{\ell_{r+1}}, \ldots, X_{\ell_{r+t}}$, such that $\{k_{r+1}, \ldots, k_{r+s}\}$ and $\{\ell_{r+1}, \ldots, \ell_{r+t}\}$ are disjoint sets.
Here, $X_{j_1}, \ldots, X_{j_r}$ are the joint parents in $\G_1$ and $\G_2$. We have for all $x_{j_1}, \ldots, x_{j_r}$, $x_{k_{r+1}}, \ldots, x_{k_{r+s}}$ and $x_{\ell_{r+1}}, \ldots, x_{\ell_{r+t}}$ (at which the density $p$ is strictly positive) that
\begin{align*}
&p(X_i\,|\,X_{j_1} = x_{j_1}, \ldots, X_{j_r} = x_{j_r}, X_{k_{r+1}} = x_{k_{r+1}}, \ldots, X_{k_{r+s}} = x_{k_{r+s}})\\
= \; &p\big(X_i\,|\, do (X_{j_1} = x_{j_1}, \ldots, X_{j_r} = x_{j_r}, X_{k_{r+1}} = x_{k_{r+1}}, \ldots, X_{k_{r+s}} = x_{k_{r+s}}, \\
&\qquad \qquad \qquad \qquad \qquad \qquad \qquad \quad \; X_{\ell_{r+1}} = x_{\ell_{r+1}}, \ldots, X_{\ell_{r+t}} = x_{\ell_{r+t}})\big)\\
= \; &p(X_i\,|\,X_{j_1} = x_{j_1}, \ldots, X_{j_r} = x_{j_r}, X_{\ell_{r+1}} = x_{\ell_{r+1}}, \ldots, X_{\ell_{r+t}} = x_{\ell_{r+t}}) =: (*)
\end{align*}
This implies
\begin{align*}
(*) = p(X_i\,|\,X_{j_1} = x_{j_1}, \ldots, X_{j_r} = x_{j_r})\,.
\end{align*}
Set the variables $X_{j_1}, \ldots, X_{j_r}$ to be the $\G$-parents of node $X_i$ and repeat for all nodes $X_i$.
The distribution $\lawX$ is Markov w.r.t. graph $\G$ by its construction. 
Note that all proper subgraphs of a true causal DAG with respect to which $\lawX$ is Markov are again true causal DAGs. This proves the statement about causal minimality.
\end{proof}

\subsection{Proof of Proposition~\ref{prop:sur}\label{app:sems}}
\begin{proof}
Let $N_1, \cdots, N_p$ be independent and uniformly distributed between $0$ and $1$. We then define
$
X_j=f_j(\PA[]{j},N_j)
$
with
$$
f_j(x_{\PA[]{j}},n_j) = F^{-1}_{X_j|\PA[]{j}=x_{\PA[]{j}}}(n_j)
$$
where $F^{-1}_{X_j|\PA[]{j}=x_{\PA[]{j}}}$ is the inverse cdf from $X_j$ given $\PA[]{j}=x_{\PA{j}}$.
\end{proof}

\subsection{Proof of Proposition~\ref{prop:nonconst}} \label{app:proofnonconst}
\begin{proof}
Assume causal minimality is not satisfied. We can then find a $j$ and $i \in \PA[]{j}$ with $X_j = f_{j}(X_{\PA{j} \setminus \{i\}},X_i) + N_j$ that does not dependent on $X_i$ if we condition on all other parents $\PA[]{j} \setminus \{i\}$ (Proposition~\ref{prop:cme}). 
Let us denote $\PA[]{j} \setminus \{X_i\}$ by $X_A$. For the function $f_j$ it follows that $f_j(x_A, x_i) = c_{x_A}$ for $\law{X_A, X_i}$-almost all $(x_A, x_i)$. Indeed, assume without loss of generality that $\mean N_j = 0$, take the mean of $X_j \given \PA[\G_0]{j} = (x_A, x_i)$ and use e.g. (2b) from \citep{Dawid79}.
The continuity of $f_{j}$ implies that $f_j$ is constant in its last argument.

The converse statement follows from Proposition~\ref{prop:cme}, too.
\end{proof}

\subsection{Proof of Theorem~\ref{thm:biv}\label{app:proof_biv}}
\begin{proof}%
To simplify notation we write $X:= X_i$ and $Y:= X_j$ (see Definition~\ref{def:ibanm}).
If $\G_0$ is the empty graph, $X \independent Y$. On the other hand, if the graph is not empty, $X \independent Y$ would be a violation of causal minimality.
We can therefore now assume that the graph is not empty and $X \notindependent Y$. Let us assume that the graph is not identifiable and we have
\begin{equation}\label{backward}
p_n(y-f(x))p_x(x) = p(x,y)=p_{\tilde n}(x-g(y))p_{y}(y)\,.
\end{equation}
Set 
\begin{equation}\label{piDef}
\pi(x,y):= \log p(x,y)=\nu (y-f(x))+\xi (x)\,,
\end{equation}
and $\tilde \nu:=\log p_{\tilde n}$,   
$\eta:=\log p_y$.
From the r.h.s.\ of Equation~\eqref{backward} we find 
$
\pi(x,y)= \tilde \nu (x-g(y))+\eta (y)
$,
implying
\[
\frac{\partial^2 \pi}{\partial x\partial y} =-\tilde \nu''(x-g(y)) g'(y) \quad
\hbox{ and } \quad
\frac{\partial^2 \pi}{\partial x^2}=\tilde \nu''(x-g(y))\,.
\]
We conclude 
\begin{equation}\label{wichtig}
\frac{\partial}{\partial x}\left( \frac{\partial^2 \pi/\partial x^2}{
\partial^2 \pi/(\partial x \partial y) }\right)=0\,.
\end{equation}

Using Equation~\eqref{piDef}  we obtain
\begin{equation}\label{partial1}
\frac{\partial^2 \pi}{\partial x\partial y} =-\nu''(y-f(x))f'(x)\,,
\end{equation}
and
\begin{equation}\label{partial2}
 \frac{\partial^2 \pi}{\partial x^2}=\frac{\partial}{\partial x}
\left(-\nu'(y-f(x))f'(x)+\xi'(x)\right)
= \nu'' (f')^2 -\nu'f''+ \xi''\,, 
\end{equation}
where we have dropped the  arguments for convenience.
Combining Equations~\eqref{partial1} and \eqref{partial2} yields
\begin{align*}
\frac{\partial}{\partial x}
\left(\frac{\frac{\partial^2 \pi}{\partial x^2}}{\frac{\partial^2 \pi}{\partial x \partial y}}\right)  
= &-2f'' +\frac{\nu'f'''}{\nu''f'}- \xi'''   \frac{1}{\nu'' f'}+
\frac{\nu'\nu'''f''}{(\nu'')^2}
-\frac{\nu'(f'')^2}{\nu''(f')^2}
-\xi''\frac{\nu'''}{(\nu'')^2}+\xi''\frac{f''}{\nu'' (f')^2}  \,.
\end{align*}

Due to Equation~\eqref{wichtig} this expression  must vanish 
and we obtain the differential equation~\eqref{DGL} 
\begin{equation*}
\xi'''=   \xi''  \left(-\frac{\nu'''f'}{\nu''}
+\frac{f''}{f'}\right) 
-2 \nu''f''f' 
+\nu'f'''+\frac{\nu'\nu'''f''f'}{\nu''}-\frac{\nu'(f'')^2}{f'}\,,
\end{equation*}
by term reordering. This contradicts the assumption that the distribution is generated by an identifiable bivariate additive noise model, see Condition~\ref{cond}.
\end{proof}

\subsection{Proof of Proposition~\ref{prop:biv1}\label{app:proof_biv1}}
\begin{proof}
Let the notation be as in Theorem~\ref{thm:biv} and let $y$ be fixed  such  that  $\nu''(y-f(x))f'(x)\neq 0$ 
holds for all but countably many $x$. 
Given $f,\nu$, we obtain 
a linear inhomogeneous differential equation (DE) for $\xi$:
\begin{equation}\label{DGLmitG}
\xi'''(x)=\xi''(x) G(x,y) +H(x,y)  \,,
\end{equation}
where $G$  and  $H$ are  defined by 
$$
G:= -\frac{\nu'''f'}{\nu''}+\frac{f''}{f'}
$$
and
$$ 
H:= -2\nu''f''f' +\nu'f'''+\frac{\nu'\nu'''f''f'}{\nu''}-\frac{\nu'(f'')^2}{f'} \,,
$$
see proof of Theorem~\ref{thm:biv}. Setting $z:=\xi''$ we have
$
z'(x)=z(x)G(x,y)+H(x,y)\,.
$
Given that such a function  $z$ exists, it is given  by
\begin{equation}\label{rInt}
z(x)=z(x_0) e^{\int_{x_0}^x G(\tilde{x},y) d\tilde{x}}
+\int_{x_0}^x  e^{\int_{\hat{x}}^x G(\tilde{x},y)d\tilde{x}} 
H(\hat{x},y) d\hat{x}\,.
\end{equation}
Then 
$z$ is determined by $z(x_0)$ since we can extend Equation~\eqref{rInt}
 to the remaining points.  The  
set of all functions  $\xi$ satisfying the linear inhomogenous 
DE~\eqref{DGLmitG} is a $3$-dimensional affine space:
Once we have fixed $\xi(x_0),\xi'(x_0),\xi''(x_0)$ 
for some arbitrary point $x_0$, 
$\xi$ is  completely determined. Given fixed $f$ and  $\nu$,  the set of all  
$\xi$ 
admitting a backward model is contained in this subspace.
\end{proof}

\subsection{Proof of Corollary~\ref{cor:gaussianity_implies_linearity}\label{app:proof_gaussianity_implies_linearity}}
\begin{proof}%
Similarly to how \eqref{wichtig} was derived, under the assumption of the existence of a reverse model one can derive
$$\frac{\dd^2 \pi}{\dx\dy} \cdot \frac{\dd}{\dx}\left(\frac{\dd^2 \pi}{\dx^2}\right) = \frac{\dd^2\pi}{\dx^2} \cdot \frac{\dd}{\dx} \left( \frac{\dd^2\pi}{\dx\dy} \right)$$
Now using \eqref{partial1} and \eqref{partial2}, we obtain
\begin{align*}
(-\nu''f') \cdot &\frac{\dd}{\dx}\left(\nu''(f')^2 - \nu'f'' + \xi''\right) 
  = (\nu''(f')^2 - \nu'f'' + \xi'') \cdot \frac{\dd}{\dx}\left(-\nu''f'\right),
\end{align*}
which reduces to
\begin{align*}
-2&(\nu''f')^2f'' + \nu''f'\nu'f''' - \nu''f'\xi''' 
  = -\nu'f''\nu'''(f')^2 + \xi''\nu'''(f')^2 + \nu''\nu'(f'')^2 - \nu''f''\xi''\,.
\end{align*}
Substituting the assumptions $\xi'''=0$ and $\nu'''=0$ (and hence $\nu''=C$ everywhere with $C \ne 0$ since otherwise $\nu$ cannot be a proper log-density) yields
$$\nu'\big(y-f(x)\big) \cdot \big(f'f''' - (f'')^2\big) = 2C(f')^2f'' - f''\xi''.$$
Since $C\neq 0$ there exists an $\alpha$ such that $\nu'(\alpha) = 0$. Then, restricting
ourselves to the submanifold $\{(x,y) \in \RN^2 : y - f(x) = \alpha\}$ on which $\nu' = 0$, we have
$$0 = f'' (2C(f')^2- \xi'').$$
Therefore, for all $x$ in the open set $[f'' \ne 0]$, we have $(f'(x))^2 = \xi'' / (2C)$, which is a constant, so $f'' = 0$ on $[f'' \ne 0]$: a contradiction. Therefore, $f'' = 0$ everywhere.
\end{proof}

\subsection{Definitions of Proposition~\ref{prop:kun}} \label{app:kun}
\begin{definition}{\citep{Zhang2009}}
A one-dimensional distribution that is absolutely continuous with respect to the Lebesgue measure and density $p$ is called:
\begin{itemize}
\item \emph{log-mix-lin-exp} if there are $c_1, c_2, c_3, c_4$ with $c_1 <0$ and $c_2c_3>0$ such that
$$
\log p(x) = c_1 \exp(c_2 x) + c_3 x + c_4
$$
\item \emph{one-sided asymptotically exponential} if there is $c \neq 0$ such that
$$
\frac{d}{dx} \log p(x) \rightarrow c
$$
as $x \rightarrow -\infty$ or $x \rightarrow \infty$.
\item \emph{two-sided asymptotically exponential} if there are $c_1 \neq 0$ and $c_2 \neq 0$ such that
$$
\frac{d}{dx} \log p(x) \rightarrow c_1 
$$
as $x \rightarrow -\infty$ and
$$
\frac{d}{dx} \log p(x) \rightarrow c_2 
$$
as $x \rightarrow \infty$.
\item a \emph{generalized mixture of two exponentials} if there are $d_1, d_2, d_3, d_4, d_5, d_6$
with $d_4 > 0$, $d_3 > 0$, $d_1 d_5 > 0$ and $d_2 < -\frac{d_1}{d_5}$ such that
$$
\log p(x) = d_1 x + d_2 \log(d_3 + d_4 \exp(d_5 x))+ d_6
$$
\end{itemize} 
\end{definition}

\subsection{Proof of Example~\ref{ex:cou2}} \label{app:cou2}

\begin{proof}
Our starting point is the assumption of nonidentifiability. In other words, we can describe the joint distribution
of $x$ and $y$ both as an additive noise model where $X$ causes $Y$, and as an additive noise model where $Y$ causes $X$.
Using the same notation as in Theorem \ref{thm:biv}, this means that:
\begin{equation}\label{eq:nonidentifiable}
  \xi(x) + \nu\big(y - f(x)\big) = \eta(y) + \tnu\big(x - g(y)\big) \qquad \forall x,y \in \RN.
\end{equation}



Case II in Proposition \ref{prop:kun} (reproduced from Table 1 in \cite{Zhang2009}) states that if both $\xi$ and $\nu$ are log-mix-lin-exp and $f$ is affine, then
there could be an unidentifiable model. Let us verify whether that is indeed the case.
We take 
\begin{align*}
  \xi(x) & = c_1 \exp(c_2 x) + c_3 x + c_4 \\
  \nu(n) & = \tc_1 \exp(\tc_2 n) + \tc_3 n + \tc_4 \\
  f(x) & = ax + b
\end{align*}
with $a \ne 0$ ($a = 0$ is the degenerate case with $X$ and $Y$ independent).

We can rewrite (\ref{eq:nonidentifiable}) as follows, by substituting $x$ with $x + g(y)$:
\begin{equation}\label{eq:case2}
  c_1 e^{c_2 (x + g(y))} + c_3 (x + g(y)) + c_4 + \tc_1 e^{\tc_2 (y - ax - ag(y) - b)} + \tc_3 (y - ax - ag(y) - b) + \tc_4 = \eta(y) + \tnu(x).
\end{equation}
Differentiating with respect to $x$:
\begin{equation}\label{eq:case2_dx}
  c_1 c_2 e^{c_2 (x + g(y))} + c_3 - a \tc_1 \tc_2 e^{\tc_2 (y - ax - ag(y) - b)} - \tc_3 a = \tnu'(x).
\end{equation}
Differentiating with respect to $y$:
$$c_1 c_2^2 e^{c_2 (x + g(y))} g'(y) - a \tc_1 \tc_2^2 e^{\tc_2 (y - ax - ag(y) - b)} (1 - ag'(y)) = 0.$$
This can only be satisfied for all $x$ if $c_2 = -a \tc_2$. In that case:
$$-a c_1 g'(y) + \tc_1 e^{\tc_2 (y - b)} (1 - ag'(y)) = 0.$$
Rewriting:
$$a g'(y) = \frac{\tc_1 e^{\tc_2 (y - b)}}{c_1 + \tc_1 e^{\tc_2 (y - b)}}.$$
Integrating:
$$g(y) = -\frac{1}{c_2} \ln (-c_1 - \tc_1 e^{\tc_2(y-b)}) + \frac{C}{c_2}.$$
Note that:
$$e^{c_2 g(y)} = -\frac{1}{c_1 + \tc_1 e^{\tc_2(y-b)}} e^{-C}.$$
Substituting into (\ref{eq:case2_dx}):
$$-c_2 e^{-C} e^{c_2 x} + c_3 - \tc_3 a = \tnu'(x).$$
Integrating:
$$-e^{-C} e^{c_2 x} + (c_3 - \tc_3 a) x + \td_4 = \tnu(x).$$
which is also log-mix-lin-exp with parameters $\td_1 = -e^{-C}$, $\td_2 = c_2$, $\td_3 = c_3 - \tc_3 a$, $\td_4$.
Substituting into (\ref{eq:case2}):
$$g(y)(c_3 - \tc_3 a) + \tc_3 y + c_4 - \tc_3 b + \tc_4 - \td_4 = \eta(y),$$
i.e.:
$$\eta(y) = \left(-\frac{1}{c_2} \ln (-c_1 - \tc_1 e^{\tc_2(y-b)}) + \frac{C}{c_2}\right) (c_3 - \tc_3 a) + \tc_3 y + c_4 - \tc_3 b + \tc_4 - \td_4.$$
This gives an inequality constraint: $c_3 \ne a \tc_3$. $\eta(y)$ is
a generalized mixture of exponentials distribution with parameters $d_1 = \tc_3$, $d_2 = -\frac{c_3 - a\tc_3}{c_2}$, $d_3 = -c_1$, $d_4 = -\tc_1 e^{-\tc_2 b}$, $d_5 = \tc_2$, $d_6 = C \frac{c_3 - a\tc_3}{c_2} + c_4 - \tc_3b + \tc_4 - \td_4$. One can check that all constraints on the parameters of the generalized mixture of exponentials are satisfied. Choosing $C$ appropriately allows for normalizing the log-density.
One can also easily verify that with these choices of $\tnu(x)$ and $\eta(y)$, equation \eref{eq:nonidentifiable} holds, and therefore this gives an example of a nonidentifiable additive noise model.
\end{proof}

\subsection{Some Lemmata} \label{sec:lem}
The following four statements are all plausible and their proof is mostly about technicalities. The reader may skip to the next section 
and use the lemmata whenever needed. For random variables $A$ and $B$ we use $A \given_{B=b}$ to denote the random variable $A$ after conditioning on $B=b$ (assuming densities exist and $B$ has positive density at $b$).

\begin{lemma}\label{lem:cond}
Let $Y \in \C{Y},N \in \C{N},\B{Q} \in \C{Q},\B{R} \in \C{R}$ be random variables whose joint distribution is absolutely continuous with respect
to some product measure ($\B{Q}$ and $\B{R}$ can be multivariate) and with density $p_{Y,\B{Q},\B{R}, N}(y,\B{q},\B{r},n)$. Let $f : \C{Y} \times \C{Q} \times \C{N} \to \R$ be a measurable function.
If $N \independent (Y,\B{Q},\B{R})$ then for all $\B{q} \in \C{Q},\B{r} \in \C{R}$ with $p_{\B{Q},\B{R}}(\B{q},\B{r}) > 0$:
$$
f(Y,\B{Q},N){\given}_{\B{Q}=\B{q},\B{R}=\B{r}} \overset{\mathcal{L}}{=} f(Y{\given}_{\B{Q}=\B{q},\B{R}=\B{r}},\B{q},N)\,.
$$
\end{lemma}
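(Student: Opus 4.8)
The plan is to prove equality of the two laws by testing against an arbitrary bounded measurable function $h:\R \to \R$: since two probability measures on $\R$ that assign the same integral to every bounded measurable function coincide, it suffices to show that $\mean$ of $h$ agrees under both distributions. Working through $h$ (rather than manipulating $f$ directly) has the pleasant side effect of neutralizing the fact that $f$ is only assumed measurable.

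First I would write out the left-hand side. Conditioning on $\B{Q}=\B{q},\B{R}=\B{r}$ fixes the middle argument of $f$ at the value $\B{q}$, so that
$$
\mean\big[h\big(f(Y,\B{Q},N)\big) \given \B{Q}=\B{q},\B{R}=\B{r}\big] = \int\!\!\int h\big(f(y,\B{q},n)\big)\, p_{Y,N|\B{Q},\B{R}}(y,n|\B{q},\B{r})\, dy\, dn,
$$
an expression that is well defined precisely because we assumed $p_{\B{Q},\B{R}}(\B{q},\B{r})>0$ and everything is absolutely continuous with respect to a product measure.

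The key step is to factorize this conditional density. From the hypothesis $N \independent (Y,\B{Q},\B{R})$ the joint density splits as $p_{Y,\B{Q},\B{R},N}(y,\B{q},\B{r},n) = p_{Y,\B{Q},\B{R}}(y,\B{q},\B{r})\, p_N(n)$, and dividing by $p_{\B{Q},\B{R}}(\B{q},\B{r})$ yields
$$
p_{Y,N|\B{Q},\B{R}}(y,n|\B{q},\B{r}) = p_{Y|\B{Q},\B{R}}(y|\B{q},\B{r})\, p_N(n).
$$
Substituting this back shows that the left-hand side equals $\int\!\int h(f(y,\B{q},n))\, p_{Y|\B{Q},\B{R}}(y|\B{q},\B{r})\, p_N(n)\, dy\, dn$, which is exactly $\mean[h(f(\tilde Y,\B{q},N'))]$ for independent $\tilde Y \sim \law{Y \given \B{Q}=\B{q},\B{R}=\B{r}}$ and $N' \sim \law{N}$ — that is, the expectation of $h$ under the law on the right-hand side. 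As $h$ was arbitrary, the two laws coincide.

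I expect the only genuine obstacle to be the measure-theoretic bookkeeping: justifying that the relevant conditional densities exist and are well defined (guaranteed here by positivity of $p_{\B{Q},\B{R}}(\B{q},\B{r})$ together with absolute continuity with respect to a product measure) and that the factorization of the conditional density follows rigorously from the independence assumption rather than merely heuristically. Once that factorization is in hand, identifying the two integrals is routine, and the test-function formulation keeps the merely measurable $f$ from creating any complications.
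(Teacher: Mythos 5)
Your proof is correct, and it follows essentially the same route as the formal proof the paper points to: the paper itself gives no argument for Lemma~\ref{lem:cond}, deferring instead to \citet[Lemma~2]{Peters2011b}, where the statement is likewise established by writing the conditional law via the density ratio $p_{Y,\B{Q},\B{R},N}/p_{\B{Q},\B{R}}$, factorizing it using $N \independent (Y,\B{Q},\B{R})$, and comparing the two laws through integrals of test functions. The one caveat you correctly flag---that conditioning on the null event $\{\B{Q}=\B{q},\B{R}=\B{r}\}$ and the a.e.\ density factorization require fixing versions of the densities---is inherent to the lemma's statement and is handled at the same level of rigor in the cited reference.
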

A formal proof of this statement can be found in \citep[][Lemma~2]{Peters2011b}.

\begin{lemma}\label{lem:noi}
Let $\lawX$ be generated according to a \SEM as in~\eqref{eq:sem} with corresponding \DAG $\G$ and consider a variable $X \in \B{X}$. If $\B{S} \subseteq \ND[\G]{X}$ then $N_X \independent \B{S}$.
\end{lemma}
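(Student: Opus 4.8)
The plan is to make precise the intuition that the noise variable $N_X$ enters the structural equation model only through the equation for $X$ itself, so that it can affect a variable $X_i$ only when $X_i$ lies downstream of $X$, i.e.\ when $X_i \in \{X\} \cup \DE[\G]{X}$. Since every coordinate of $\B{S}$ is a non-descendant of $X$, none of them is influenced by $N_X$; the whole vector $\B{S}$ is then a function of the remaining (jointly independent) noise variables, and the claimed independence follows from the product structure of $\lawN$.

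First I would establish, by induction along a topological ordering of the nodes (which exists because $\G$ is acyclic), the representation claim that each $X_i$ is a measurable function of the noise variables $\{N_k : i \in \DE[\G]{k} \cup \{k\}\}$, that is, of $N_i$ together with the noise variables indexed by the ancestors of $i$. For a source node the structural equation reads $X_i = f_i(N_i)$, settling the base case. For the inductive step I would insert, into $X_i = f_i(\PA[\G]{i}, N_i)$, the representations of the parents $X_k$ with $k \in \PA[\G]{i}$ supplied by the inductive hypothesis (the parents precede $X_i$ in the topological order); because every ancestor of a parent of $i$, and each parent itself, is again an ancestor of $i$, the composed function depends only on $N_i$ and on noise variables indexed by ancestors of $i$, as claimed.

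Granting this representation, the key observation is immediate: if $Y \in \B{S} \subseteq \ND[\G]{X}$, then $Y$ is a non-descendant of $X$ and hence distinct from $X$, so neither $Y \in \DE[\G]{X}$ nor $Y = X$ can hold; by the representation claim, $N_X$ therefore does not occur among the noise variables of which $Y$ is a function. Taking these representations together over the finitely many coordinates of $\B{S}$, the random vector $\B{S}$ is a measurable function of $\{N_k : k \neq X\}$. Since the noise variables are jointly independent, $N_X$ is independent of the family $\{N_k : k \neq X\}$ and therefore of any measurable function of it, so $N_X \independent \B{S}$.

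The only genuine work lies in the inductive representation step: one must keep careful track of exactly which noise indices can appear in each $X_i$ and ensure that the functions obtained by composition remain measurable. Once this structural bookkeeping is in place, the passage from ``$\B{S}$ is a function of the noise variables other than $N_X$'' to the stated independence is routine and uses nothing beyond the product form of $\lawN$.
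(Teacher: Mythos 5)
Your proof is correct and follows essentially the same route as the paper's: both express the non-descendants $\B{S}$ as a measurable function of noise variables indexed only by nodes other than $X$ (the paper by recursively substituting parent equations, you by an equivalent induction along a topological order), and then conclude via the joint independence of the noise variables. Your version is simply a more carefully formalized account of the paper's recursive-substitution argument.
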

\begin{proof}
Write $\B{S}=\{S_1, \ldots, S_k\}$. Then
$$
\B{S}=\big(f_{S_1}(\PA[\G]{S_1}, N_{S_1}), \ldots, f_{S_k}(\PA[\G]{S_k}, N_{S_k})\big)\,.
$$
Again, one can substitute the parents of $S_i$ by the corresponding functional equations and proceed recursively. After finitely many steps one obtains
$\B{S}=f(N_{T_1}, \ldots, N_{T_l})$,
where $\{T_1, \ldots, T_l\}$ is the set of \emph{all} ancestors of nodes in $\B{S}$, which does not contain $X$. Since all noise variables are jointly independent we have $N_X \independent \B{S}$.
\end{proof}

With the intersection property of conditional independence \citep[e.g., 1.1.5 in ][]{Pearl2009}, Proposition~\ref{prop:cme} has the following corollary that we formalize as a lemma.
\begin{lemma} \label{lem:cmc}
Consider the random vector $\X$ and assume that the joint distribution has a (strictly) positive density. Then 
$\lawX$ satisfies causal minimality with respect to $\G$ if and only if $\forall B \in \B{X}$ $\forall A \in \PA[\G]{B} \,$ and $\forall \B{S} \subset \X$ with  $\PA[\G]{B} \setminus \{A\} \subseteq \B{S} \subseteq \ND[\G]{B} \setminus \{A\}$ we have that 
$$
B \notindependent A\,\mid\,\B{S}\,.
$$
\end{lemma}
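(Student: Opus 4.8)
The plan is to obtain this as a corollary of Proposition~\ref{prop:cme} by bridging the gap between the single conditioning set $\PA[\G]{B}\setminus\{A\}$ appearing there and the whole family of admissible sets $\B{S}$ with $\PA[\G]{B}\setminus\{A\}\subseteq\B{S}\subseteq\ND[\G]{B}\setminus\{A\}$ appearing here. Throughout I would keep the standing assumption that $\lawX$ is Markov with respect to $\G$ (this is built into causal minimality), so that Proposition~\ref{prop:cme} is available in both directions.

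The direction ``$\Leftarrow$'' is immediate: the set $\B{S}=\PA[\G]{B}\setminus\{A\}$ is itself admissible, since parents are non-descendants and hence $\PA[\G]{B}\setminus\{A\}\subseteq\ND[\G]{B}\setminus\{A\}$. Specializing the hypothesis to this $\B{S}$ yields $B\notindependent A\mid\PA[\G]{B}\setminus\{A\}$ for every $B$ and every $A\in\PA[\G]{B}$, which is exactly the condition that Proposition~\ref{prop:cme} equates with causal minimality.

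For ``$\Rightarrow$'', I would argue by contradiction. Assume causal minimality but suppose $B\independent A\mid\B{S}$ for some admissible $\B{S}$. Write $W:=\PA[\G]{B}\setminus\{A\}$ and $\B{T}:=\B{S}\setminus W$, so that $\B{S}=W\cup\B{T}$, with $\B{T}$ disjoint from $\PA[\G]{B}$ and $\B{T}\subseteq\ND[\G]{B}\setminus\PA[\G]{B}$ (using $A,B\notin\B{S}$). Since $W\cup\{A\}=\PA[\G]{B}$, the local Markov property gives $B\independent\B{T}\mid A,W$. Combining this with the assumed $B\independent A\mid\B{T},W$, the intersection property of conditional independence—valid because the density is strictly positive—delivers $B\independent(A,\B{T})\mid W$, and in particular $B\independent A\mid W=\PA[\G]{B}\setminus\{A\}$. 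This contradicts Proposition~\ref{prop:cme}, which completes the argument.

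The only substantive work is the hard direction, and within it the main obstacle is assembling the two conditional independences into precisely the shape demanded by the intersection property: one must recognize that the Markov statement $B\independent\B{T}\mid\PA[\G]{B}$ should be read with the parent set split as $A$ and $W$, so that it pairs with the assumed independence over the complementary split. Some care is also needed to verify that $\B{T}$ genuinely avoids $A$, $B$, and all of $\PA[\G]{B}$, so that both the local Markov property and the positivity hypothesis underlying the intersection property actually apply.
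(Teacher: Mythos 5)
Your proof is correct and takes essentially the same route as the paper's: both split $\B{S}$ into $\PA[\G]{B}\setminus\{A\}$ and the remaining non-descendants, invoke Proposition~\ref{prop:cme} together with the local Markov property $B \independent \B{S}\setminus \PA[\G]{B} \given \PA[\G]{B}$, and conclude via the intersection property, which is licensed by the strictly positive density. The only difference is presentational: you argue the hard direction by contradiction, while the paper states the same chain of implications directly in terms of dependence statements.
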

\begin{proof}
The ``if'' part is immediate. For the ``only if'' let us 
denote $\B{P}:= \PA[\G]{B} \setminus \{A\}$ and $\B{Q}:= \B{S} \setminus (\PA[\G]{B} \setminus \{A\})$, such that $\B{S} = \B{P} \cup \B{Q}$. Observe that $B \notindependent A \given \B{P}$ (see Proposition~\ref{prop:cme}) implies $B \notindependent (\{A\} \cup \B{Q}) \given \B{P}$. From the Markov condition we have $B \independent \B{Q} \given (\B{P} \cup \{A\})$. The intersection property of conditional independence yields $B \notindependent A \given (\B{P} \cup \B{Q})$.
\end{proof}


\subsection{Proof of Theorem~\ref{thm:mul}\label{app:proof_mul}}
\begin{proof}
We assume that there are two restricted additive noise models (see Definition~\ref{def:wnn}) that both induce $\law{\B{X}}$
, one with graph $\G$, the other with graph $\Gp$.
We will show that $\G = \Gp$.
Consider the variables $L, Y$ from Proposition~\ref{prop:graph} (i) and define the sets $\B{Q}:=\PA[\G]{L} \setminus \{Y\}$, $\B{R}:= \PA[\Gp]{Y} \setminus \{L\}$ and $\B{S}:=\B{Q} \cup \B{R}$.
At first, we consider any $\B{s}=(\B{q}, \B{r})$ and write
$
L^* := L \given_{\B{S} = \B{s}}$ and $Y^* := Y \given_{ \B{S} = \B{s}}
$. 
Lemma \ref{lem:noi} gives us $N_L \independent (Y,\B{S})$ and $N_Y \independent (L,\B{S})$ and we can thus apply Lemma \ref{lem:cond}. 
From $\G$ we find
$$
L^* = f_L(\B{q}, Y^*) + N_L, \qquad N_L \independent Y^*
$$
and from $\Gp$ we have
$$
Y^* = g_{Y}(\B{r}, L^*) +  N_{Y}, \qquad N_{Y} \independent L^*
$$
This contradicts Theorem~\ref{thm:biv} since according to Definition~\ref{def:wnn} we can choose $\B{s}=(\B{q}, \B{r})$ 
such that 
$(f_L(\B{q}, \cdot), \law{Y^*}, \law{N_L})$ and 
$(g_Y(\B{r}, \cdot), \law{L^*}, \law{N_Y})$ satisfy Condition~\ref{cond}.

\end{proof}

\subsection{Proof of Proposition~\ref{prop:graph}} \label{app:propmain}
\begin{proof}
Since \DAGs do not contain any cycles, we always find nodes that have no descendants (start a directed path at some node: after at most $\#\B{X}-1$ steps we reach a node without a child). Eliminating such a node from the graph leads to a DAG, again; we can discard further nodes without children in the new graph. We repeat this process for all nodes that have no children in both $\G$ and $\Gp$ and have the same parents in both graphs. If we end up with no nodes left, the two graphs are identical which violates the assumption of the proposition. Otherwise, we end up with a smaller set of variables that we again call $\X$, two smaller graphs that we again call $\G$ and $\Gp$ and a node $L$ that has no children in $\G$ and either $\PA[\G]{L}\neq \PA[\Gp]{L}$ or $\CH[\Gp]{L} \neq \emptyset$.
We will show that this leads to a contradiction. Importantly, because of the Markov property of the distribution with respect to $\G$, all other nodes are independent of $L$ given $\PA[\G]{L}$:
\begin{equation} \label{eq_xind}
L \independent \B{X} \setminus (\PA[\G]{L} \cup \{L\}) \,\given\, \PA[\G]{L}\,.
\end{equation}

To make the arguments easier to understand, we introduce the following notation (see also Fig.~\ref{fig:wcp}): we partition $\G$-parents of $L$ into $\B{Y}, \B{Z}$ and $\B{W}$. Here, $\B{Z}$ are also $\Gp$-parents of $L$, $\B{Y}$ are $\Gp$-children of $L$ and $\B{W}$ are not adjacent to $L$ in $\Gp$. We denote with $\B{D}$ the $\Gp$-parents of $L$ that are not adjacent to $L$ in $\G$ and by $\B{E}$ the $\Gp$-children of $L$ that are not adjacent to $L$ in $\G$. 
\begin{figure}[ht]
  \begin{minipage}[t]{0.48\columnwidth}
    \begin{center}
      \begin{tikzpicture}[scale=1.08, line width=0.5pt, minimum size=0.58cm, inner sep=0.3mm, shorten >=1pt, shorten <=1pt]
	\normalsize
        \draw (0,0) node(x) [circle, draw] {$L$};
        \draw (-1.3,1.2) node(w) [circle, draw] {$\B{W}$};
        \draw (0,1.2) node(y) [circle, draw] {$\B{Y}$};
        \draw (1.3,1.2) node(z) [circle, draw] {$\B{Z}$};
        \draw[-arcsq] (z) -- (x);
        \draw[-arcsq] (y) -- (x);
        \draw[-arcsq] (w) -- (x);
      \end{tikzpicture}\\
      part of $\G$
    \end{center}
  \end{minipage}
  \hspace{0.02\columnwidth}
  \begin{minipage}[t]{0.48\columnwidth}
    \begin{center}
      \begin{tikzpicture}[scale=1.08, line width=0.5pt, minimum size=0.58cm, inner sep=0.3mm, shorten >=1pt, shorten <=1pt]
	\normalsize
        \draw (0,-0.6) node(x) [circle, draw] {$L$};
        \draw (-1.3,0) node(d) [circle, draw] {$\B{D}$};
        \draw (1.3,0) node(z) [circle, draw] {$\B{Z}$};
        \draw (1.3,-1.2) node(e) [circle, draw] {$\B{E}$};
        \draw (-1.3,-1.2) node(y) [circle, draw] {$\B{Y}$};
        \draw[-arcsq] (z) -- (x);
        \draw[-arcsq] (d) -- (x);
        \draw[-arcsq] (x) -- (y);
        \draw[-arcsq] (x) -- (e);
      \end{tikzpicture}\\
      part of $\Gp$
    \end{center}
  \end{minipage}
  \caption{Nodes adjacent to $L$ in $\G$ and $\Gp$ \label{fig:wcp}}
\end{figure}
Thus:
$\PA[\G]{L} = \B{Y} \cup \B{Z} \cup \B{W}$, $\CH[\G]{L} = \emptyset,$
$\PA[\Gp]{L} = \B{Z} \cup \B{D}$, $\CH[\Gp]{L} = \B{Y} \cup \B{E}$.
Consider $\B{T} := \B{W} \cup \B{Y}$. We distinguish two cases:\\

\noindent
Case (i):
$\B{T} = \emptyset$.\\
Then there must be a node $D \in \B{D}$ or a node $E \in \B{E}$, otherwise $L$ would have been discarded.
\begin{itemize}
\item[1.] If there is a $D \in \B{D}$ then \eref{eq_xind} implies
$L \independent D \given \B{S}$
for $\B{S} := \B{Z} \cup \B{D} \setminus \{D\}$, 
which contradicts Lemma~\ref{lem:cmc} (applied to $\Gp$).
\item[2.] 
If $\B{D}=\emptyset$ and there is $E \in \B{E}$ then 
$E \independent L \given \B{S}$
holds for $\B{S}:=\B{Z} \cup \PA[\Gp]{E} \setminus \{L\}$ (see graph $\G$), which also contradicts Lemma \ref{lem:cmc} (note that $\B{Z}\subseteq \ND[\Gp]{E}$ to avoid cycles).
\end{itemize}
Case (ii):
$\B{T} \ne \emptyset$.\\
Then $\B{T}$ contains a ``$\Gp$-youngest'' node with the property that there is no
directed $\Gp$-path from this node to any other node in $\B{T}$. 
This node may not be unique.
\begin{itemize}
  \item[1.] Suppose that some $W \in \B{W}$ is such a youngest node. Consider the \DAG $\tilde \Gp$ that equals $\Gp$ with additional edges $Y \rightarrow W$ and $W' \rightarrow W$ for all $Y \in \B{Y}$ and $W' \in \B{W} \setminus \{W\}$. In $\tilde \Gp$, $L$ and $W$ are not adjacent. Thus we find a set $\B{\tilde S}$ such that $\B{\tilde S}$ $d$-separates $L$ and $W$ in $\tilde \Gp$; indeed, one can take 
$\B{\tilde S} = \PA[\tilde \Gp]{L}$ if $W \notin \DE[\tilde \Gp]{L}$ and $\B{\tilde S} := \PA[\tilde \Gp]{W}$ if $L \notin \DE[\tilde \Gp]{W}$.
Then also $\B{S}=\B{\tilde S} \cup \{\B{Y},\B{Z},\B{W}\setminus \{W\}\}$ $d$-separates $L$ and $W$ in $\tilde \Gp$.
\par
\begingroup
\leftskip=0.4cm 
\noindent 
Indeed, all $Y \in \B{Y}$ are already in $\B{\tilde S}$ in order to block $L \rightarrow Y \rightarrow W$. Suppose there is a $\tilde \G'$-path that is blocked by $\B{\tilde S}$ and unblocked if we add $Z$ and $W'$ nodes to $\B{\tilde S}$. 
How can we unblock a path by including more nodes? The path ($L \cdots V_1 \cdots U_1 \cdots W$ in Fig.~\ref{fig:indeed}) must contain a collider $V_1$ that is an ancestor of a $Z$ with $V_1, \ldots, V_m, Z \notin \B{\tilde S}$ and corresponding nodes $U_i$ for a $W'$ node. 
Choose $V_1$ and $U_1$ on the given path so close to each other such that there is no such collider in between. If there is no $V_1$, choose $U_1$ closest to $L$, if there is no $U_1$, choose $V_1$ closest to $W$. Now the path $L \leftarrow Z \cdots V_1 \cdots U_1 \cdots W' \rightarrow W$ is unblocked given $\B{\tilde S}$, which is a contradiction to the assumption that $\B{\tilde S}$ $d$-separates $L$ and $W$.
\par
\endgroup
But then $\B{S}$ $d$-separates $L$ and $W$ in $\Gp$, too (there are less paths), and we have
$
L \independent W \,\mid\,\B{S},
$
which contradicts Lemma \ref{lem:cmc} (applied to $\G$).
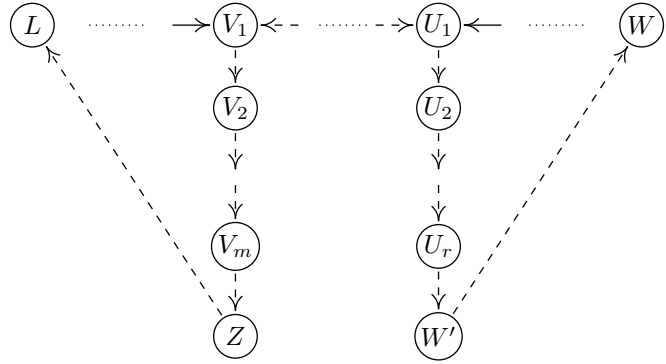
\begin{figure}[ht]
    \begin{center}
      \begin{tikzpicture}[xscale=0.9, yscale=0.92, line width=0.5pt, minimum size=0.58cm, inner sep=0.3mm, shorten >=1pt, shorten <=1pt]
	\small
        \draw (-3,0) node(x) [circle, draw] {$L$};
        \draw (6,0) node(w) [circle, draw] {$W$};
        \draw (0,0) node(v1) [circle, draw] {$V_1$};
        \draw (0,-1.2) node(v2) [circle, draw] {$V_2$};
        \draw (0,-3.2) node(vm) [circle, draw] {$V_m$};
        \draw (0,-4.5) node(z) [circle, draw] {$Z$};
        \draw (3,0) node(u1) [circle, draw] {$U_1$};
        \draw (3,-1.2) node(u2) [circle, draw] {$U_2$};
        \draw (3,-3.2) node(us) [circle, draw] {$U_r$};
        \draw (3,-4.5) node(wp) [circle, draw] {$W'$};
        \draw (-1.3,0) node(x3) [circle, draw, white] {$L$};
        \draw (1.3,0) node(x4) [circle, draw, white] {$L$};
        \draw (1.7,0) node(x5) [circle, draw, white] {$L$};
        \draw (4.3,0) node(x6) [circle, draw, white] {$L$};
        \draw (0,-2.45) node(v3) [circle, draw, white] {$L$};
        \draw (0,-1.95) node(v4) [circle, draw, white] {$L$};
        \draw (3,-2.45) node(u3) [circle, draw, white] {$L$};
        \draw (3,-1.95) node(u4) [circle, draw, white] {$L$};
        \draw (1.7,0) node(x5) [circle, draw, white] {$L$};
        \draw (4.3,0) node(x6) [circle, draw, white] {$L$};
        \draw[-arcsq, dashed] (z) -- (x);
        \draw[-arcsq] (x3) -- (v1);
        \draw[-arcsq, dashed] (x4) -- (v1);
        \draw[-arcsq, dashed] (wp) -- (w);
        \draw[-arcsq, dashed] (v1) -- (v2);
        \draw[-arcsq, dashed] (v2) -- (v3);
        \draw[-arcsq, dashed] (v4) -- (vm);
        \draw[-arcsq, dashed] (vm) -- (z);
        \draw[-arcsq, dashed] (x5) -- (u1);
        \draw[-arcsq] (x6) -- (u1);
        \draw[-arcsq, dashed] (u1) -- (u2);
        \draw[-arcsq, dashed] (u2) -- (u3);
        \draw[-arcsq, dashed] (u4) -- (us);
        \draw[-arcsq, dashed] (us) -- (wp);
	\draw[dotted] (-2.2,0) -- (-1.3,0);
	\draw[dotted] (1.2,0) -- (2.0,0);
	\draw[dotted] (4.3,0) -- (5.2,0);
      \end{tikzpicture}
    \end{center}
  \caption{Assume the path $L \cdots V_1 \cdots U_1 \cdots W$ is blocked by $\B{\tilde S}$, but unblocked if we include $Z$ and $W'$. Then the dashed path is unblocked given $\B{\tilde S}$. \label{fig:indeed}}
\end{figure}
\item[2.] Therefore, the $\Gp$-youngest node in $\B{T}$ must be some $Y \in \B{Y}$.\\
Define $\B{Q}:=\PA[\G]{L} \setminus \{Y\}$, $\B{R}:= \PA[\Gp]{Y} \setminus \{L\}$ and
$\B{S}:=\B{Q} \cup \B{R}$. 
Clearly, $\B{S} \subseteq \ND[\G]{L} \setminus \{Y\}$ since $L$ does not have any descendants in $\G$. Further, $\B{S} \subseteq \ND[\Gp]{Y} \setminus \{L\}$ because $Y$ is the $\Gp$-youngest under all $\B{W}$ and $\B{Y}\setminus\{Y\}$ by construction and any directed path from $Y$ to $Z \in \B{Z}$ would introduce a cycle in $\Gp$. Ergo, $\{Y\} \cup \B{S} \subseteq \ND[\G]{L}$ and $\{L\} \cup \B{S} \subseteq \ND[\Gp]{Y}$. 
\end{itemize}
The variables $L$ and $Y$ and the sets $\B{Q}, \B{R}$ and $\B{S}$ satisfy the conditions required in statement (i) of Proposition~\ref{prop:graph}. 

Statement (ii) follows as a special case since for Markov equivalent graphs, $\B{W}, \B{D}$ and $\B{E}$ are all empty. Consider the $\Gp$-youngest node $Y$. In order to avoid $v$-structures appearing in $\G$ and not in $\Gp$ all nodes $Z \in \B{Z}$ are directly connected to the $\Gp$-youngest $Y$. And to avoid cycles, those nodes $Z \in \B{Z}$ are $\Gp$-parents of $Y$.
The node $Y$ cannot have other parents except for the ones in $\B{Y}$ and $\B{Z}$ since this would introduce $v$-structures in $\Gp$ (with collider $Y$) that do not appear in $\G$.
\end{proof}

\subsection{Proof of Corollary~\ref{cor:new}} \label{app:proofcornew}
\begin{proof}
We only prove (i) since (ii) is a special case.
Causal minimality is satisfied because of Proposition~\ref{prop:nonconst}.
We can then apply exactly the same proof as in Theorem~\ref{thm:mul}. This yields the two equations
\begin{align*}
L^* &= f_{L}(\B{q},Y^*) + N_L, \qquad N_L \independent Y^* \qquad \text{and}\\
Y^* &= g_{Y}(\B{r},L^*) +  N_{Y}, \qquad N_{Y} \independent L^*
\end{align*}
Since $N_L$ is Gaussian, Proposition~\ref{prop:kun} would imply that $f_L(\B{q},\cdot)$ is linear. This contradicts the assumption of nonlinearity.
It therefore remains to show that Proposition~\ref{prop:kun} is applicable. Let us define $f := f_L(\B{q},\cdot)$ and $g := g_Y(\B{r},\cdot)$ and suppose that $f'(n_y)=0$. As in the proof of Theorem~1 in \citep[][below equation~(7)]{Zhang2009}, we can conclude that for all $\ell^*$
$$
-\frac{1}{\sigma_Y^2} g'(\ell^*) + \frac{\ell^*}{\sigma_L^2} f''(n_y) g'(\ell^*) = 0
$$
which implies $g'(\ell^*)=0$ for all $\ell^*$. This contradicts the nonlinearity assumption of $g$.
\end{proof}

\subsection{Proof of Lemma~\ref{lem:nocm}} \label{app:prooftoporder}
\begin{proof}
For (a) we can change the corresponding structural equation $X_j = f_j(\PA[\G_0]{j}) + N_j$ into $X_j = \tilde f_j(\PA[\G_0]{j}, X_i) + N_j$ where $\tilde f_j$ equals $f_j$ in the first $\# \PA[\G_0]{j}$ components and $\tilde f_j$ is constant in the last component. 

We now prove statement (b). By Proposition~\ref{prop:nonconst}, $\G_0$ contains an edge $i \rightarrow j$ such that 
(with $X_A := \PA[]{j} \setminus \{X_i\}$)
the function $f_j$ in $X_j = f_j(X_A, X_i) + N_j$ is constant in its argument of $X_i$, that is $f_j(x_A, x_i) = c_{x_A}$ for all $x_i$.
We can then construct a new additive noise model by defining $X_j = \tilde f_j(X_A) + N_j$ where $\tilde f_j(x_A) = c_{x_A}$ for all $x_A$ and keeping all other equations as well as the noise variables the same. By the Markov factorization we obtain the same joint distribution. Iterating this procedure proves the lemma: hereby, the intersection property \citep[1.1.5 in ][]{Pearl2009} ensures that if two edges $i_1 \rightarrow j$ and $i_2 \rightarrow j$ can be removed one after each other, the order does not matter (which is not necessarily true for densities that are not strictly positive). The last argument proves uniqueness of $\G_0^{min}$.
\end{proof}

\subsection{Proof of Corollary~\ref{cor:toporder}} \label{app:prooftoporder2}
\begin{proof}
For each permutation $\pi$ we only consider ANMs for which the ANM constructed according to the minimal graph $\G_{\pi}^{\text{full, min}}$ (Lemma~\ref{lem:nocm},b) are restricted ANMs.
According to Lemma~\ref{lem:nocm} (a), we can find such ANMs for all graphs $\G_{\pi}^{\text{full}}$ with $\pi \in \Pi^0$. If $\pi \notin \Pi^0$,  Theorem~\ref{thm:mul} implies that $\G_{\pi}^{\text{full, min}} = \G_{0}^{\text{min}}$ which contradicts $\pi \notin \Pi^0$.
\end{proof}

\subsection{Proof of Theorem~\ref{thm:algo}} \label{app:proofalgo}
\begin{proof}
For the correct graph, we know that $N_i$ is independent of all ancestor variables $X_j$ since the latter can be expressed in terms of noise variables without $N_i$. The correct sink nodes therefore lead to independence in step $7$ of Algorithm~\ref{alg:icml}. We will now show that ``wrong sinks'', that is nodes who are not sinks in the correct graph $\G_0$ do not lead to independent residuals in the first iteration of Phase 1. It follows by induction that this is true for any later iteration, too.
Suppose that node $Y$ is not a sink in $\G_0$ but leads to independent residuals (step $7$). Since $Y$ is not a sink in $\G_0$, $Y$ has children in $\G_0$. Call $Z$ the $\G_0$-youngest child, that is there is no directed path from $Z$ to any other child of $Y$. Disregard all descendants of $Z$ and denote the remaining set of variables $\B{S}:= \B{X} \setminus \{Y,Z,\DE[\G_0]{Z}\}$. It therefore follows that 
\begin{equation} \label{eq:helppp}
\DE[\G_0]{Z} \independent Y \given \B{S} \cup \{Z\}\,.
\end{equation} 
Because $Y$ leads to independent residuals we can think of a graph $\G$ in which all variables are parents of $Y$. From Equation~\eqref{eq:helppp} it follows that
$Y = g_Y(\B{S}, Z) + \tilde N_Y$ with $\tilde N_Y \independent (\B{S},Z)$. 
We then proceed similarly as in the proof of Theorem~\ref{thm:mul} and find from $\G_0$ that
$$
Z\given_{\B{S} = \B{s}} = f_Z(\B{s}, Y\given_{\B{S} = \B{s}}) + N_Z\,.
$$
From $\G$ we conclude that
$$
Y\given_{\B{S} = \B{s}} = g_Y(\B{s}, Z\given_{\B{S} = \B{s}}) + \tilde N_Y\,.
$$
Again, this contradicts Theorem~\ref{thm:biv}. 
The correctness of Phase 2 follows from causal minimality and Lemma~\ref{lem:cmc}.
\end{proof}



\bibliography{bibliography}
\end{document}